\definecolor{mydarkred}{rgb}{0.6,0,0}
\definecolor{mydarkgreen}{rgb}{0,0.6,0}
\newtheorem{theorem}{Theorem}
\newtheorem{definition}{Definition}
\definecolor{cvprblue}{rgb}{0.21,0.49,0.74}
\title{Investigating and Mitigating the Side Effects of Noisy Views for Self-Supervised Clustering Algorithms in Practical Multi-View Scenarios}
\author{
Jie Xu$^1$,
Yazhou Ren$^{1,2}$,
Xiaolong Wang$^1$,
Lei Feng$^3$,
Zheng Zhang$^4$,
Gang Niu$^5$,
Xiaofeng Zhu$^{1,2,*}$
\\
{\small $^1$University of Electronic Science and Technology of China, Chengdu, China; $^2$Shenzhen Institute for Advanced Study, University} \\
{\small of Electronic Science and Technology of China, Shenzhen, China; $^3$Singapore University of Technology and Design, Singapore;} \\
{\small $^4$Harbin Institute of Technology, Shenzhen, China; $^5$RIKEN Center for Advanced Intelligence Project, Tokyo, Japan}
}
\begin{document}
\maketitle

\let\thefootnote\relax\footnotetext{$^{*}$Corresponding Author.}

\begin{abstract}
Multi-view clustering (MVC) aims at exploring category structures among multi-view data in self-supervised manners. Multiple views provide more information than single views and thus existing MVC methods can achieve satisfactory performance. However, their performance might seriously degenerate when the views are noisy in practical multi-view scenarios. In this paper, we formally investigate the drawback of noisy views and then propose a theoretically grounded deep MVC method (namely MVCAN) to address this issue. Specifically, we propose a novel MVC objective that enables un-shared parameters and inconsistent clustering predictions across multiple views to reduce the side effects of noisy views. Furthermore, a two-level multi-view iterative optimization is designed to generate robust learning targets for refining individual views' representation learning. Theoretical analysis reveals that MVCAN works by achieving the multi-view consistency, complementarity, and noise robustness. Finally, experiments on extensive public datasets demonstrate that MVCAN outperforms state-of-the-art methods and is robust against the existence of noisy views.
\end{abstract}

\section{Introduction}\label{sec:intro}
Recently, real-world applications generate increasing multi-view data where one sample is described from multiple views, multiple modalities, or multiple groups of features.
To handle such multi-view data, multi-view clustering (MVC) is an effective self-supervised clustering approach and has been applied in many fields (\eg, industry \cite{vazquez2020multigraph}, internet \cite{he2014comment}, and medicine \cite{rappoport2018multi,li2023scbridge}), which can recognize the category structures and patterns without label supervision.
In addition to traditional MVC~\cite{wang2022highly,wen2023highly}, deep learning based MVC is usually built on self-supervised methods like contrastive learning \cite{oord2018representation} and self-training \cite{nigam2000analyzing}, which has been attracting researchers' attention in recent years \cite{zhang2020deep,ren2022deep,tang2022deep,lin2021completer,9839616,dong2023cross} and we conduct a review for related work in Appendix A.

The success of existing MVC methods lies in that they are able to explore the \emph{consistency} and \emph{complementarity} among multi-view data \cite{tang2022deep,liu2018late,zhang2018generalized}, thereby outperforming single-view clustering (SVC) methods \cite{macqueen1967some,ng2002spectral,xie2016unsupervised,peng2022xai}. The consistency indicates that multiple views have the consistent information which is helpful for recognizing the same category \cite{zhan2018multiview,tzortzis2012kernel,cao2015diversity}.
For example, multiple views with the consistent category information can enhance the recognition of the category semantics, thereby eliminating the interference of non-semantic information.
The complementarity means that different views contain the complementary information which is conducive to reciprocally correcting and supplementing each other \cite{tang2022unified,xu2022deep,yang2022robust,liu2023contrastive}.
In other words, the combination of multiple views can help discover category structures that cannot be discovered by individual views.
However, a challenge is that consistency and complementarity of multiple views are still abstract concepts.
To conceptually explore them, previous methods usually leverage different views to supervise each other for learning their common representations, and build consistent clustering predictions for all views' agreement.
For instance, some methods conduct contrastive learning among multiple views for achieving consistency of representations/predictions \cite{jin2023deep,chen2023deep,yan2023gcfagg,lu2023decoupled}.
Some methods integrate multiple views' representations to explore their complementarity and generate a unified cluster partition for optimization as self-training manners \cite{xie2020joint,wen2020dimc,wang2021generative,9839616}.

Despite important advances, experiments reveal that MVC is not necessarily superior to SVC in some practical multi-view scenarios (see Sec.~\ref{SOTA}).
This is because features extracted from some views might be noise, which could be not only useless but even detrimental for clustering.
For example, we consider a situation of observing animals at night, where the view captured by infrared cameras is informative but the view from optical cameras is noisy.
Contrary to informative views, noisy views can play a negative role in recognizing their common category
such that many MVC methods exhibit decreased performance compared to a SVC method that is performed on the optimal single view.
This practical dilemma could affect the effectiveness of MVC and this paper shortly entitles it Noisy-View Drawback (NVD).
We find two reasons that the NVD negatively affects the performance of existing MVC methods in practical scenarios:
I) To obtain fused representations, many methods have to leverage additional neural networks shared by all views \cite{wang2016iterative,zhang2017latent,zhou2019dual,wang2019multi,liu2021one}. However, the clustering objective punished on the noisy view might be dominant that on other informative views, causing the shared parameters in that neural networks to fit the noisy view and thus missing the useful information of other views.
II) For multi-view data, obtaining consistent clustering predictions for all views is a consensus in previous methods \cite{nie2016parameter,wang2018multiview,Zhan8052206,zhou2020end,tang2022deep,xu2022deep}. Nevertheless, it is suboptimal to force the clustering prediction of the noisy view to be the same as that of other views, inversely, this process might make the representation learning and clustering on the informative views degenerate.

In this paper, we consider the NVD and propose a theoretically grounded deep MVC method termed MVCAN: \emph{\underline{M}ulti-\underline{V}iew \underline{C}lustering \underline{A}gainst \underline{N}oisy-view drawback}.
Firstly, based on the aforementioned two reasons, the proposed clustering objective I) requires that the parameters in neural networks are un-shared for individual views and II) optimizes a subproblem that allows inconsistent clustering predictions among different views' soft labels.
Hence, MVCAN designs parameter-decoupled deep models of learning representations and soft labels for different views, aiming to avoid the side effects of noisy views.
Secondly, MVCAN establishes a two-level multi-view iterative optimization for training the parameter-decoupled models.
To be exact, $\mathcal{T}$-level leverages the representations and soft labels to optimize a robust learning target, which makes MVCAN able to explore the useful information among informative views and be robust to noisy views.
$\mathcal{R}$-level automatically matches the learning target with soft labels to optimize the representations of individual views.
Finally, we conduct extensive comparison and ablation experiments to demonstrate the effectiveness of our method.
In summary, the contributions of this work include:
\begin{itemize}
    \item The NVD is pervasive but challenging for MVC, which motivates us to research the robustness towards noisy views.
    To eliminate the two reasons that noisy views hinder clustering effectiveness, we propose a novel clustering objective constrained with two specific conditions.
    \item To effectively train a parameter-decoupled model for each view, we propose a two-level multi-view iterative optimization strategy.
    Extensive experiments on public datasets demonstrate that our MVCAN outperforms state-of-the-art methods and is robust against the existence of noisy views.
    \item In the literature, almost no work theoretically describes the consistency and complementarity of multi-view learning. This paper attempts to theoretically investigate the consistency and complementarity relations among multiple views, and explain the achieved noise robustness.
\end{itemize}

\section{Background and Analysis}\label{Method}

\textbf{Notations}.~We denote $\{\mathbf{X}^v \in \mathbb{R}^{N \times D_v}\}_{v=1}^V$ as a multi-view dataset which contains $N$ samples with $V$ views. $\mathbf{Z}^v \in \mathbb{R}^{N \times d_v}$ and $\mathbf{Y}^v \in \mathbb{R}^{N \times K}$ are the learned representations and soft labels for data in the $v$-th view. $D_v$ and $d_v$ denote the dimensionality of $\mathbf{X}^v$ and $\mathbf{Z}^v$, respectively. $K$ is the cluster number. More notation details are shown in Appendix A.

\subsection{Preliminaries}\label{Analysis}
Deep embedded clustering (DEC \cite{xie2016unsupervised}) is a self-supervised SVC method providing an effective optimization paradigm to promote learning representations and clustering.
Specifically, DEC learns representations $\mathbf{Z}$ from data matrix $\mathbf{X}$ of a single view, and conducts end-to-end clustering by learning soft labels $\mathbf{Y}$ with trainable cluster centroids $\{\bm{\upmu}_j\}_{j=1}^K$ in the representation space of $\mathbf{Z}$.
DEC formulates $\mathbf{Y}$ as follows:
\begin{equation}\label{q}
    y_{ij} = \frac{(1+\lVert \mathbf{z}_i-\bm{\upmu}_j\rVert^2_2)^{-1}}
    {\sum_{j=1}^K(1+\lVert \mathbf{z}_i-\bm{\upmu}_{j}\rVert^2_2)^{-1}} \in \mathbf{Y},
\end{equation}
where $\mathbf{z}_i = \mathcal{E}_{\mathbf{\Phi}}(\mathbf{x}_i) \in \mathbf{Z}$ is the new representation of the $i$-th sample $\mathbf{x}_i \in \mathbf{X}$, obtained by the deep encoder network $\mathcal{E}_{\mathbf{\Phi}}$ with the parameters $\mathbf{\Phi}$.
$(1+\lVert \mathbf{z}_i-\bm{\upmu}_j\rVert^2_2)^{-1}$ can be interpreted as the representation similarity in our Definition \ref{d1}.
We have $\sum_j y_{ij} = 1$ and $y_{ij}$ represents the probabilistic soft label indicating that the sample $\mathbf{x}_i$ comes from the $j$-th cluster. Then, DEC establishes the learning target $\mathbf{T} \in \mathbb{R}^{N \times K}$ to refine $\mathbf{Y}$ and $\mathbf{Z}$ by training the model parameters, where
\begin{equation}\label{p}     
    t_{ij}=\frac{{(y_{ij})^2}/{\sum_{i=1}^N y_{ij}}}
    {\sum_{j=1}^K\left({(y_{ij})^2}/{\sum_{i=1}^N y_{ij}}\right)} \in \mathbf{T}.
\end{equation}
Indeed, Eq.~(\ref{p}) enhances the elements of large values in the soft labels $\mathbf{Y}$ for each sample. As a result, this self-training paradigm establishes the learning target $\mathbf{T}$ to push the soft labels $\mathbf{Y}$ to learn the cluster structures with high confidence.

\subsection{Analysis of Noisy-View Drawback (NVD)}\label{AnalysisNVP}
The aforementioned learning paradigm inspires a lot of developments and is one of the most widely used approaches to conduct deep MVC \cite{wen2020dimc,wang2021generative,xie2020joint,xu2022deep,huang2023self}.
For MVC, previous methods usually learn the representations $\mathbf{Z}^v$ and soft labels $\mathbf{Y}^v$ for individual views, and then leverage the fusion strategies to explore useful information hidden in multiple views, \eg, early fusion \cite{wen2020dimc,wang2021generative} and late fusion \cite{xie2020joint}. They also construct the learning target $\mathbf{T}$ with Eq.~(\ref{p}) to train models.

Although some efforts \cite{ye2018multi,wen2020dimc,trostenMVC,wang2021generative} consider the view diversity and propose weighting strategies in fusion modules,
previous methods usually require shared network parameters and consistent clustering predictions for multiple views, whose models might be not robust when meeting low-quality even noisy views in practical scenarios (will be verified in Sec.~\ref{SOTA}).
To illustrate this, we denote $\{\mathbf{Z}^v\}_{v=1}^V$ as all views' representations and consider an ideal clustering objective:
\begin{equation}\label{c}
\begin{aligned}
\mathop{\min_{\mathbf{\Theta}}}{\sum\nolimits_{v=1}^V \| \mathbf{T} - \mathcal{F}_{\mathbf{\Theta}}(\mathbf{Y}^v|\{\mathbf{Z}^v\}_{v=1}^V) \|_F^2},
\end{aligned}
\end{equation}
where we write $\mathbf{Y}^v = \mathcal{F}_{\mathbf{\Theta}}(\mathbf{Y}^v|\{\mathbf{Z}^v\}_{v=1}^V)$ through the fusion module $\mathcal{F}$, and $\mathbf{\Theta}$ denotes the set of parameters shared by all $V$ views. $\mathbf{T}$ is the unified learning target for training the consistent soft labels $\{\mathbf{Y}^v\}_{v=1}^V$ of all views. With the ground-truth label matrix $\mathbf{L} \in \{0,1\}^{N \times K}$, we further have the following theorem to indicate the relationship between clustering effectiveness and clustering objectives of views:
\begin{theorem}\label{the:theorm1}
Denoting $\mathbf{\check{Y}} = \mathbf{L} \mathbf{A}$, where $\mathbf{A} \in \{0,1\}^{K\times K}$ makes $\mathbf{\check{Y}}$ maximally match the learning target $\mathbf{T}$. Then, the clustering accuracy can be calculated as $ACC = \frac{1}{N} \left(N - \frac{1}{2}\| \mathbf{\check{Y}} - \mathbf{T} \|_F^2\right) = 1 - \frac{1}{2N}\| \mathbf{\check{Y}} - \mathbf{T} \|_F^2$. In Eq.~(\ref{c}), if $\mathbf{\Theta}$ is shared by multiple views and their soft labels $\{\mathbf{Y}^v\}_{v=1}^V$ have consistent learning target $\mathbf{T}$, we have
\begin{equation}\label{acc}
\small
\begin{aligned}
ACC \leq 1 - \frac{1}{2N} \bigg(\mathop{\max_{1\leq m \leq V}} & \| \mathbf{\check{Y}} - \mathcal{F}_{\mathbf{\Theta}}(\mathbf{Y}^m|\{\mathbf{Z}^v\}_{v=1}^V) \|_F^2 \\
- &\| \mathbf{T} - \mathcal{F}_{\mathbf{\Theta}}(\mathbf{Y}^v|\{\mathbf{Z}^v\}_{v=1}^V) \|_F^2 \bigg).
\end{aligned}
\end{equation}
\end{theorem}
\noindent To be specific, we denote $m^* = \arg \max_{1\leq m \leq V}\| \mathbf{\check{Y}} - \mathcal{F}_{\mathbf{\Theta}}(\mathbf{Y}^m|\{\mathbf{Z}^v\}_{v=1}^V) \|_F^2$ , and $\|\mathbf{\check{Y}} - \mathcal{F}_{\mathbf{\Theta}}(\mathbf{Y}^{m^*}|\{\mathbf{Z}^v\}_{v=1}^V) \|_F^2$ could reflect the largest clustering loss $\| \mathbf{T} - \mathcal{F}_{\mathbf{\Theta}}(\mathbf{Y}^{m^*}|\{\mathbf{Z}^v\}_{v=1}^V) \|_F^2$, which corresponds to the view with the worst quality or the most noisy view.
No matter how the set of parameters $\mathbf{\Theta}$ is optimized, for the $m^*$-th view, the unclear cluster structures and inherent noise properties of $\mathbf{Z}^{m^*}$ make it difficult for $\mathbf{Y}^{m^*}$ to fit the learning target $\mathbf{T}$.
Therefore, the noisy view has a large clustering loss that is difficult to minimize, \ie, $\| \mathbf{T} - \mathcal{F}_{\mathbf{\Theta}}(\mathbf{Y}^{m^*}|\{\mathbf{Z}^v\}_{v=1}^V) \|_F^2$, which usually dominates the optimization of other views in Eq.~(\ref{c}). This makes the shared parameters $\mathbf{\Theta}$ tend to fit the noisy view, resulting the model degeneration on other views which have the small clustering losses that are easy to minimize, \eg, $\sum_{v\neq m^*}\| \mathbf{T} - \mathcal{F}_{\mathbf{\Theta}}(\mathbf{Y}^v|\{\mathbf{Z}^v\}_{v=1}^V) \|_F^2$. As a consequence, the noisy view will limit the clustering effectiveness due to the upper bound in Eq.~(\ref{acc}). The detailed proof and example analysis are provided in Appendix B.

\section{Methodology}\label{MMMMMMMM}
To mitigate the side effects of noisy views, we propose Multi-View Clustering Against Noisy-View Drawback (MVCAN), whose frame diagram is given in Appendix A due to space.
\subsection{Clustering Objective Against NVD}
Based on Theorem~\ref{the:theorm1}, we consider two conditions to constrain the multi-view clustering objective for MVCAN. The first condition is that we require the network parameters to be decoupled for all views instead of using shared modules when generating $\{\mathbf{Z}^v, \mathbf{Y}^v\}_{v=1}^V$, and the second condition is that we allow different views to have different clustering predictions instead of consistent ones during training stages.

Accordingly, we modify the optimization objective in Eq.~(\ref{c}) and propose a novel multi-view clustering objective:
\begin{equation}\label{cccc}
\begin{aligned}
&\mathop{\min_{\{\mathbf{\Theta}^v\}_{v=1}^V}}\mathop{\min_{\{\mathbf{A}^v\}_{v=1}^V}}{\sum\nolimits_{v=1}^V \| \mathbf{T} \mathbf{A}^v - \mathcal{F}^v_{\mathbf{\Theta}^v}(\mathbf{Y}^v|\mathbf{Z}^v) \|_F^2}\\
&s.t.~\mathbf{\Theta}^a \cap \mathbf{\Theta}^b = \varnothing, a,b \in \{1,2,\dots,V\}, a \neq b,\\
&~~~~~~~\mathbf{A}^v (\mathbf{A}^v)^T = \mathbf{I}_K, \mathbf{A}^v\in \{0,1\}^{K\times K},\\
\end{aligned}
\end{equation}
where we write $\mathbf{Y}^v = \mathcal{F}^v_{\mathbf{\Theta}^v}(\mathbf{Y}^v|\mathbf{Z}^v)$ whose calculation follows Eq.~(\ref{q}). $\mathbf{Z}^v = \mathcal{E}^v_{\mathbf{\Phi}^v}(\mathbf{X}^v)$ and $\mathcal{E}^v_{\mathbf{\Phi}^v}$ denotes the encoder of individual view. Moreover, we specifically illustrate the two conditions in Eq.~(\ref{cccc}) as follows (their effectiveness will be verified by ablation experiments presented in Sec.~\ref{ABs}):

\textbf{Condition 1}:{
In this framework, the set of parameters $\mathbf{\Theta}^v$ includes $\{\bm{\upmu}_j^v\}_{j=1}^K$ and $\mathbf{\Phi}^v$ of the $v$-th view. We leverage $\mathbf{\Theta}^a \cap \mathbf{\Theta}^b = \varnothing, a,b \in \{1,2,\dots,V\}, a \neq b$ to indicate that $\{\mathbf{\Theta}^v\}_{v=1}^V$ are un-shared for each other, so as to avoid the limitations caused by the NVD as analyzed in Sec.~\ref{AnalysisNVP}.
This condition designs the parameter-decoupled models of learning representations $\{\mathbf{Z}^v\}_{v=1}^V$ and clustering predictions $\{\mathbf{Y}^v\}_{v=1}^V$ for individual views, aiming to eliminate the dominated influence of noisy views on other informative views.
}

\textbf{Condition 2}:{
Before updating the parameters $\{\mathbf{\Theta}^v\}_{v=1}^V$, we solve a subproblem in the multi-view clustering objective, that is, $\mathop{\min_{\{\mathbf{A}^v\}_{v=1}^V}}{\sum_{v=1}^V \| \mathbf{T} \mathbf{A}^v - \mathcal{F}^v_{\mathbf{\Theta}^v}(\mathbf{Y}^v|\mathbf{Z}^v) \|_F^2}$, which leads to $\sum_{v=1}^V \| \mathbf{T} \mathbf{A}^v - \mathcal{F}^v_{\mathbf{\Theta}^v}(\mathbf{Y}^v|\mathbf{Z}^v) \|_F^2 \leq \sum_{v=1}^V \| \mathbf{T} - \mathcal{F}^v_{\mathbf{\Theta}^v}(\mathbf{Y}^v|\mathbf{Z}^v) \|_F^2$. For each view, this subproblem is equivalent to $\mathop{\min_{\mathbf{A}^v}}{\| \mathbf{T} \mathbf{A}^v - \mathbf{Y}^v \|_F^2}$ in which $\mathbf{A}^v\in \{0,1\}^{K\times K}$ achieves the maximum match between the learning target $\mathbf{T}$ and the soft labels $\mathbf{Y}^v$. For each view, $\mathbf{T}$ is adjusted to correspond with $\mathbf{Y}^v$ by $\mathbf{A}^v$ and we can treat this process as to obtain a different learning target $\mathbf{T}^v = \mathbf{T} \mathbf{A}^v$.
This condition makes the clustering loss smaller, as well as considers that it does not make sense to learn consistent clustering predictions for both informative and noisy views.
}

\subsection{Two-Level Multi-View Iterative Optimization}

In brief, to overcome the NVD, MVCAN does not adopt previous strategies that multiple views need shared network parameters and consistent clustering predictions, but how can Eq.~(\ref{cccc}) explore the useful consistent and complementary information from multiple views?
To this end, we propose a two-level multi-view iterative optimization framework for effectively training the parameter-decoupled models.

\textbf{$\mathcal{T}$-level iteration}.~Firstly, we propose a $\mathcal{T}$-level iteration to generate the robust learning target $\mathbf{T}$ for Eq.~(\ref{cccc}).
$\mathcal{T}$-level iteration will not change the parameters $\{\mathbf{\Theta}^v, \mathbf{A}^v\}_{v=1}^V$, making the models still satisfy the parameter decoupling.
For each iteration in the $\mathcal{T}$-level, we design the scaling matrix $\mathbf{W}_{(t)}$ to automatically explore the informative levels of views for obtaining the scaled representation $\mathbf{Z}_{(t)}$, and then produce the robust soft labels $\mathbf{Y}_{(t)}$ for obtaining $\mathbf{T}$ (the theoretical analysis in Sec.~\ref{ccn} will demonstrate that $\mathbf{Y}_{(t)}$ achieve the consistency and complementarity across multiple views, as well as the noise robustness for the noisy views).

Concretely, in the $t$-th iteration of $\mathcal{T}$-level, MVCAN infers the scaled representations $\mathbf{Z}_{(t)} \in \mathbb{R}^{N \times \sum_v{d_v}}$ from all views, by the multiplication between the already scaled/normalized representations $\begin{bmatrix}
\mathbf{Z}^1 & \mathbf{Z}^2 & \dots & \mathbf{Z}^V
\end{bmatrix} \in \mathbb{R}^{N \times \sum_v{d_v}}$
and the scaling matrix $\mathbf{W}_{(t)} \in \mathbb{R}^{\sum_v{d_v} \times \sum_v{d_v}}$:
\begin{equation}\label{z}
\small
\begin{aligned}
\mathbf{Z}_{(t)} & = \mathcal{H}(\mathbf{Z}_{(t)}|\mathbf{W}_{(t)}, \mathbf{Z}^1, \mathbf{Z}^1, \dots, \mathbf{Z}^V) \\
&= 
\begin{bmatrix}
\mathbf{Z}^1 & \mathbf{Z}^2 & \dots & \mathbf{Z}^V
\end{bmatrix} \mathbf{W}_{(t)} \\
&=
\begin{bmatrix}
\mathbf{Z}^1 & \mathbf{Z}^2 & \dots & \mathbf{Z}^V
\end{bmatrix}
\begin{bmatrix}
w^1_{(t)}\mathbf{I}^{1} &                   &        &  \\
                  & w^2_{(t)}\mathbf{I}^{2} &        & \\
                  &                   & \ddots &  \\
                  &                   &        & w^V_{(t)}\mathbf{I}^{V} \\
\end{bmatrix}
,
\end{aligned}
\end{equation}
where $\mathbf{W}_{(t)}$ is a block diagonal matrix ($\mathbf{W}_{(1)} = \mathbf{I}$), of which each block is the multiplication between the unit matrix $\mathbf{I}^{v} \in \{0,1\}^{d_v \times d_v}$ and the scaling factor $w^v_{(t)} \in \mathbb{R}$ for the individual view.
Based on the scaled representations $\mathbf{Z}_{(t)}$, MVCAN generates the robust soft labels $\mathbf{Y}_{(t)} \in \mathbb{R}^{N \times K}$ in the $t$-th iteration.
To be specific, $\mathbf{Y}_{(t)}$ should reflect the cluster structures among $\mathbf{Z}_{(t)}$, and thus we leverage a variant of Eq.~(\ref{q}) to compute $\mathbf{Y}_{(t)}$. Specifically, $\mathbf{z}_{i(t)} \in \mathbf{Z}_{(t)}$ and we formulate $\mathbf{Y}_{(t)} = \mathcal{F}'(\mathbf{Y}_{(t)}|\mathbf{Z}_{(t)})$ as follows:
\begin{equation}\label{qq}
    y_{ij(t)} = \frac{(1+\lVert \mathbf{z}_{i(t)}-\mathbf{c}_{j(t)}\rVert_2^2)^{-1}}
    {\sum_{j=1}^K(1+\lVert \mathbf{z}_{i(t)}-\mathbf{c}_{j(t)}\rVert_2^2)^{-1}} \in \mathbf{Y}_{(t)},
\end{equation}
where $\{\mathbf{c}_{j(t)} \in \mathbb{R}^{\sum_v d_v}\}_{j=1}^K$ represent the cluster centroids of $\mathbf{Z}_{(t)}$ in the $t$-th iteration.
Note that $\{\mathbf{c}_{j(t)}\}_{j=1}^K$ are computed by $K$-means \cite{macqueen1967some} from the scratch in each iteration, it will not change the parameters $\{\mathbf{\Theta}^v, \mathbf{A}^v\}_{v=1}^V$.
Furthermore, denoting $I$ and $H$ as mutual information and entropy, respectively, we base on the normalized mutual information between the robust soft labels $\mathbf{Y}_{(t)}$ and the soft labels $\mathbf{Y}^v$ of individual view, and denote the iterative strategy of the scaling matrix as $\mathbf{W}_{(t+1)} = \mathcal{G}(\mathbf{W}_{(t+1)}|\mathbf{Y}_{(t)}, \mathbf{Y}^1, \mathbf{Y}^1, \dots, \mathbf{Y}^V)$, in which we compute $w^v_{(t+1)}$ for each view by
\begin{equation}\label{www}
\begin{aligned}
    w^v_{(t+1)} = \exp{\left(\frac{2 I(\mathbf{Y}^v;\mathbf{Y}_{(t)})}{H(\mathbf{Y}^v) + H(\mathbf{Y}_{(t)})}\right)} \in \mathbf{W}_{(t+1)}.
\end{aligned}
\end{equation}
To effectively calculate Eq.~(\ref{www}), we first transform $\mathbf{Y}^v$ and $\mathbf{Y}_{(t)}$ into one-dimensional label vectors $\hat{\mathbf{y}}^v$ and $\hat{\mathbf{y}}_{(t)}$, respectively, where $\hat{y}^v_i = \arg\max_{j} y^v_{ij}$, $\hat{y}_{i(t)} = \arg\max_{j} y_{ij(t)}$, and then calculate the normalized mutual information between $\hat{\mathbf{y}}^v$ and $\hat{\mathbf{y}}_{(t)}$.
Since the computations of $\mathbf{Y}_{(t)}$ and $\{\mathbf{Y}^v\}_{v=1}^V$ are all un-/self-supervised, in effect, MVCAN can automatically recognize the informative levels of different views based on the mutual information among the soft labels, and then generate different scaling factors in $\mathbf{W}_{(t+1)}$ to constrain the representations of all views for next iterations.

After finishing the iteration of the robust soft labels $\mathbf{Y}_{(t)}$, we utilize Eq.~(\ref{p}) to obtain the robust learning target, written as $\mathbf{T} = \mathcal{T}(\mathbf{T}|\mathbf{Y}_{(t)})$. Hence, the robust learning target $\mathbf{T}$ is based on the already learned representations and soft labels, \ie, $\{\mathbf{Z}^v, \mathbf{Y}^v\}_{v=1}^V$. The $\mathcal{T}$-level iteration process outputs $\mathbf{T}$ which is further leveraged to refine $\{\mathbf{Z}^v, \mathbf{Y}^v\}_{v=1}^V$ for all views by the multi-view clustering objective in Eq.~(\ref{cccc}).

\textbf{$\mathcal{R}$-level iteration}.~$\mathcal{R}$-level iteration focuses on training the parameters $\{\mathbf{\Theta}^v, \mathbf{A}^v\}_{v=1}^V$ for individual views by optimizing Eq.~(\ref{cccc}).
Considering the Condition 2 of Eq.~(\ref{cccc}), we first obtain ${\mathbf{A}^v}^* = \mathop{\min_{\mathbf{A}^v}}{\| \mathbf{T} \mathbf{A}^v - \mathbf{Y}^v \|_F^2}$ with Hungarian algorithm.
For each view, ${\mathbf{A}^v}^*$ produces a different learning target $\mathbf{T}^v = \mathbf{T} {\mathbf{A}^v}^*$ and then Eq.~(\ref{cccc}) can be transformed into the following clustering objective (denoted by $\mathcal{L}_{c}^v$):
\begin{equation}
\begin{aligned}
\mathcal{L}_{c}^v: \mathop{\min_{\mathbf{\Theta}^v}}{\|\mathbf{T}^v - \mathcal{F}^v_{\mathbf{\Theta}^v}(\mathbf{Y}^v|\mathbf{Z}^v) \|_F^2}.
\end{aligned}
\end{equation}
Additionally, we follow previous deep MVC methods~\cite{xie2020joint,wen2020dimc,wang2021generative,tang2022deepi,9839616} and adopt deep autoencoders (a popular self-supervised representation learning method) to learn the new representations of multi-view data.
Letting $\mathcal{E}_{\mathbf{\Phi}^v}^v$ and $\mathcal{D}^v_{\mathbf{\Psi}^v}$ respectively denote the encoder and decoder, our method requires that the network parameters $\mathbf{\Phi}^v$ and $\mathbf{\Psi}^v$ of each view are un-shared for other views according to the Condition 1 of Eq.~(\ref{cccc}). Therefore, for the $v$-th view, the reconstruction $\Hat{\mathbf{X}}^v = \mathcal{D}^v_{\mathbf{\Psi}^v}(\mathbf{Z}^v)$ is only related to $\mathbf{Z}^v = \mathcal{E}^v_{\mathbf{\Phi}^v}(\mathbf{X}^v)$, and the representation learning objective (denoted by $\mathcal{L}_{r}^v$) is:
\begin{equation}\label{rec}
\begin{aligned}
\mathcal{L}_{r}^v: \mathop{\min_{\{\mathbf{\Psi}^v, \mathbf{\Phi}^v\}}}{\| \mathbf{X}^v - \mathcal{D}^v_{\mathbf{\Psi}^v}(\mathcal{E}_{\mathbf{\Phi}^v}^v(\mathbf{X}^v)) \|_F^2}.
\end{aligned}
\end{equation}
In $\mathcal{R}$-level iteration, the loss function to train the parameter-decoupled model of each view includes following two parts:
\begin{equation}\label{loss}
\begin{aligned}
\mathcal{L}^v = \mathcal{L}_{r}^v + \lambda \mathcal{L}_{c}^v,
\end{aligned}
\end{equation}
where $\lambda$ achieves the trade-off between $\mathcal{L}_{r}^v$ and $\mathcal{L}_{c}^v$.
Meanwhile, we have $\{\mathbf{\Theta}^a,\mathbf{\Psi}^a, \mathbf{\Phi}^a\} \cap \{\mathbf{\Theta}^b,\mathbf{\Psi}^b, \mathbf{\Phi}^b\} = \varnothing, a,b \in \{1,2,\dots,V\}, a \neq b$ which overcomes the mutual interference among different views during training their network parameters.
The $\mathcal{R}$-level iteration process refines the representations and soft labels $\{\mathbf{Z}^v, \mathbf{Y}^v\}_{v=1}^V$ which are further leveraged to obtain better learning target $\mathbf{T}$.
At last, $\mathbf{Y}_{(t)}$ outputs clustering results for all multi-view data and Algorithm~\ref{A1} concludes the training steps of MVCAN (the effectiveness of two losses and iterations will be verified in Sec.~\ref{ABs}).
\begin{algorithm}[!t]
    \caption{Training steps of MVCAN}\label{A1}
	\KwIn{Dataset $\{\mathbf{X}^v\}_{v=1}^V$, Epochs $E$, $T_1$, $T_2$, $K$, $\lambda$}
        Initialize $\{\mathbf{\Phi}^v, \mathbf{\Psi}^v\}_{v=1}^V$ by Eq.~(\ref{rec}) and initialize $\{\{\bm{\upmu}_j^v\}_{j=1}^K\}_{v=1}^{V}$ with $K$-means, $\mathbf{W}_{(1)} = \mathbf{I}$ \\
	\For{$e \in \{1, 2, \dots, E/T_2\}$}{
	    // $\mathcal{T}$\emph{-level infers $\mathbf{T}$ from all views' $\{\mathbf{Z}^v, \mathbf{Y}^v\}_{v=1}^V$.} \\
    	\For{$t \in \{1, 2, \dots, T_1\}$}{
    		Update $\mathbf{Z}_{(t)}$ by Eq.~(\ref{z})\\
    		Update $\mathbf{Y}_{(t)}$ by Eq.~(\ref{qq})\\
    		Update $\mathbf{W}_{(t+1)}$ by Eq.~(\ref{www})
    	}
    	Update $\mathbf{T} = \mathcal{T}(\mathbf{T}|\mathbf{Y}_{(t)})$ as Eq.~(\ref{p})\\
    	// $\mathcal{R}$\emph{-level learns $\{\mathbf{Z}^v, \mathbf{Y}^v\}$ for each view with $\mathbf{T}$.} \\
    	\For{$v \in \{1, 2, \dots, V\}$}{
                Compute $\mathbf{A}^v$ by $\mathop{\min_{\mathbf{A}^v}}{\| \mathbf{T} \mathbf{A}^v - \mathbf{Y}^v \|_F^2}$ in Eq.~(\ref{cccc}) with Hungarian algorithm \\
                Update $\mathbf{\Phi}^v$, $\mathbf{\Psi}^v$, and $\{\bm{\upmu}_j^v\}_{j=1}^K$ on $\mathbf{X}^v$ for $T_2$ epochs by Eq.~(\ref{loss}) with mini-batch Adam \\
	    }
	}
	\textbf{Output:} The cluster assignment of the $i$-th sample $\arg \max_j y_{ij(t)}$ where $y_{ij(t)} \in \mathbf{Y}_{(t)}$, all views' model parameters $\{\mathbf{\Phi}^v, \mathbf{\Psi}^v, \{\bm{\upmu}_j^v\}_{j=1}^K, \mathbf{A}^v\}_{v=1}^V$
\end{algorithm}

\begin{table*}[!ht]
\caption{Clustering performance gains ($\%$) of MVC methods compared with SVC method (DEC-BestV) on four normal multi-view datasets.}\label{table01}
\centering
\renewcommand\tabcolsep{15.0pt}
\resizebox{\textwidth}{!}{
\begin{threeparttable}
    \begin{tabular}{lllllllll}
    \toprule
    \multirow{2}{*}{Method} &\multicolumn{2}{c}{BDGP} &\multicolumn{2}{c}{DIGIT} &\multicolumn{2}{c}{COIL} &\multicolumn{2}{c}{Amazon} \cr
    \cmidrule(r){2-3} \cmidrule(r){4-5} \cmidrule(r){6-7} \cmidrule(r){8-9}
    &~~~~ACC &~~~~NMI &~~~~ACC &~~~~NMI &~~~~ACC &~~~~NMI &~~~~ACC &~~~~NMI \\
    \hline
    DEC-BestV \cite{xie2016unsupervised}     & 92.6      & 81.9      & 80.9      & 78.9      & 76.6      & 81.5      & 47.0      & 32.5      \\
    
    \rowcolor{gray!10}
    DEC-WorstV \cite{xie2016unsupervised}    & 45.7\textcolor{mydarkgreen}{\footnotesize-46.9}      & 29.1\textcolor{mydarkgreen}{\footnotesize-52.8}      & 54.8\textcolor{mydarkgreen}{\footnotesize-26.1}      & 64.1\textcolor{mydarkgreen}{\footnotesize-14.8}     & 73.5\textcolor{mydarkgreen}{\footnotesize-3.1}      & 77.4\textcolor{mydarkgreen}{\footnotesize-4.1}     & 37.2\textcolor{mydarkgreen}{\footnotesize-9.8}      & 27.9\textcolor{mydarkgreen}{\footnotesize-4.6}   \\
    
    DMJC \cite{xie2020joint}                 & 67.8\textcolor{mydarkgreen}{\footnotesize-24.8} & 46.5\textcolor{mydarkgreen}{\footnotesize-35.4} & 97.6\textcolor{mydarkred}{\footnotesize+16.7} & 96.2\textcolor{mydarkred}{\footnotesize+17.3} & 91.3\textcolor{mydarkred}{\footnotesize+14.7} & 93.8\textcolor{mydarkred}{\footnotesize+12.3} & 63.3\textcolor{mydarkred}{\footnotesize+16.3} & 65.3\textcolor{mydarkred}{\footnotesize+32.8} \\

    \rowcolor{gray!10}
    DIMC-net \cite{wen2020dimc}              & 97.5\textcolor{mydarkred}{\footnotesize+4.9}  & 91.1\textcolor{mydarkred}{\footnotesize+9.2}  & 90.4\textcolor{mydarkred}{\footnotesize+9.5}  & 87.3\textcolor{mydarkred}{\footnotesize+8.4}  & 98.5\textcolor{mydarkred}{\footnotesize+21.9} & 97.5\textcolor{mydarkred}{\footnotesize+16.0} & 62.5\textcolor{mydarkred}{\footnotesize+15.5} & 66.9\textcolor{mydarkred}{\footnotesize+34.4} \\

    GP-MVC \cite{wang2021generative}         & 97.6\textcolor{mydarkred}{\footnotesize+5.0}  & 93.4\textcolor{mydarkred}{\footnotesize+11.5} & 58.6\textcolor{mydarkgreen}{\footnotesize-22.3} & 69.8\textcolor{mydarkgreen}{\footnotesize-9.1}  & 86.1\textcolor{mydarkred}{\footnotesize+9.5}  & 77.5\textcolor{mydarkgreen}{\footnotesize-4.0}  & 53.9\textcolor{mydarkred}{\footnotesize+6.9}  & 57.1\textcolor{mydarkred}{\footnotesize+24.6} \\

    \rowcolor{gray!10}
    CoMVC \cite{trostenMVC}         & 80.7\textcolor{mydarkgreen}{\footnotesize-11.9}  & 67.4\textcolor{mydarkgreen}{\footnotesize-14.5} & 98.5\textcolor{mydarkred}{\footnotesize+17.6} & 97.4\textcolor{mydarkred}{\footnotesize+18.5}  & 98.1\textcolor{mydarkred}{\footnotesize+21.5}  & 97.8\textcolor{mydarkred}{\footnotesize+16.3}  & 68.1\textcolor{mydarkred}{\footnotesize+21.1}  & 60.6\textcolor{mydarkred}{\footnotesize+28.1} \\
    
    DIMVC \cite{xu2022deep}                  & 98.1\textcolor{mydarkred}{\footnotesize+5.5}  & 93.8\textcolor{mydarkred}{\footnotesize+11.9} & 97.6\textcolor{mydarkred}{\footnotesize+16.7} & 96.0\textcolor{mydarkred}{\footnotesize+17.1} & 93.4\textcolor{mydarkred}{\footnotesize+16.8} & 93.5\textcolor{mydarkred}{\footnotesize+12.0} & 77.1\textcolor{mydarkred}{\footnotesize+30.1} & 81.3\textcolor{mydarkred}{\footnotesize+48.8} \\

    \rowcolor{gray!10}
    DSMVC \cite{tang2022deep}                & 52.9\textcolor{mydarkgreen}{\footnotesize-39.7} & 38.3\textcolor{mydarkgreen}{\footnotesize-43.6} & 82.0\textcolor{mydarkred}{\footnotesize+1.1}  & 81.4\textcolor{mydarkred}{\footnotesize+2.5}  & 90.8\textcolor{mydarkred}{\footnotesize+14.2} & 96.5\textcolor{mydarkred}{\footnotesize+15.0} & 37.6\textcolor{mydarkgreen}{\footnotesize-9.4}  & 29.2\textcolor{mydarkgreen}{\footnotesize-3.3}  \\

    DSIMVC \cite{tang2022deepi}              & 98.0\textcolor{mydarkred}{\footnotesize+5.4}  & 94.0\textcolor{mydarkred}{\footnotesize+12.1} & 99.0\textcolor{mydarkred}{\footnotesize+18.1} & 97.1\textcolor{mydarkred}{\footnotesize+18.2} & 99.7\textcolor{mydarkred}{\footnotesize+23.1} & 99.0\textcolor{mydarkred}{\footnotesize+17.5} & 64.6\textcolor{mydarkred}{\footnotesize+17.6} & 57.8\textcolor{mydarkred}{\footnotesize+25.3} \\

    \rowcolor{gray!10}
    CPSPAN \cite{jin2023deep}                & 91.5\textcolor{mydarkgreen}{\footnotesize-1.1}  & 77.2\textcolor{mydarkgreen}{\footnotesize-4.7}  & 84.8\textcolor{mydarkred}{\footnotesize+3.9}  & 82.1\textcolor{mydarkred}{\footnotesize+3.2}  & 80.4\textcolor{mydarkred}{\footnotesize+3.8}  & 85.1\textcolor{mydarkred}{\footnotesize+3.6}  & 71.2\textcolor{mydarkred}{\footnotesize+24.2} & 60.8\textcolor{mydarkred}{\footnotesize+28.3} \\

    SDMVC \cite{9839616}                     & 98.5\textcolor{mydarkred}{\footnotesize+5.9}  & 95.0\textcolor{mydarkred}{\footnotesize+13.1} & 99.8\textcolor{mydarkred}{\footnotesize+18.9} & 99.5\textcolor{mydarkred}{\footnotesize+20.6} & 97.0\textcolor{mydarkred}{\footnotesize+20.4} & 95.6\textcolor{mydarkred}{\footnotesize+14.1} & 57.9\textcolor{mydarkred}{\footnotesize+10.9} & 66.5\textcolor{mydarkred}{\footnotesize+34.0} \\
    
    \rowcolor{gray!10}
    MVCAN [ours]                             & 98.4\textcolor{mydarkred}{\footnotesize+5.8}  & 95.3\textcolor{mydarkred}{\footnotesize+13.4} & 99.5\textcolor{mydarkred}{\footnotesize+18.6} & 98.8\textcolor{mydarkred}{\footnotesize+19.9} & 99.6\textcolor{mydarkred}{\footnotesize+23.0} & 99.1\textcolor{mydarkred}{\footnotesize+17.6} & 82.6\textcolor{mydarkred}{\footnotesize+35.6} & 86.7\textcolor{mydarkred}{\footnotesize+54.2} \\

    \bottomrule
    \end{tabular}
\end{threeparttable}
}
\end{table*}

\begin{table*}[!ht]
\caption{Clustering performance gains ($\%$) of MVC methods compared with SVC method (DEC-BestV) on four real-world multi-view datasets. ``n/a'' denotes the unavailable clustering result due to high computational costs.}\label{table03}
\centering
\renewcommand\tabcolsep{15.0pt}
\resizebox{\textwidth}{!}{
\begin{threeparttable}
    \begin{tabular}{lllllllll}
    \toprule
    \multirow{2}{*}{Method} &\multicolumn{2}{c}{DHA} &\multicolumn{2}{c}{RGB-D} &\multicolumn{2}{c}{Caltech} &\multicolumn{2}{c}{YoutubeVideo} \cr
    \cmidrule(r){2-3} \cmidrule(r){4-5} \cmidrule(r){6-7} \cmidrule(r){8-9}
    &~~~~ACC &~~~~NMI &~~~~ACC &~~~~NMI &~~~~ACC &~~~~NMI &~~~~ACC &~~~~NMI \\
    \hline
    DEC-BestV \cite{xie2016unsupervised}  & 72.6 & 79.3 & 43.6 & 40.1 & 88.2 & 81.6 & 20.9 & 20.4 \\
    
    \rowcolor{gray!10}
    DEC-WorstV \cite{xie2016unsupervised} & 30.4\textcolor{mydarkgreen}{\footnotesize-42.2} & 43.5\textcolor{mydarkgreen}{\footnotesize-35.8} & 15.0\textcolor{mydarkgreen}{\footnotesize-28.6} &~~5.1\textcolor{mydarkgreen}{\footnotesize-35.0} & 35.4\textcolor{mydarkgreen}{\footnotesize-52.8} & 19.6\textcolor{mydarkgreen}{\footnotesize-62.0} & 26.6\textcolor{mydarkred}{\footnotesize+5.7} &~~0.0\textcolor{mydarkgreen}{\footnotesize-20.4} \\
    
    DMJC \cite{xie2020joint}              & 64.4\textcolor{mydarkgreen}{\footnotesize-8.2} & 73.9\textcolor{mydarkgreen}{\footnotesize-5.4} & 31.7\textcolor{mydarkgreen}{\footnotesize-11.9} & 28.5\textcolor{mydarkgreen}{\footnotesize-11.6} & 83.1\textcolor{mydarkgreen}{\footnotesize-5.1} & 80.3\textcolor{mydarkgreen}{\footnotesize-1.3} & 15.1\textcolor{mydarkgreen}{\footnotesize-5.8} & 15.3\textcolor{mydarkgreen}{\footnotesize-5.1} \\
    
    \rowcolor{gray!10}
    DIMC-net \cite{wen2020dimc}          & 60.3\textcolor{mydarkgreen}{\footnotesize-12.3} & 73.5\textcolor{mydarkgreen}{\footnotesize-5.8} & 35.6\textcolor{mydarkgreen}{\footnotesize-8.0} & 32.4\textcolor{mydarkgreen}{\footnotesize-7.7} & 75.0\textcolor{mydarkgreen}{\footnotesize-13.2} & 68.5\textcolor{mydarkgreen}{\footnotesize-13.1} &~~n/a &~~n/a \\
    
    GP-MVC \cite{wang2021generative}      & 73.1\textcolor{mydarkred}{\footnotesize+0.5} & 81.5\textcolor{mydarkred}{\footnotesize+2.2} & 38.5\textcolor{mydarkgreen}{\footnotesize-5.1} & 32.6\textcolor{mydarkgreen}{\footnotesize-7.5} & 80.3\textcolor{mydarkgreen}{\footnotesize-7.9} & 77.6\textcolor{mydarkgreen}{\footnotesize-4.0} & 12.4\textcolor{mydarkgreen}{\footnotesize-8.5} & 10.3\textcolor{mydarkgreen}{\footnotesize-10.1} \\
    
    \rowcolor{gray!10}
    CoMVC \cite{trostenMVC}         & 67.4\textcolor{mydarkgreen}{\footnotesize-5.2}  & 79.2\textcolor{mydarkgreen}{\footnotesize-0.1} & 42.0\textcolor{mydarkgreen}{\footnotesize-1.6} & 41.3\textcolor{mydarkred}{\footnotesize+1.2}  & 72.5\textcolor{mydarkgreen}{\footnotesize-15.7}  & 68.8\textcolor{mydarkgreen}{\footnotesize-12.8}  & 18.1\textcolor{mydarkgreen}{\footnotesize-2.8}  & 17.9\textcolor{mydarkgreen}{\footnotesize-2.5} \\
    
    DIMVC \cite{xu2022deep}               & 79.5\textcolor{mydarkred}{\footnotesize+6.9} & 84.7\textcolor{mydarkred}{\footnotesize+5.4} & 46.9\textcolor{mydarkred}{\footnotesize+3.3} & 41.4\textcolor{mydarkred}{\footnotesize+1.3} & 87.2\textcolor{mydarkgreen}{\footnotesize-1.0} & 80.7\textcolor{mydarkgreen}{\footnotesize-0.9} & 15.4\textcolor{mydarkgreen}{\footnotesize-5.5} & 12.5\textcolor{mydarkgreen}{\footnotesize-7.9} \\
    
    \rowcolor{gray!10}
    DSMVC \cite{tang2022deep}            & 77.4\textcolor{mydarkred}{\footnotesize+4.8} & 83.6\textcolor{mydarkred}{\footnotesize+4.3} & 43.3\textcolor{mydarkgreen}{\footnotesize-0.3} & 40.6\textcolor{mydarkred}{\footnotesize+0.5} & 90.5\textcolor{mydarkred}{\footnotesize+2.3} & 84.7\textcolor{mydarkred}{\footnotesize+3.1} & 17.8\textcolor{mydarkgreen}{\footnotesize-3.1} & 18.0\textcolor{mydarkgreen}{\footnotesize-2.4} \\
    
    DSIMVC \cite{tang2022deepi}           & 64.0\textcolor{mydarkgreen}{\footnotesize-8.6} & 77.3\textcolor{mydarkgreen}{\footnotesize-2.0} & 45.8\textcolor{mydarkred}{\footnotesize+2.2} & 41.0\textcolor{mydarkred}{\footnotesize+0.9} & 76.7\textcolor{mydarkgreen}{\footnotesize-11.5} & 67.5\textcolor{mydarkgreen}{\footnotesize-14.1} & 19.0\textcolor{mydarkgreen}{\footnotesize-1.9} & 18.8\textcolor{mydarkgreen}{\footnotesize-1.6} \\
    
    \rowcolor{gray!10}
    CPSPAN \cite{jin2023deep}             & 67.1\textcolor{mydarkgreen}{\footnotesize-5.5} & 80.0\textcolor{mydarkred}{\footnotesize+0.7} & 42.4\textcolor{mydarkgreen}{\footnotesize-1.2} & 38.3\textcolor{mydarkgreen}{\footnotesize-1.8} & 84.8\textcolor{mydarkgreen}{\footnotesize-3.4} & 73.9\textcolor{mydarkgreen}{\footnotesize-7.7} & 23.0\textcolor{mydarkred}{\footnotesize+2.1} & 22.0\textcolor{mydarkred}{\footnotesize+1.6} \\
    
    SDMVC \cite{9839616}                  & 80.2\textcolor{mydarkred}{\footnotesize+7.6} & 85.4\textcolor{mydarkred}{\footnotesize+6.1} & 44.1\textcolor{mydarkred}{\footnotesize+0.5} & 40.7\textcolor{mydarkred}{\footnotesize+0.6} & 85.3\textcolor{mydarkgreen}{\footnotesize-2.9} & 79.1\textcolor{mydarkgreen}{\footnotesize-2.5} & 18.6\textcolor{mydarkgreen}{\footnotesize-2.3} & 18.0\textcolor{mydarkgreen}{\footnotesize-2.4} \\
    
    \rowcolor{gray!10}
    MVCAN [ours]                          & 84.8\textcolor{mydarkred}{\footnotesize+12.2} & 87.5\textcolor{mydarkred}{\footnotesize+8.2} & 48.0\textcolor{mydarkred}{\footnotesize+4.4} & 41.7\textcolor{mydarkred}{\footnotesize+1.6} & 93.6\textcolor{mydarkred}{\footnotesize+5.4} & 88.7\textcolor{mydarkred}{\footnotesize+7.1} & 24.2\textcolor{mydarkred}{\footnotesize+3.3} & 24.3\textcolor{mydarkred}{\footnotesize+3.9} \\
    \bottomrule
    \end{tabular}
\end{threeparttable}
}
\end{table*}

\subsection{Theoretical Analysis of Multi-View Consistency \& Complementarity \& Noise Robustness}\label{ccn}
Moreover, we attempt to theoretically illustrate why MVCAN works with the following definitions and theorems:
\begin{definition}\label{d1}
Denoting $\mathcal{D}(\mathbf{a},\mathbf{b})=\left\| \mathbf{a} - \mathbf{b} \right\|_2^2$ as squared Euclidean distance between the representations $\mathbf{a}$ and $\mathbf{b}$,
\begin{equation}
    \mathcal{S}(\mathbf{a},\mathbf{b}):= \frac{1}{1+\mathcal{D}(\mathbf{a},\mathbf{b})} \in (0, 1]
\end{equation}
is defined as the representation similarity between $\mathbf{a}$ and $\mathbf{b}$.
Formally, $y_{ij}^v \in (0, 1]$ holds given Eq.~(\ref{q}).
\end{definition}
\begin{definition}\label{d2} ($\varepsilon$, $\mathbf{z}$, $\bm{\upmu}$ - Noisy-view)
For $\forall \mathbf{z}_i^v \in \mathbf{Z}^v$, it belongs to the noisy view if $\exists \bm{\upmu}_a^v, \bm{\upmu}_b^v$, and $\varepsilon > 0$ such that $|\mathcal{D}(\mathbf{z}_i^v,\bm{\upmu}_a^v) - \mathcal{D}(\mathbf{z}_i^v,\bm{\upmu}_b^v) |< \varepsilon$, $\mathcal{S}(\mathbf{z}_i^v,\bm{\upmu}_a^v) \approx \mathcal{S}(\mathbf{z}_i^v,\bm{\upmu}_b^v)$, and $y_{ia}^v \approx y_{ib}^v$, where $\varepsilon$ is a sufficiently small value.
Otherwise, $\mathbf{z}_i^v$ is the informative view.
\end{definition}
Then, the following theorems suggest that $\mathbf{Y}_{(t)}$ achieves our concluded consistency, complementarity, and noise robustness with regard to $\{\mathbf{Y}^v\}_{v=1}^V$ in the framework of MVCAN. All proofs of theorems are provided in Appendix B.
\begin{theorem}\label{the:kkk}
Denoting $\mathcal{L}_{K}$ as the $K$-means objective, $\mathcal{L}_{K}(\mathbf{Z}_{(t)})$ is equivalent to 
punishing different scaling factors on $\{\mathcal{L}_{K}(\mathbf{Z}^v)\}_{v=1}^V$ under the consistency constraint of multiple views' cluster centroids.
\end{theorem}
Theorem \ref{the:kkk} analyses the effect of the scaling matrix $\mathbf{W}_{(t)}$ to constrain the optimization of individual views in the scaled representation $\mathbf{Z}_{(t)}$, which reduces the side effects of noisy views when $\mathcal{T}$-level iteration discovers the cluster structures.
\begin{theorem}\label{the:t2}
(\textbf{Consistency})
If a sample representation is informative in multiple views and has the same cluster assignments in these views, its cluster assignment in $\mathbf{Y}_{(t)}$ is the same as that in these views.
\end{theorem}
Theorem \ref{the:t2} indicates that $y_{ij(t)} \in \mathbf{Y}_{(t)}$ follows $\{y_{ij}^v \in \mathbf{Y}^v\}_{v=1}^V$ when they have consistent clusters, which reflects the property of consistency among multiple views.
\begin{theorem}\label{the:t3}
(\textbf{Complementarity})
If a sample representation is informative in multiple views where it has different cluster assignments, we have two cases according to the differences of similarity among the clusters.\\
Case 1: if the differences of similarity among the clusters are equal, its cluster assignment in $\mathbf{Y}_{(t)}$ is the same as that in the informative view with the largest scaling factor.\\
Case 2: if the differences of similarity among the clusters are not equal, its cluster assignment in $\mathbf{Y}_{(t)}$ is more likely to be the same as that in the informative view with the largest scaling factor.
\end{theorem}
Theorem \ref{the:t3} indicates that $y_{ij(t)} \in \mathbf{Y}_{(t)}$ follows $y_{ij}^v \in \mathbf{Y}^v$ with a large scaling factor when different views have inconsistent clusters, which leverages the view with high confidence to correct other inconsistent views.
\begin{theorem}\label{the:t4}
(\textbf{Complementarity $\&$ Noise robustness})\\
Case 1: if a sample representation is informative in some views and is noisy in other views, its cluster assignment in $\mathbf{Y}_{(t)}$ is the same as that in the informative views.\\
Case 2: if a sample representation is noisy in all views, its cluster assignment in $\mathbf{Y}_{(t)}$ is the same as the common cluster assignments existing in these views.
\end{theorem}
Theorem \ref{the:t4} illustrates the noise robustness of our method that makes the robust soft labels $\mathbf{Y}_{(t)}$ mitigate the side effects of noisy views.
For example, $\mathbf{z}^v_{i} \in \mathbf{Z}^v$ is noisy in individual view but the corresponding scaled representation $\mathbf{z}_{i(t)} \in \mathbf{Z}_{(t)}$ is informative, so the influence from noisy views on the soft labels $y_{ij(t)} \in \mathbf{Y}_{(t)}$ are reduced.
Additionally, Theorems \ref{the:t3} and \ref{the:t4} can be together interpreted as the complementarity among multiple views, \ie, the combination of multiple views is conducive to outperforming single views and discovering comprehensive cluster patterns (which cannot be explored in single-view data) across multi-view data.

\section{Experiments}\label{et}
\subsection{Settings}
We briefly introduce the experimental setup and show more implementation details in Appendix. Our code is provided in \url{https://github.com/SubmissionsIn/MVCAN}.

\textbf{Datasets}.~We conduct experiments on eight public datasets and four noise-simulated ones, and their details are listed in Appendix C.
First, four normal multi-view datasets (easy for clustering) include BDGP~\cite{cai2012joint}, DIGIT~\cite{peng2019comic}, COIL~\cite{nene1996columbia}, and Amazon~\cite{saenko2010adapting}.
Second, we construct four noise-simulated datasets on the four datasets to test the noise robustness of methods in extreme scenarios, where we randomly sample noise to build an additional view and obtain NoisyBDGP/DIGIT/COIL/Amazon for the four individual datasets.
Third, we conduct experiments on four real-world multi-view datasets (hard for clustering) including DHA \cite{lin2012human}, RGB-D \cite{zhou2020end}, Caltech \cite{fei2004learning}, and YoutubeVideo \cite{madani2012using}.

\textbf{Comparison methods}.~We compare our MVCAN with the following 10 self-supervised clustering algorithms.
To be specific, DEC~\cite{xie2016unsupervised} is a popular deep SVC method and we leverage this baseline to investigate the side effects of NVD on MVC methods.
DMJC \cite{xie2020joint}, DIMC-net \cite{wen2020dimc}, GP-MVC~\cite{wang2021generative}, DIMVC \cite{xu2022deep}, and SDMVC \cite{9839616} are DEC-based deep MVC methods which usually establish consistent soft labels for achieving clustering consistency.
DMJC \cite{xie2020joint}, DIMC-net \cite{wen2020dimc}, GP-MVC \cite{wang2021generative}, and DSMVC \cite{tang2022deep} mainly incorporate weighting strategies to obtain fused representations.
CoMVC~\cite{trostenMVC}, DSIMVC \cite{tang2022deepi}, and CPSPAN \cite{jin2023deep} are contrastive learning based deep MVC methods which leverage contrastive learning to learn common representations.
\begin{table*}[!ht]
\caption{Clustering performance gains ($\%$) of MVC methods compared with SVC method (DEC-BestV) on four noise-simulated datasets.}\label{table02}
\centering
\renewcommand\tabcolsep{15.0pt}
\resizebox{\textwidth}{!}{
\begin{threeparttable}
    \begin{tabular}{lllllllll}
    \toprule
    \multirow{2}{*}{Method} &\multicolumn{2}{c}{NoisyBDGP} &\multicolumn{2}{c}{NoisyDIGIT} &\multicolumn{2}{c}{NoisyCOIL} &\multicolumn{2}{c}{NoisyAmazon} \cr
    \cmidrule(r){2-3} \cmidrule(r){4-5} \cmidrule(r){6-7} \cmidrule(r){8-9}
    &~~~~ACC &~~~~NMI &~~~~ACC &~~~~NMI &~~~~ACC &~~~~NMI &~~~~ACC &~~~~NMI \\
    \hline
    DEC-BestV \cite{xie2016unsupervised}     & 92.6      & 81.9      & 80.9      & 78.9      & 76.6      & 81.5      & 47.0      & 32.5      \\
    
    \rowcolor{gray!10}
    DEC-WorstV \cite{xie2016unsupervised}    & 22.2\textcolor{mydarkgreen}{\footnotesize-70.4}      &~~0.2\textcolor{mydarkgreen}{\footnotesize-81.7}      & 12.4\textcolor{mydarkgreen}{\footnotesize-68.5}      &~~0.4\textcolor{mydarkgreen}{\footnotesize-78.5}      & 16.4\textcolor{mydarkgreen}{\footnotesize-60.2}      &~~2.8\textcolor{mydarkgreen}{\footnotesize-78.7}   & 12.0\textcolor{mydarkgreen}{\footnotesize-35.0}   &~~0.4\textcolor{mydarkgreen}{\footnotesize-32.1}  \\

    DMJC \cite{xie2020joint}         & 63.7\textcolor{mydarkgreen}{\footnotesize-28.9} & 59.4\textcolor{mydarkgreen}{\footnotesize-22.5} & 80.7\textcolor{mydarkgreen}{\footnotesize-0.2}  & 82.8\textcolor{mydarkred}{\footnotesize+3.9}  & 85.6\textcolor{mydarkred}{\footnotesize+9.0}  & 92.1\textcolor{mydarkred}{\footnotesize+10.6} & 54.3\textcolor{mydarkred}{\footnotesize+7.3}  & 46.6\textcolor{mydarkred}{\footnotesize+14.1} \\

    \rowcolor{gray!10}
    DIMC-net \cite{wen2020dimc}      & 78.9\textcolor{mydarkgreen}{\footnotesize-13.7} & 68.4\textcolor{mydarkgreen}{\footnotesize-13.5} & 71.6\textcolor{mydarkgreen}{\footnotesize-9.3}  & 76.5\textcolor{mydarkgreen}{\footnotesize-2.4}  & 87.5\textcolor{mydarkred}{\footnotesize+10.9} & 91.8\textcolor{mydarkred}{\footnotesize+10.3} & 43.6\textcolor{mydarkgreen}{\footnotesize-3.4}  & 37.3\textcolor{mydarkred}{\footnotesize+4.8}  \\

    GP-MVC \cite{wang2021generative} & 80.7\textcolor{mydarkgreen}{\footnotesize-11.9} & 78.4\textcolor{mydarkgreen}{\footnotesize-3.5}  & 49.1\textcolor{mydarkgreen}{\footnotesize-31.8} & 63.5\textcolor{mydarkgreen}{\footnotesize-15.4} & 69.4\textcolor{mydarkgreen}{\footnotesize-7.2}  & 72.9\textcolor{mydarkgreen}{\footnotesize-8.6}  & 40.4\textcolor{mydarkgreen}{\footnotesize-6.6}  & 39.8\textcolor{mydarkred}{\footnotesize+7.3}  \\

    \rowcolor{gray!10}
    CoMVC \cite{trostenMVC}         & 63.8\textcolor{mydarkgreen}{\footnotesize-28.8}  & 51.6\textcolor{mydarkgreen}{\footnotesize-30.3} & 86.9\textcolor{mydarkred}{\footnotesize+6.0} & 84.6\textcolor{mydarkred}{\footnotesize+5.7}  & 90.6\textcolor{mydarkred}{\footnotesize+14.0}  & 93.6\textcolor{mydarkred}{\footnotesize+12.1}  & 61.8\textcolor{mydarkred}{\footnotesize+14.8}  & 52.6\textcolor{mydarkred}{\footnotesize+20.1} \\
    
    DIMVC \cite{xu2022deep}          & 94.9\textcolor{mydarkred}{\footnotesize+2.3}  & 87.6\textcolor{mydarkred}{\footnotesize+5.7}  & 88.7\textcolor{mydarkred}{\footnotesize+7.8}  & 93.7\textcolor{mydarkred}{\footnotesize+14.8} & 89.0\textcolor{mydarkred}{\footnotesize+12.4} & 91.7\textcolor{mydarkred}{\footnotesize+10.2} & 63.6\textcolor{mydarkred}{\footnotesize+16.6} & 66.7\textcolor{mydarkred}{\footnotesize+34.2} \\

    \rowcolor{gray!10}
    DSMVC \cite{tang2022deep}        & 57.1\textcolor{mydarkgreen}{\footnotesize-35.5} & 41.8\textcolor{mydarkgreen}{\footnotesize-40.1} & 73.7\textcolor{mydarkgreen}{\footnotesize-7.2}  & 72.2\textcolor{mydarkgreen}{\footnotesize-6.7}  & 81.8\textcolor{mydarkred}{\footnotesize+5.2}  & 84.1\textcolor{mydarkred}{\footnotesize+2.6}  & 36.6\textcolor{mydarkgreen}{\footnotesize-10.4} & 25.9\textcolor{mydarkgreen}{\footnotesize-6.6}  \\
    
    DSIMVC \cite{tang2022deepi}      & 95.1\textcolor{mydarkred}{\footnotesize+2.5}  & 85.2\textcolor{mydarkred}{\footnotesize+3.3}  & 90.4\textcolor{mydarkred}{\footnotesize+9.5}  & 90.5\textcolor{mydarkred}{\footnotesize+11.6} & 98.8\textcolor{mydarkred}{\footnotesize+22.2} & 97.8\textcolor{mydarkred}{\footnotesize+16.3} & 54.7\textcolor{mydarkred}{\footnotesize+7.7}  & 54.1\textcolor{mydarkred}{\footnotesize+21.6} \\

    \rowcolor{gray!10}
    CPSPAN \cite{jin2023deep}        & 73.2\textcolor{mydarkgreen}{\footnotesize-19.4} & 53.5\textcolor{mydarkgreen}{\footnotesize-28.4} & 11.8\textcolor{mydarkgreen}{\footnotesize-69.1} & ~~0.3\textcolor{mydarkgreen}{\footnotesize-78.6}  & 15.8\textcolor{mydarkgreen}{\footnotesize-60.8} & ~~3.3\textcolor{mydarkgreen}{\footnotesize-78.2}  & 12.4\textcolor{mydarkgreen}{\footnotesize-34.6} & ~~0.4\textcolor{mydarkgreen}{\footnotesize-32.1}  \\
    
    SDMVC \cite{9839616}             & 89.6\textcolor{mydarkgreen}{\footnotesize-3.0}  & 83.6\textcolor{mydarkred}{\footnotesize+1.7}  & 75.8\textcolor{mydarkgreen}{\footnotesize-5.1}  & 72.2\textcolor{mydarkgreen}{\footnotesize-6.7}  & 81.0\textcolor{mydarkred}{\footnotesize+4.4}  & 89.2\textcolor{mydarkred}{\footnotesize+7.7}  & 55.4\textcolor{mydarkred}{\footnotesize+8.4}  & 61.0\textcolor{mydarkred}{\footnotesize+28.5} \\

    \rowcolor{gray!10}
    MVCAN [ours]                     & 98.0\textcolor{mydarkred}{\footnotesize+5.4}  & 95.1\textcolor{mydarkred}{\footnotesize+13.2} & 99.0\textcolor{mydarkred}{\footnotesize+18.1} & 98.4\textcolor{mydarkred}{\footnotesize+19.5} & 99.2\textcolor{mydarkred}{\footnotesize+22.6} & 98.8\textcolor{mydarkred}{\footnotesize+17.3} & 72.8\textcolor{mydarkred}{\footnotesize+25.8} & 73.2\textcolor{mydarkred}{\footnotesize+40.7} \\
    \bottomrule
    \end{tabular}
\end{threeparttable}
}
\end{table*}

\subsection{Comparison Results and Analysis}\label{SOTA}
Tables~\ref{table01}, \ref{table03}, and \ref{table02} list clustering effectiveness of comparison methods on all datasets. The performance is evaluated by clustering accuracy (ACC) and normalized mutual information (NMI), and the average values of 10 runs are reported.
DEC-BestV and DEC-WorstV denote the results of the SVC method DEC on the best and the worst views, respectively.

Firstly, we compare DEC-BestV with DEC-WorstV and can easily find that the clustering results of DEC-WorstV is not ideal for many samples, that is, many samples that are correctly clustered by DEC-BestV are incorrectly clustered by DEC-WorstV (especially for real-world multi-view datasets in Table~\ref{table03}).
This suggests that the view qualities of multi-view datasets are different, where the views with unclear cluster structures could be considered as noisy views for clustering.
Secondly, most of MVC methods achieve performance gains on normal datasets (\textcolor{mydarkred}{red} results in Table~\ref{table01}) but have performance degeneration on real-world datasets (\textcolor{mydarkgreen}{green} results in Table~\ref{table03}) when taking DEC-BestV as the baseline. The side effects of noisy views adversely affect many MVC methods and thus we observe that some multi-view methods are not robuster than the single-view method in terms of clustering effectiveness.
Despite some of these MVC methods leverage weighting strategies to balance different views, the noisy-view drawback still prevent them from learning effective cluster structures in some practical scenarios.
Thirdly, our method MVCAN obtains much better performance than DEC-BestV across all datasets and generally achieves the best or comparable performance among all MVC methods.
For example in Table~\ref{table03}, MVCAN improves the best comparison methods by 4\%, 1\%, and 3\% ACC values on DHA, RGB-D, and Caltech, respectively.
The results indicate that MVCAN is able to explore the useful consistent and complementary information among informative views, as well as achieve the noise robustness to noisy views.

Since the performance of MVC could be interfered with noisy views in datasets, it is encouraged to test the robustness of algorithms with extreme noise interference \cite{ye2018multi,trostenMVC}, which can guide the algorithm design of MVC for practical scenarios.
To this end, we conduct comparison experiments on noise-simulated multi-view datasets as shown in Table~\ref{table02}.
Compared with Table~\ref{table01}, Table~\ref{table02} suggests that most of MVC methods have degenerated results but our MVCAN still achieves comparable performance.
Specifically, MVCAN surpasses the best comparison methods by 7\%, 5\%, 1\%, and 6\% NMI values on the four noise-simulated datasets.
This further demonstrates the effectiveness of our method.

\subsection{Ablation Study}\label{ABs}
In this subsection, we investigate the effectiveness of each part of our method in detail from the following aspects.
\begin{table}[!ht]\caption{Importance of two conditions in clustering objective.}\label{tab:table06}
\small
\centering
\resizebox{\linewidth}{!}{
\begin{threeparttable}
    \begin{tabular}{l|cc|cc|cc|cc|cc}
    \toprule
    &\multicolumn{2}{c|}{Conditions} &\multicolumn{2}{c|}{BDGP} &\multicolumn{2}{c|}{DIGIT} &\multicolumn{2}{c|}{NoisyBDGP} &\multicolumn{2}{c}{NoisyDIGIT} \cr
    \hline
    &$\mathbf{\Theta}^v$ &$\mathbf{A}^v$  & ACC & NMI & ACC & NMI & ACC & NMI & ACC & NMI \\
    \hline
    (i)&$\checkmark$&\quad                      &97.3 &92.1 &98.6 &98.0 &97.7 &94.6 &90.2 &93.3 \\
    (ii)&\quad&$\checkmark$                     &66.0 &47.7 &84.0 &73.5 &60.2 &33.9 &61.1 &59.9 \\
    (iii)&$\checkmark$&$\checkmark$             &98.4 &95.3 &99.5 &98.8 &98.0 &95.1 &99.0 &98.4 \\
    \bottomrule
    \end{tabular}
\end{threeparttable}
}
\end{table}
\begin{table}[!ht]\caption{Importance of two loss components in optimization.}\label{tab:table04}
\small
\centering
\resizebox{\linewidth}{!}{
\begin{threeparttable}
    \begin{tabular}{l|cc|cc|cc|cc|cc}
    \toprule
    &\multicolumn{2}{c|}{Components} &\multicolumn{2}{c|}{BDGP} &\multicolumn{2}{c|}{DIGIT} &\multicolumn{2}{c|}{NoisyBDGP} &\multicolumn{2}{c}{NoisyDIGIT} \cr
    \hline
    &$\mathcal{L}_{r}^v$ &$\mathcal{L}_{c}^v$  & ACC & NMI & ACC & NMI & ACC & NMI & ACC & NMI \\
    \hline
    (a)&\quad&\quad                      &64.3 &52.2 &76.8 &72.3 &49.9 &31.5 &74.1 &70.5 \\
    (b)&$\checkmark$&\quad               &94.8 &84.2 &78.7 &74.7 &94.4 &83.9 &76.9 &74.8 \\
    (c)&\quad&$\checkmark$               &79.8 &70.9 &87.0 &94.3 &72.6 &57.4 &59.1 &70.2 \\
    (d)&$\checkmark$&$\checkmark$        &98.4 &95.3 &99.5 &98.8 &98.0 &95.1 &99.0 &98.4 \\
    \bottomrule
    \end{tabular}
\end{threeparttable}
}
\end{table}
\begin{figure}[!ht]
\centering
  \begin{subfigure}{0.49\linewidth}
    \includegraphics[width=\linewidth]{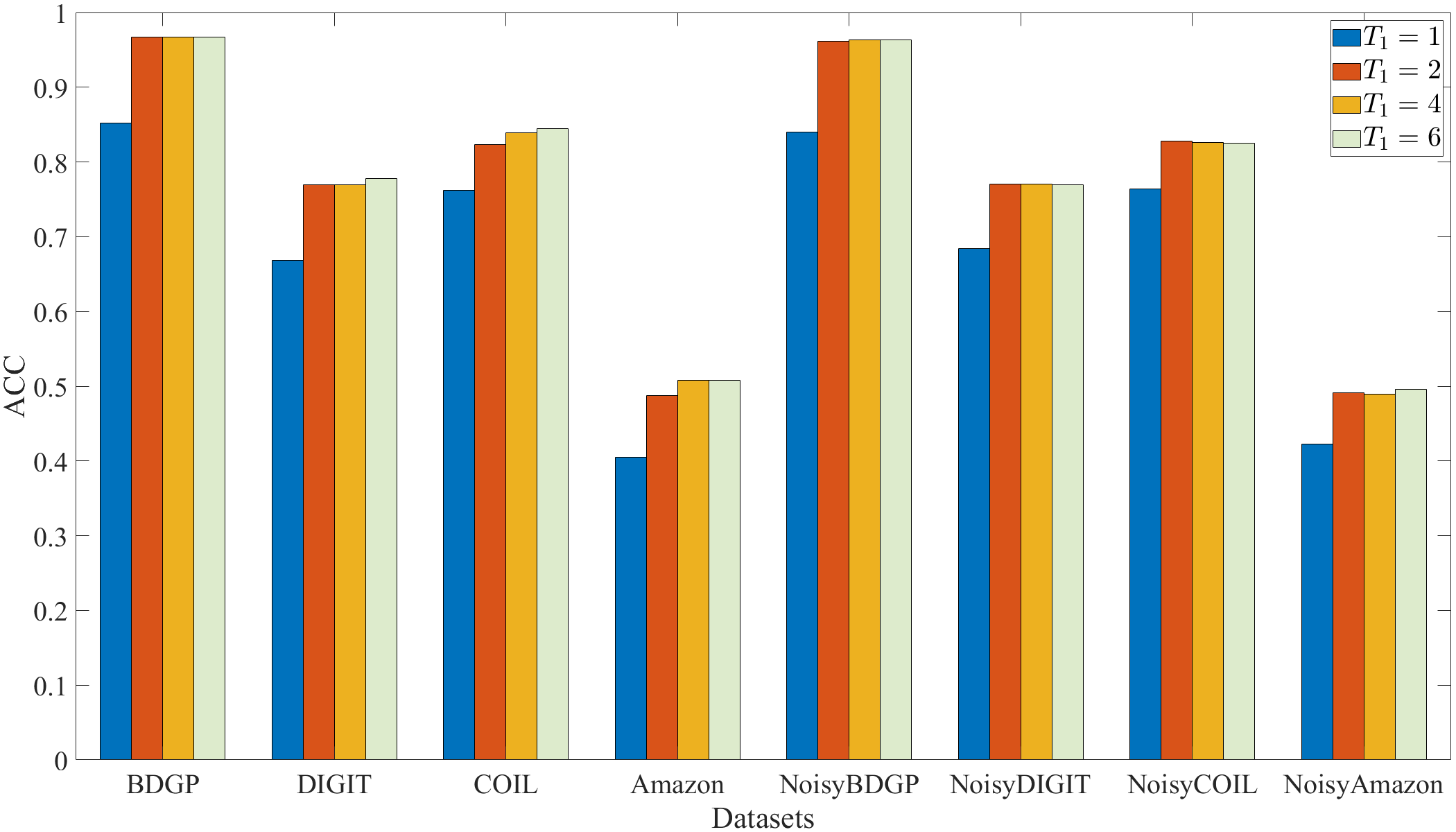}
    \caption{ACC $vs.$ $T_1$}
  \end{subfigure}
  \begin{subfigure}{0.49\linewidth}
    \includegraphics[width=\linewidth]{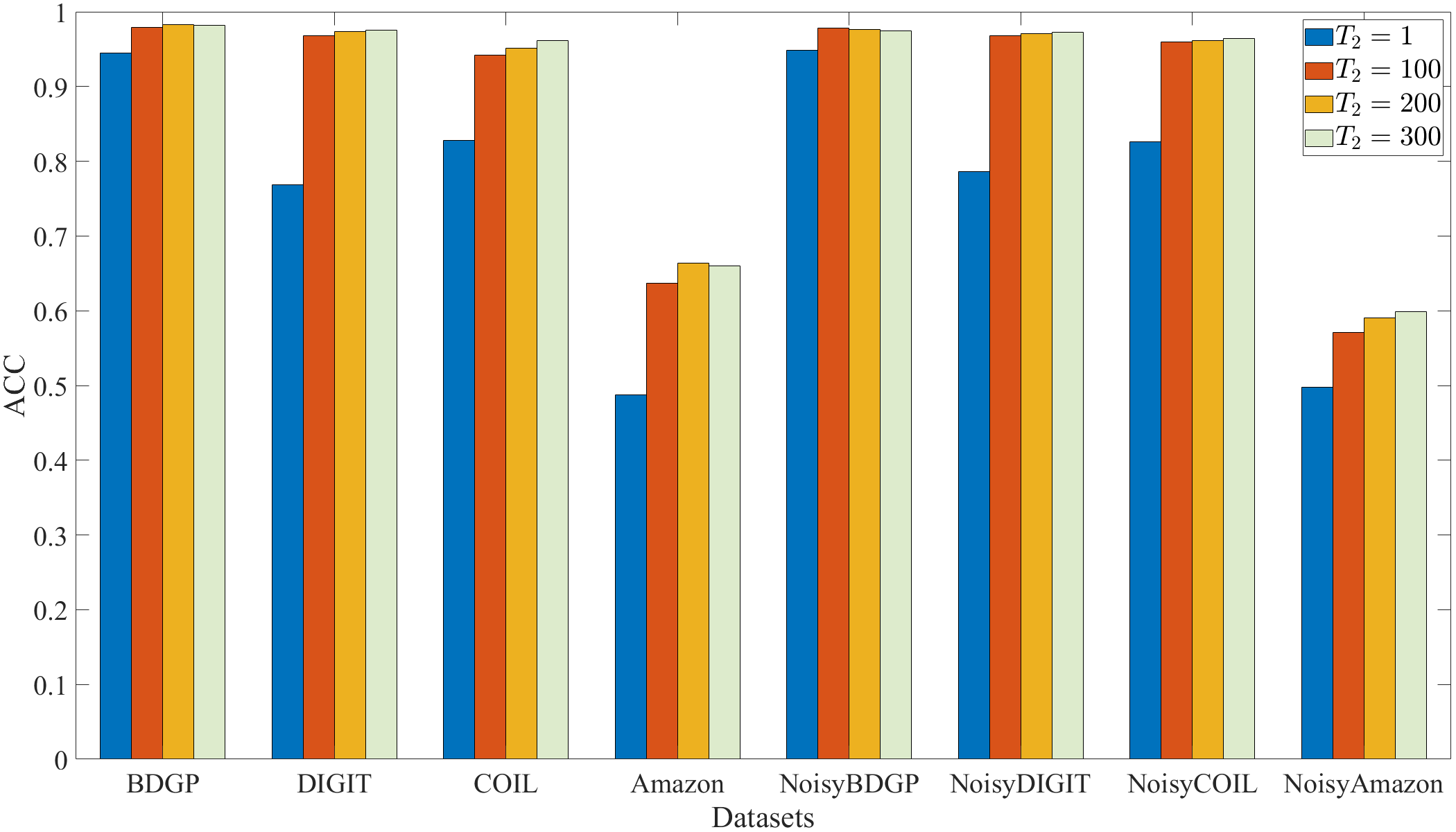}
    \caption{ACC $vs.$ $T_2$}
  \end{subfigure}
\caption{Different training iterations of $\mathcal{T}$-level (a) and $\mathcal{R}$-level (b) in the proposed two-level multi-view iterative optimization.}\label{BS}
\end{figure}

\textbf{Two conditions in clustering objective}.~We investigate the importance of the two conditions in Eq.~(\ref{cccc}). As shown in Table~\ref{tab:table06}, (i) $\mathbf{\Theta}^v$ denotes the first condition of un-shared parameters for all views and (ii) $\mathbf{A}^v$ indicates the second condition that multiple views are not required to be consistent. One could find that the results shown in (iii) achieve the best performance, which verifies the effectiveness of our MVCAN to mitigate the side effects caused by the noisy-view drawback. Concretely, the un-shared $\{\mathbf{\Theta}^v\}_{v=1}^V$ of all views eliminate their unfavourable interference. Moreover, $\{\mathbf{A}^v\}_{v=1}^V$ absolve the noisy views of conformity with the other views when minimizing the clustering objective.

\textbf{Two loss components in optimization}.~Table~\ref{tab:table04} lists the results of MVCAN with different loss components, where (a) denotes the clustering results of $K$-means on the direct concatenation of multi-view data. Compared with (a), both (b) and (c) can obtain improvements due to the representation learning objective achieved by $\mathcal{L}_{r}^v$ and the clustering objective achieved by $\mathcal{L}_{c}^v$, respectively. (d) obtains the best performance which indicates that the representation learning objective and the clustering objective have the effect of mutual promotion in our MVCAN, verified their importance.

\textbf{Two-level multi-view iterative optimization}.~Figure~\ref{BS} shows the performance by changing $T_1$ and $T_2$ in the first iteration of $\mathcal{T}$-level iteration and $\mathcal{R}$-level iteration. Based on the results, we have the following observations. When $T_1=1$ (\ie, the framework is without $\mathcal{T}$-level iteration), MVCAN is unable to infer the scaling factors for different views to generate the more effective robust learning target $\mathbf{T}$. Similarly, when $T_2=1$ (\ie, the framework is without $\mathcal{R}$-level iteration), MVCAN cannot learn the more effective representations with the learning target. When $T_1$ and $T_2$ increase, the performance also improves, which shows the effectiveness of our two-level multi-view iterative optimization. For all tested datasets, we set $T_1=2$ and $T_2=100$.

\subsection{Model Analysis}
This part showcases loss convergence and hyper-parameter analysis to further understand our proposed method.
\begin{figure}[!ht]
\centering
  \begin{subfigure}{0.49\linewidth}
    \includegraphics[width=\linewidth]{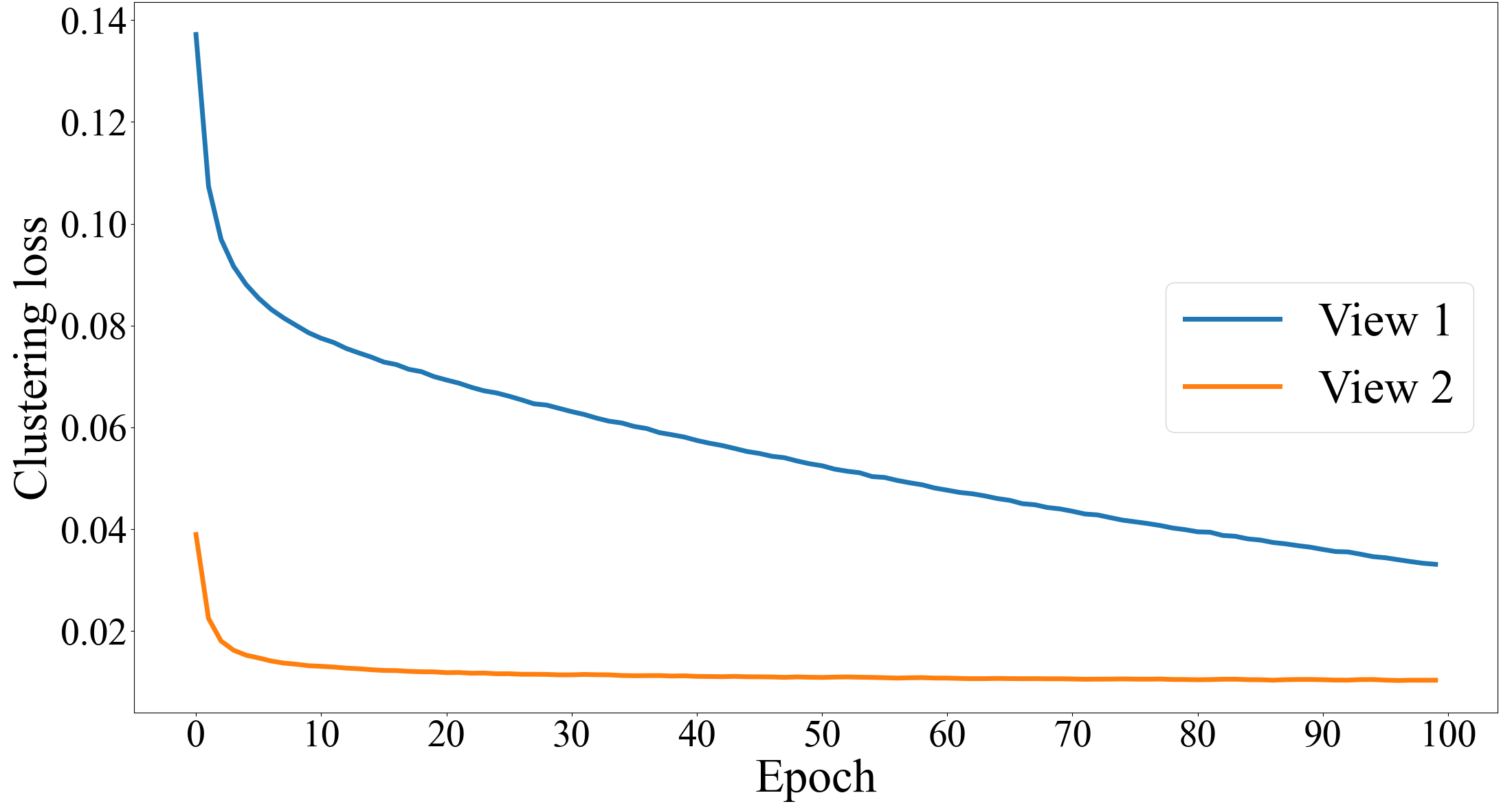}
    \caption{BDGP}
  \end{subfigure}
  \begin{subfigure}{0.49\linewidth}
    \includegraphics[width=\linewidth]{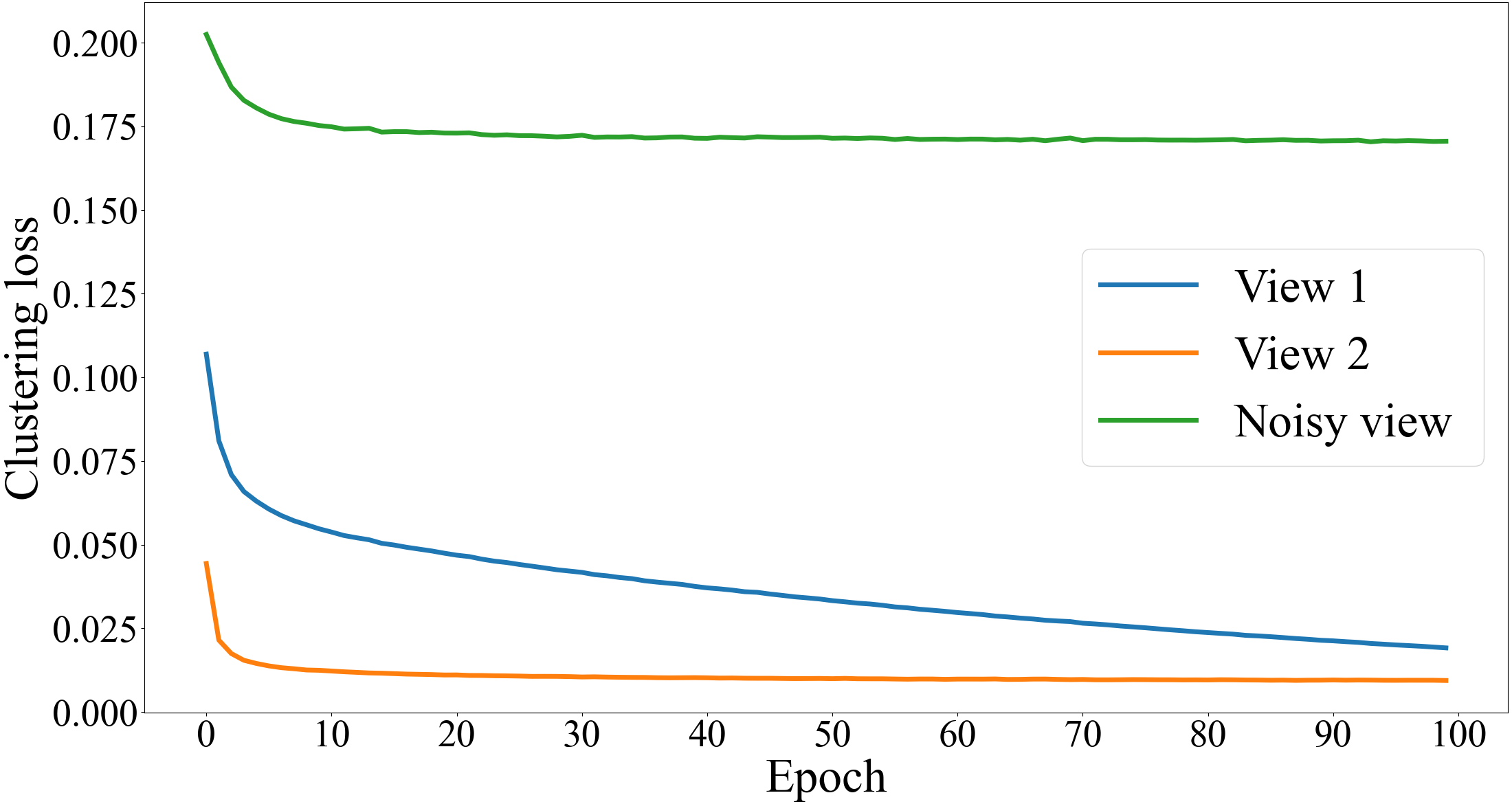}
    \caption{NoisyBDGP}
  \end{subfigure}
\caption{Loss $vs.$ Epoch on BDGP and NoisyBDGP.}\label{lossplot}
\end{figure}
\begin{figure}[!ht]
\centering
  \begin{subfigure}{0.49\linewidth}
    \includegraphics[width=\linewidth]{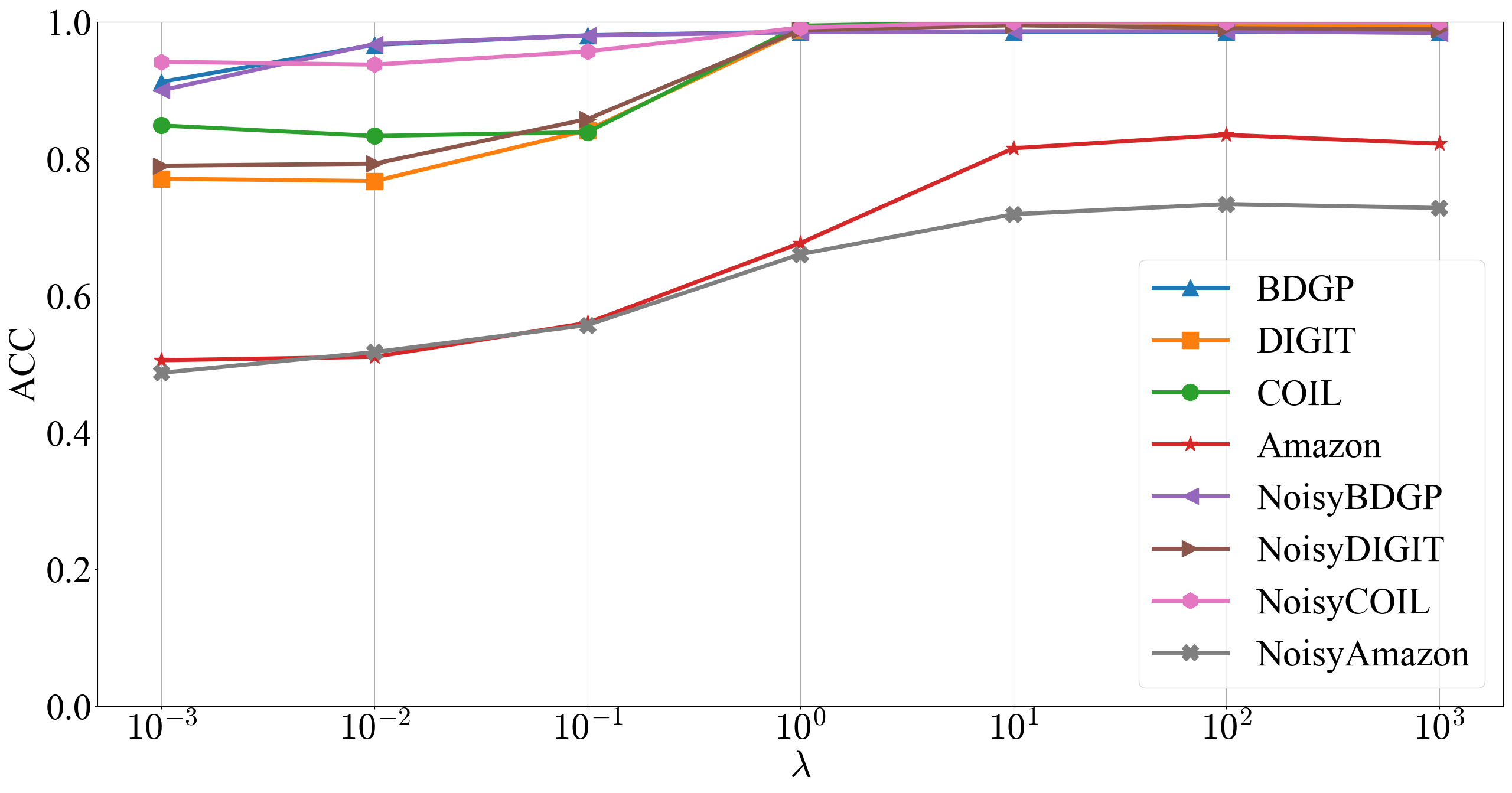}
    \caption{ACC $vs.$ $\lambda$}
  \end{subfigure}
  \begin{subfigure}{0.49\linewidth}
    \includegraphics[width=\linewidth]{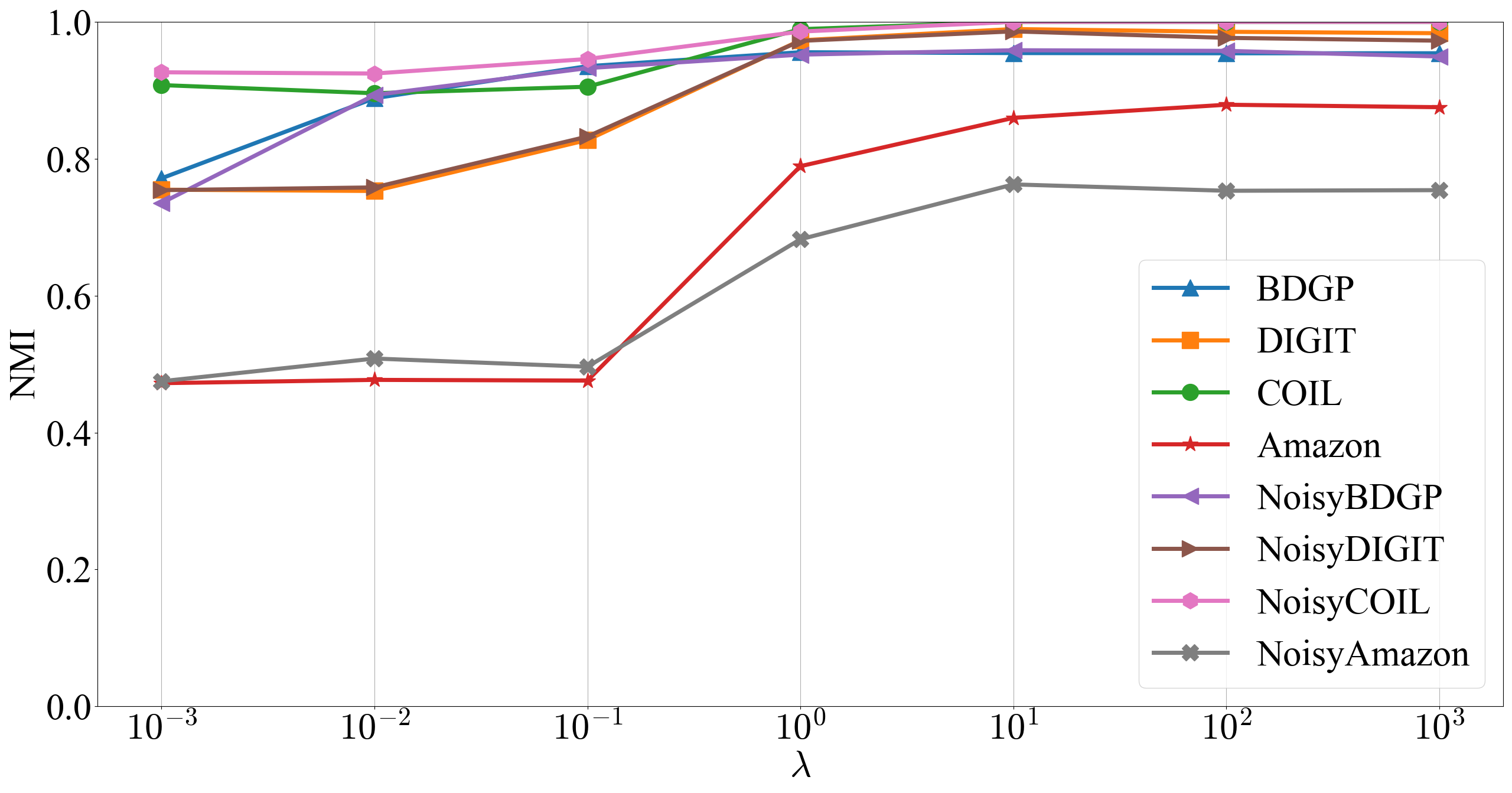}
    \caption{NMI $vs.$ $\lambda$}
  \end{subfigure}
\caption{ACC and NMI $vs.$ $\lambda$ on different datasets.}\label{trade}
\end{figure}

\textbf{Loss convergence analysis}.~Figure~\ref{lossplot} plots the clustering loss curve during training and we could observe that the model has good convergence properties. Moreover, it is worth noting that the loss values of the noisy views are larger than that of other views, which is consistent with our analysis in Sec.~\ref{AnalysisNVP}. Specifically, the features of the noisy views are not informative and have unclear cluster structures, which makes the clustering loss of noisy views difficult to be minimized. Therefore, we propose to constrain un-shared parameters and inconsistent clustering predictions for multiple views in the multi-view clustering objective of MVCAN, to alleviate the adverse impact of noisy views on the optimization process of other informative views.

\textbf{Hyper-parameter analysis}.~The hyper-parameter of MVCAN includes the trade-off $\lambda$ in Eq.~(\ref{loss}), and Figure~\ref{trade} shows the clustering effectiveness by traversing $\lambda$. The results indicate that $\lambda$ is insensitive in the range of $[10^{1}, 10^{3}]$.
Additionally, the cluster number $K$ in the model is changeable.
As shown in Figure~\ref{tsne0}, on DIGIT and NoisyDIGIT, we utilize $t$-SNE \cite{maaten2008visualizing} to visualize the scaled representations learned with different cluster numbers.
For these two datasets, we mark the representations with ground-truth labels and the truth $K$ is 10.
We could observe that MVCAN can learn clear cluster structures on the datasets with noise interference as that on normal ones, indicating the robustness of our method for noisy views.
When $K$ is small (\eg, $K=5$), we can observe that the representations of digits with similar shapes are gathered together, \eg, ``4-7-9'' in Figure~\ref{tsne0}(a).
When $K$ is large (\eg, $K=15$), we observe that the representations of the same digits are separated into two clusters, \eg, ``5'' in Figure~\ref{tsne0}(c) (colored in yellow).
Consequently, MVCAN could learn the coarse-grained or fine-grained cluster structures by changing the prior knowledge of $K$.

\begin{figure}[!t]
\centering
  \begin{subfigure}{0.32\linewidth}
    \includegraphics[width=\linewidth]{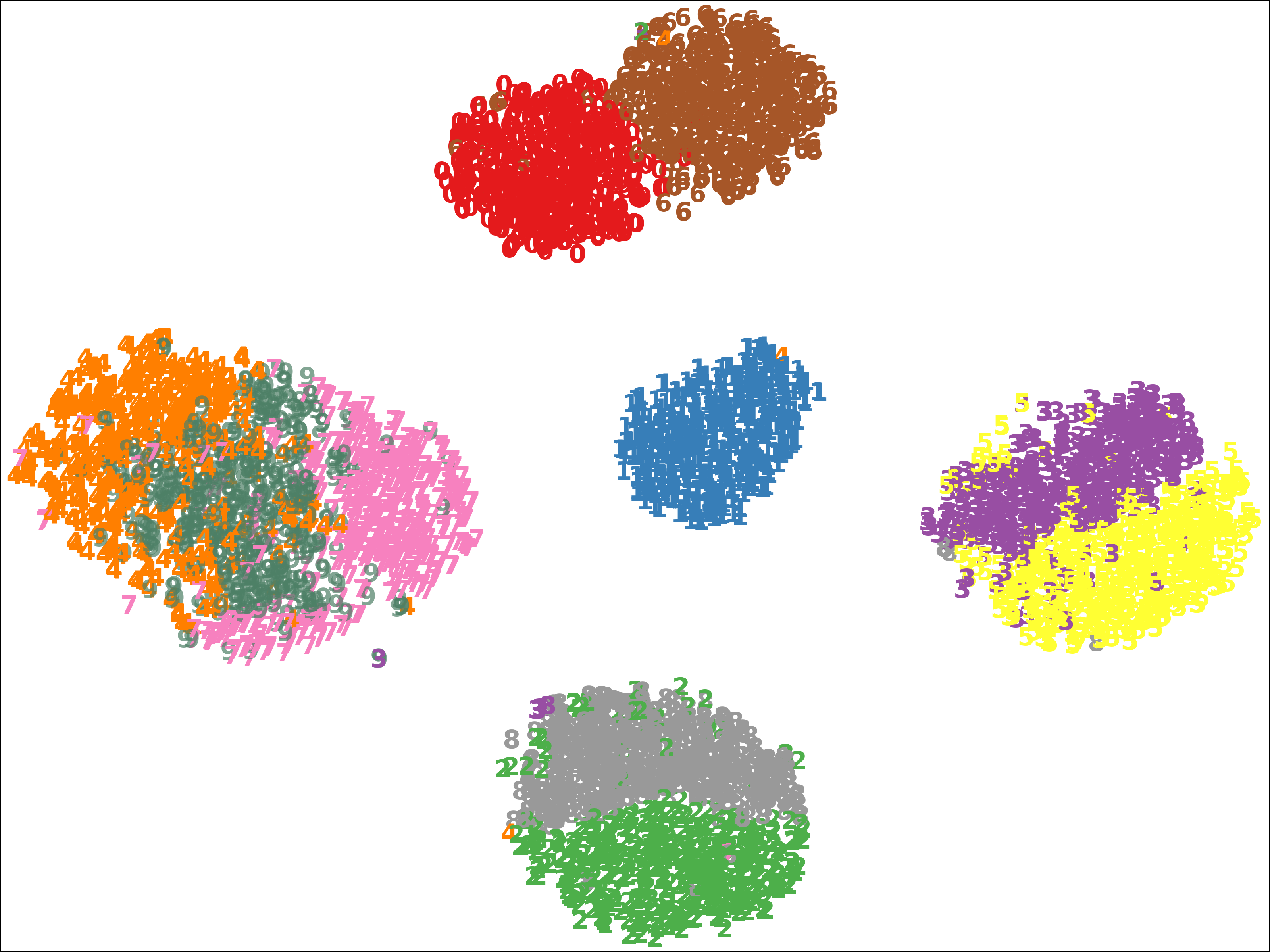}
    \caption{$K=5$}
  \end{subfigure}
  \begin{subfigure}{0.32\linewidth}
    \includegraphics[width=\linewidth]{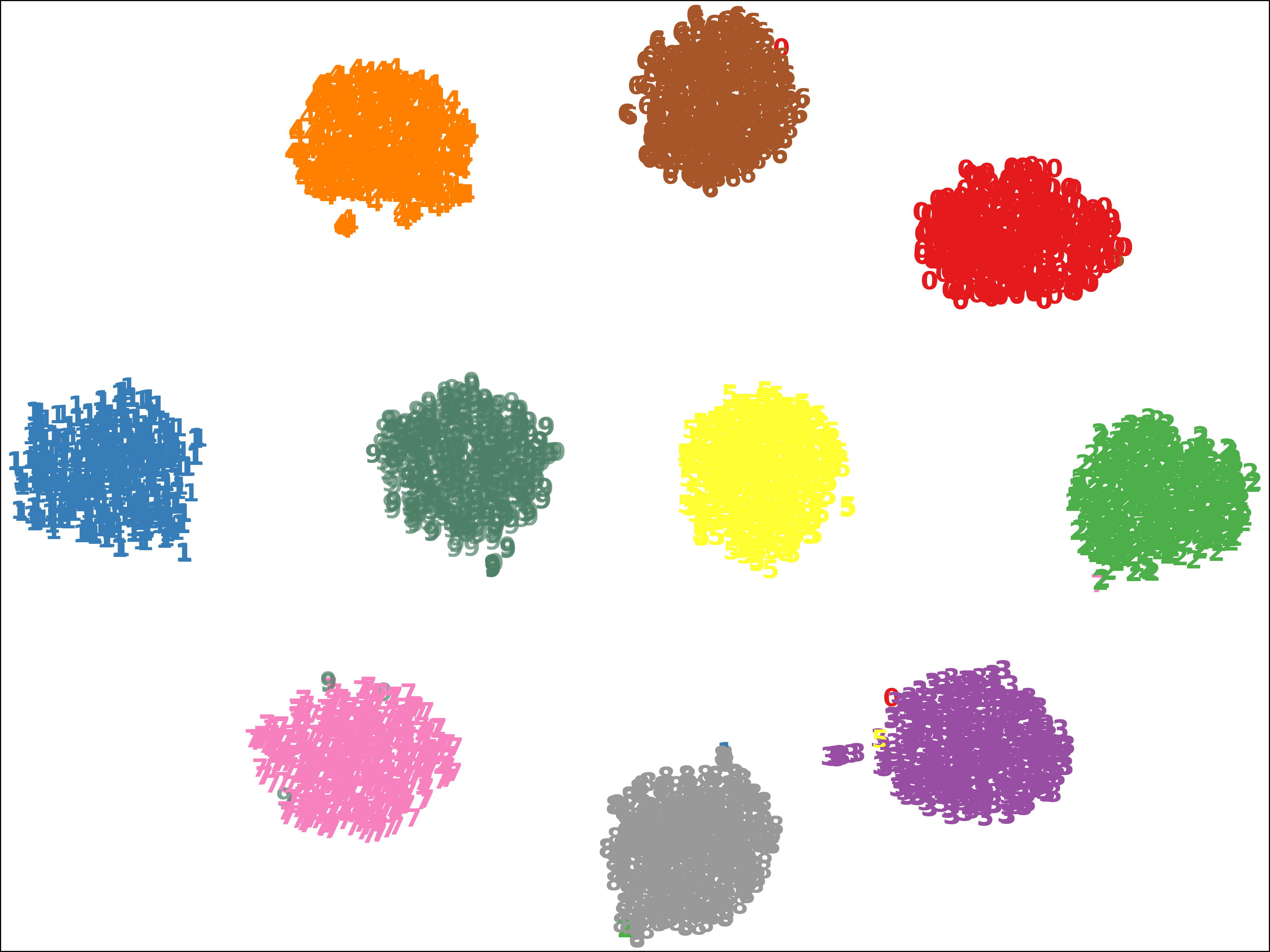}
    \caption{$K=10$}
  \end{subfigure}
  \begin{subfigure}{0.32\linewidth}
    \includegraphics[width=\linewidth]{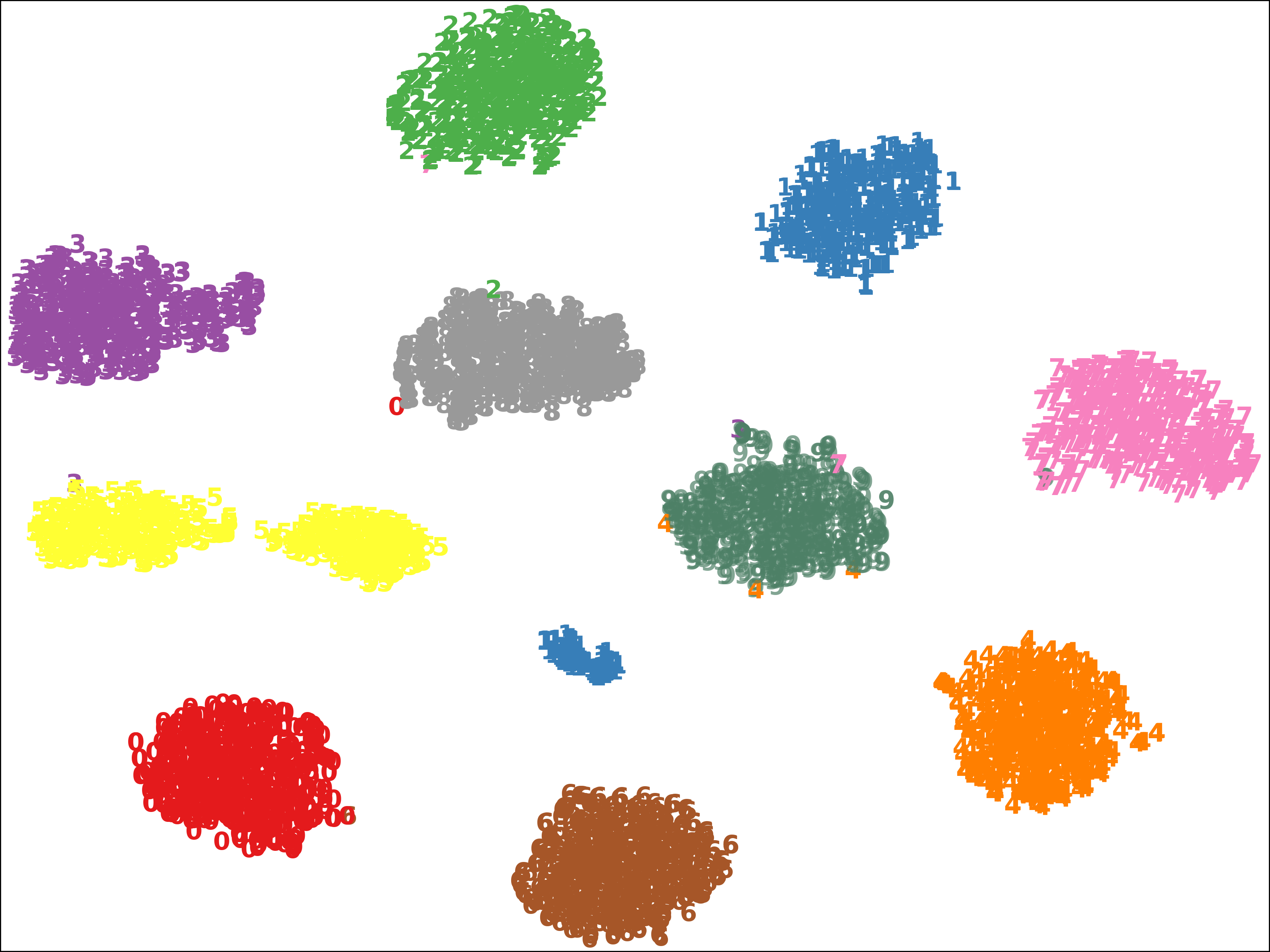}
    \caption{$K=15$}
  \end{subfigure}
  \begin{subfigure}{0.32\linewidth}
    \includegraphics[width=\linewidth]{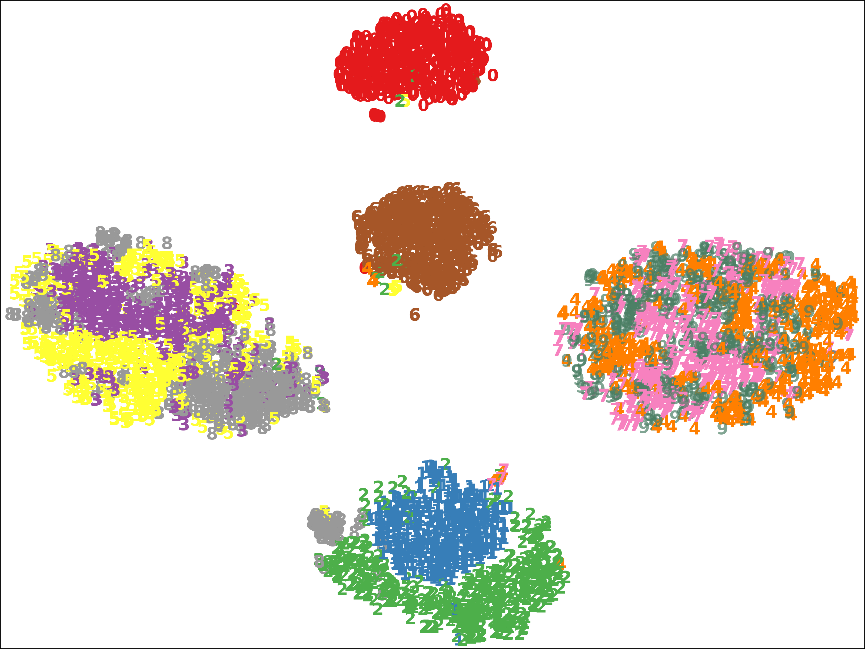}
    \caption{$K=5$}
  \end{subfigure}
  \begin{subfigure}{0.32\linewidth}
    \includegraphics[width=\linewidth]{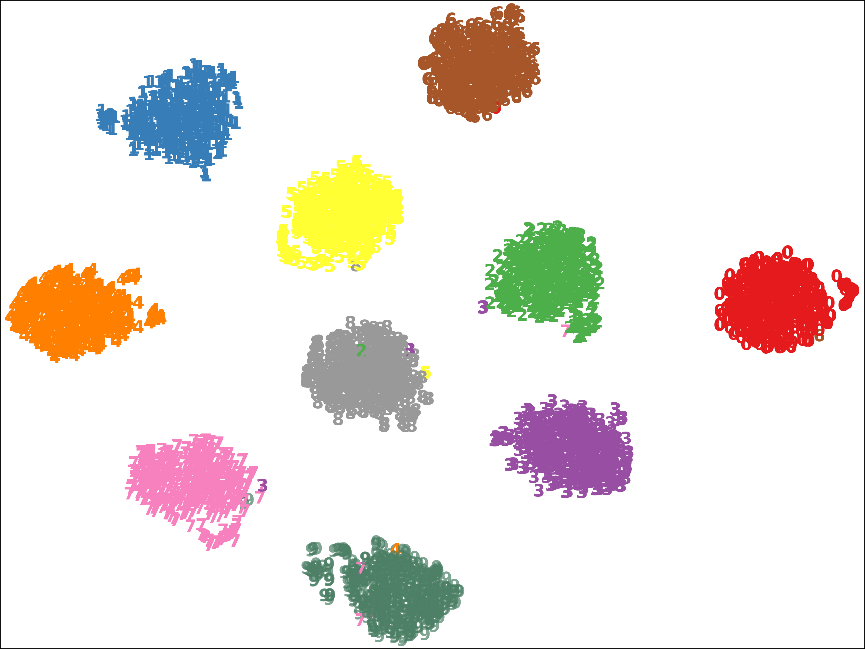}
    \caption{$K=10$}
  \end{subfigure}
  \begin{subfigure}{0.32\linewidth}
    \includegraphics[width=\linewidth]{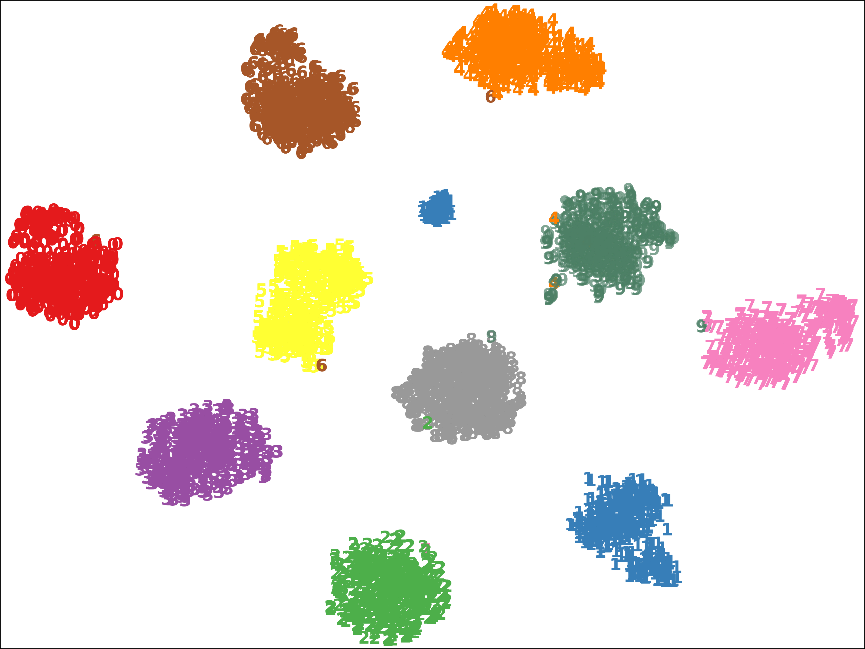}
    \caption{$K=15$}
  \end{subfigure}
\caption{Visualization of the representations learned with different prior cluster numbers on DIGIT (a-c) and NoisyDIGIT (d-f).}\label{tsne0}
\end{figure}

\section{Conclusion}\label{Conclusion}

This paper investigates the pervasive but challenging problem in multi-view clustering, \ie, Noisy-View Drawback (NVD).
To mitigate this issue, we proposed a novel deep multi-view clustering method dubbed MVCAN.
Comprehensive theoretical and empirical results verified the superior performance of MVCAN, together with the effectiveness of our proposed two conditions in clustering objective and of our two-level multi-view iteration in optimization.

We expect our work to produce beneficial impacts for self-supervised multi-view learning where the information qualities obtained from different views are difficult to be guaranteed and thus they bring noisy information. For example, if the views from some sensors/modalities are faulty or unreliable in unsupervised ensemble environments, it might be promising to take the NVD into account to design algorithms as did in MVCAN.
In addition, future work still needs to be devoted to reducing the sensitivity of parameter initialization in deep model and the class number.

\section*{Acknowledgment}
This work was supported in part by the National Key Research \& Development Program of China under Grant 2022YFA1004100,
in part by the Medico-Engineering Cooperation Funds from University of Electronic Science and Technology of China under Grant ZYGX2022YGRH009 and Grant ZYGX2022YGRH014.

{
    \small
    \bibliographystyle{ieeenat_fullname}
    \bibliography{main}
}

\iftrue

\newpage
\onecolumn

\setcounter{figure}{0}
\setcounter{table}{0}
\setcounter{theorem}{0}
\setcounter{definition}{0}
\setcounter{equation}{0}

\section*{Appendix A: Framework, Related Work, and Notations}\label{rl}

\begin{figure}[!ht]
\centering
\includegraphics[height=1.9in]{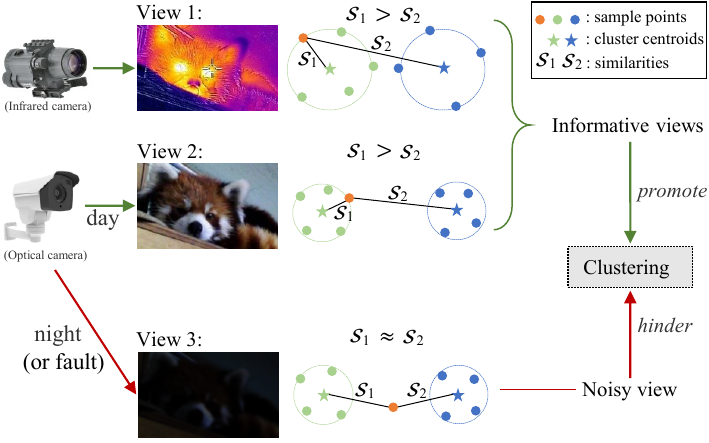}
\caption{Illustration of the noisy-view drawback (NVD).
The informative views have distinct representation similarities, which can promote clustering due to their consistency and complementarity.
However, the noisy views have indistinct representation similarities. For instance, the views extracted from faulty or inapplicable sensors will bring noisy information and hinder clustering, making it be of practical significance to investigate the noise robustness.
}\label{nvp}
\end{figure}

\begin{figure*}[!ht]
\centering
\includegraphics[width=5.8in]{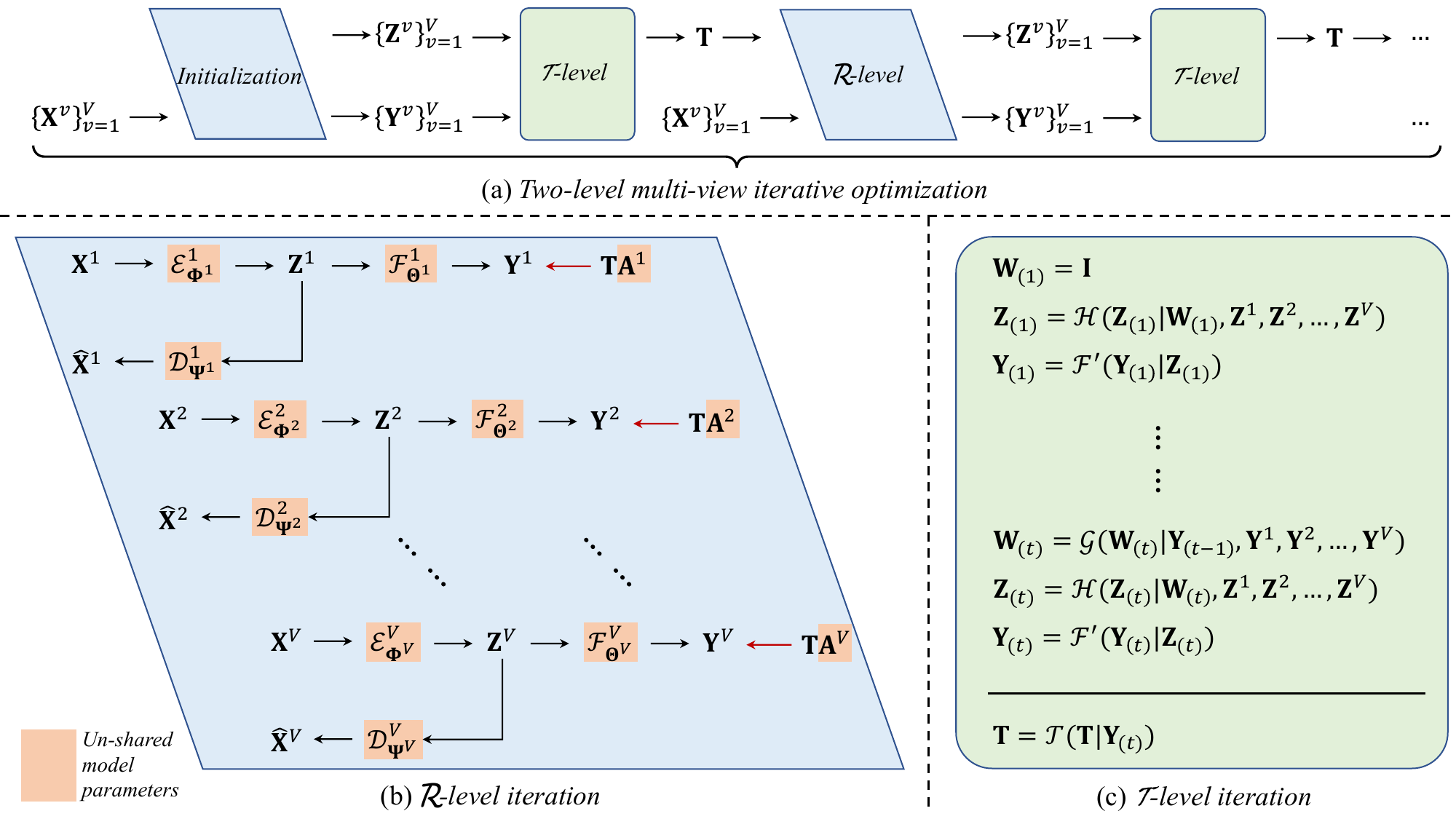}
\caption{The frame diagram of our MVCAN.
Specifically,
(a) MVCAN utilizes the two-level multi-view iterative optimization strategy to train the model for clustering multi-view data $\{\mathbf{X}^v\}_{v=1}^V$,
where
(b) $\mathcal{R}$-level iteration adjusts $\mathbf{T}$ for corresponding to each $\mathbf{Y}^v$ by $\mathbf{A}^v$ and updates the decoupled model with un-shared network parameters for $V$ views, to obtain their individual representations $\{\mathbf{Z}^v\}_{v=1}^V$ and clustering soft labels $\{\mathbf{Y}^v\}_{v=1}^V$;
(c) $\mathcal{T}$-level iteration is established to infer the robust learning target $\mathbf{T}$ based on the already learned $\{\mathbf{Z}^v\}_{v=1}^V$ and $\{\mathbf{Y}^v\}_{v=1}^V$ (including the iterations of scaling matrix $\mathbf{W}_{(t)}$, scaled representation $\mathbf{Z}_{(t)}$, and robust soft labels $\mathbf{Y}_{(t)}$). \emph{Note:}~$\mathbf{W}_{(t)}$ plays the role of scaling the values of different dimensions of $\mathbf{Z}_{(t)}$, so we name it the scaling matrix. $\mathbf{Y}_{(t)}$ finally predicts the clustering results.
}\label{framework}
\end{figure*}

\begin{table*}[!ht]
  \centering
  \caption{Notations and their descriptions in this paper.} \label{tab: Notations}
    \begin{tabular}{|c|l|}
    \hline
    \multicolumn{1}{|c|}{Notations} & \multicolumn{1}{c|}{Descriptions} \\ \hline
    $N$  & the number of samples or the data size \\ \hline
    $V$  & the number of views in the multi-view dataset \\ \hline
    $K$  & the number of clusters in the multi-view dataset \\ \hline
    $i$,$j$,$k$,$v$,$t$,$e$,$m^*$  & the index notations \\ \hline
    $\mathbf{X}$  & the data for a single-view method, $\mathbf{x}_i \in \mathbf{X}$ \\ \hline
    $\mathbf{Z}$  & the learned representation in a single-view method, $\mathbf{z}_i \in \mathbf{Z}$ \\ \hline
    $\mathbf{Y}$  & the learned soft labels in a single-view method, $y_{ij} \in \mathbf{Y}$ \\ \hline
    $\mathbf{T}$  & the learning target, $\mathbf{t}_i \in \mathbf{T}$, $t_{ij} \in \mathbf{T}$ \\ \hline
    $\mathbf{X}^v$  & the $v$-th view's data for a multi-view method, $\mathbf{x}^v_i \in \mathbf{X}^v$ \\ \hline
    $\mathbf{Z}^v$  & the $v$-th view's representations learned by a multi-view method, $\mathbf{z}^v_i \in \mathbf{Z}^v$\\ \hline
    $\mathbf{Y}^v$  & the $v$-th view's soft labels in a multi-view method, $\mathbf{y}^v_i \in \mathbf{Y}^v$, $y^v_{ij} \in \mathbf{Y}^v$ \\ \hline
    $\Hat{\mathbf{X}}^v$  & the $v$-th view's reconstructed data for a multi-view method \\ \hline
    $D_v$         & the dimensionality of $\mathbf{X}^v$ and $\Hat{\mathbf{X}}^v$ \\ \hline
    $d_v$         & the dimensionality of $\mathbf{Z}^v$ \\ \hline
    $\mathcal{E}_{\mathbf{\Phi}}$ & the encoder with parameter $\mathbf{\Phi}$ for a single-view method \\ \hline
    $\bm{\upmu}_j$ & the $j$-th cluster centroid in a single-view method \\ \hline
    $\mathcal{E}^v_{\mathbf{\Phi}^v}$ & the $v$-th view's encoder with parameter $\mathbf{\Phi}^v$ for a multi-view method \\ \hline
    $\mathcal{D}^v_{\mathbf{\Psi}^v}$ & the $v$-th view's decoder with parameter $\mathbf{\Psi}^v$ for a multi-view method \\ \hline
    $\bm{\upmu}_j^v$ & the $j$-th cluster centroid in the $v$-th view for a multi-view method \\ \hline
    $\mathcal{F}_{\mathbf{\Theta}}$ & the fusion module $\mathcal{F}$ with the parameter $\mathbf{\Theta}$ shared for multiple views \\ \hline
    $\mathcal{F}^v_{\mathbf{\Theta}^v}$ & the $v$-th view's clustering module $\mathcal{F}^v$ with the parameter $\mathbf{\Theta}^v$ in MVCAN \\ \hline
    $\mathcal{D}(\mathbf{a},\mathbf{b})$ & the squared Euclidean distance between representations $\mathbf{a}$ and $\mathbf{b}$ \\ \hline
    $\mathcal{S}(\mathbf{a},\mathbf{b})$ & the similarity between representations $\mathbf{a}$ and $\mathbf{b}$  \\ \hline
    $\varepsilon$ & a sufficiently small value in Definitions \\ \hline
    $\delta$      & a threshold in Theorems \\ \hline
    $\mathbf{L}$     &  the ground-truth label matrix \\ \hline
    $\mathbf{A}$     &  the matching matrix to calculate the clustering accuracy \\ \hline
    $\mathbf{\check{Y}}$ & the transformed prediction label matrix,  $\mathbf{\check{y}}_i \in  \mathbf{\check{Y}}$, $\check{y}_{ij} \in \mathbf{\check{Y}}$ \\ \hline
    $\mathbf{A}^v$     &  the matching matrix in the Condition 2 of MVCAN \\ \hline
    $\mathbf{I}$, $\mathbf{I}_K$, $\mathbf{I}^v$     &  the unit matrices \\ \hline
    $\mathbf{W}_{(t)}$     &  the scaling matrix in MVCAN, $w^v_{(t)} \in \mathbf{W}_{(t)}$ \\ \hline
    $\mathbf{Z}_{(t)}$     &  the scaled representation in MVCAN, $\mathbf{z}_{i(t)} \in \mathbf{Z}_{(t)}$ \\ \hline
    $\mathbf{Y}_{(t)}$     &  the robust soft labels in MVCAN, $y_{ij(t)} \in \mathbf{Y}_{(t)}$ \\ \hline
    $\mathbf{c}_{j(t)}$    &  the $j$-th cluster centroid of $\mathbf{Z}_{(t)}$ in the $t$-th iteration \\ \hline
    $\mathcal{L}_r^v$  & the representation learning objective of the $v$-th view in MVCAN \\ \hline
    $\mathcal{L}_c^v$  & the clustering objective of the $v$-th view in MVCAN \\ \hline
    $\mathcal{L}_K$  & the clustering objective of $K$-means \\ \hline
    $\mathcal{L}^v$  & the loss function to train the deep model of the $v$-th view in the $\mathcal{R}$-level iteration of MVCAN \\ \hline
    $\lambda$     &  the hyper-parameter to achieve the trade-off between $\mathcal{L}_r^v$ and $\mathcal{L}_c^v$ \\ \hline
    $T_1$     &  the iteration number in the $\mathcal{T}$-level iterative optimization \\ \hline
    $T_2$     &  the iteration number in the $\mathcal{R}$-level iterative optimization \\ \hline
    $E$       &  the number of training epochs \\ \hline
    $\arg \max_j y_{ij(t)}$ & the final cluster assignment for the $i$-th sample based on the robust soft label $y_{ij(t)} \in \mathbf{Y}_{(t)}$ \\ \hline
    \end{tabular}
\end{table*}

In the literature, existing multi-view clustering (MVC) methods could be divided into two groups, \ie, traditional methods and deep methods. In this paper, we briefly introduce traditional MVC methods and focus on deep MVC methods as follows.

Traditional MVC methods learn the representations of multi-view data for clustering by leveraging the classical machine learning technologies, such as graph MVC \cite{wen2018incomplete,zhan2018multiview,peng2019comic}, subspace MVC \cite{cao2015diversity,li2019reciprocal,zhou2019dual}, kernel MVC \cite{tzortzis2012kernel,liu2018late,liu2021one}, and matrix factorization MVC \cite{wang2018multiview,yang2020uniform}. Typically, many traditional MVC methods are limited by their representation capability of shallow models such that they usually perform clustering tasks with the limited feature forms and data scales \cite{wang2013multi,wang2016iterative,nie2016parameter,zhang2017latent}.

During past years, many deep MVC methods have been proposed \cite{xu2019adversarial,Huang0ZL020,yang2022robust}.
The mainstream techniques used in deep MVC methods belong to self-supervised learning.
Deep autoencoder \cite{baldi2012autoencoders} is the most poplar model used for deep MVC \cite{xu2019adversarial,lin2021completer,xu2022deep}, which conducts data reconstruction processes and can create basic self-supervision signals for unsupervised clustering tasks.
Some deep MVC methods are based on the aforementioned machine learning technologies, such as deep subspace MVC \cite{abavisani2018deep} and deep graph MVC \cite{fan2020one2multi}. These methods usually leverage the representation learning capability of deep autoencoder networks as well as the data mining capability of traditional technologies.
Additionally, many work investigate various deep learning technologies to develop deep MVC methods. For example, some work \cite{xu2019adversarial,li2019deep,zhou2020end,wang2021generative} combine GAN~\cite{goodfellow2014generative} with clustering objective for multiple views. Some work \cite{lin2021completer,trostenMVC} introduce the contrastive learning \cite{tian2019contrastive,lin2021completer} to learn the consistency of multi-view representations for clustering.
Establishing clustering pseudo labels is an important approach to achieve end-to-end MVC.
In research work, one of the most popular learning paradigms in deep MVC is based on the single-view clustering (SVC) method with self-training, entitled deep embedded clustering (DEC \cite{xie2016unsupervised}), \eg, \cite{xu2019adversarial,xie2020joint,fan2020one2multi,wen2020dimc,9839616,wang2021generative}, which establishes pseudo labels and then produces self-supervised signals to learn the clustering-oriented representations.
Compared with traditional methods, deep MVC methods usually have superior representation capability and scalability.

In practical multi-view scenarios, sample features extracted from some views could be noisy and might have harmful information for MVC.
Although some efforts take the view quality into account and propose weighting strategies in the fusion of multiple views \cite{nie2018multiview,ye2018multi,wen2020dimc,trostenMVC,wang2021generative}, the low-quality views could be treated as a special case of noise and both the noisy views and the low-quality views will make it difficult to uncover the effective cluster patterns for most existing MVC methods.
Because the model network parameters in fusion methods \cite{wang2016iterative,zhang2017latent,zhou2019dual,wang2019multi,liu2021one,wen2020dimc,trostenMVC,wang2021generative} usually are shared for multiple views, and many methods hope to obtain consistent clustering predictions for different views \cite{nie2016parameter,wang2018multiview,Zhan8052206,zhou2020end,tang2022deep,xu2022deep,nie2018multiview,ye2018multi,9839616},
the noisy-view drawback (NVD) is easy to make the learning of all views degenerate, and thus losing the useful consistency and complementarity among multiple views.
The work most similar to ours are the precursor deep MVC studies \cite{xu2022deep,9839616}, which construct a consistent self-supervised clustering objective for multiple views and design a weighted concatenation of multi-view features, respectively.
As NVD might cause that MVC is not necessarily better than SVC and limit the application of MVC methods, it is a meaningful but challenging goal to address the NVD and we require ongoing solutions.
In this paper, we modify the multi-view clustering objective to allow for inconsistent clustering results across different views in parameter-decoupled models, and propose an iterative approach to learn the robust self-supervised target for addressing NVD.

\section*{Appendix B: Theoretical Analysis and Proofs}
\begin{definition}
Denoting $\mathcal{D}(\mathbf{a},\mathbf{b})=\left\| \mathbf{a} - \mathbf{b} \right\|_2^2$ as squared Euclidean distance between the representations $\mathbf{a}$ and $\mathbf{b}$,
\begin{equation}
    \mathcal{S}(\mathbf{a},\mathbf{b}):= \frac{1}{1+\mathcal{D}(\mathbf{a},\mathbf{b})} \in (0, 1]
    \nonumber
\end{equation}
is defined as the representation similarity between $\mathbf{a}$ and $\mathbf{b}$.
Formally, $y_{ij}^v \in (0, 1]$ holds given Eq.~(1).
\end{definition}
Letting $\bm{\upmu}_a^v$ and $\bm{\upmu}_b^v$ denote the $a$-th and the $b$-th cluster centroids in the representation space of $\mathbf{Z}^v$, $\mathbf{z}_i^v \in \mathbf{Z}^v$, we have $\mathcal{S}(\mathbf{z}_i^v,\bm{\upmu}_a^v) > \mathcal{S}(\mathbf{z}_i^v,\bm{\upmu}_b^v) \Rightarrow y_{ia}^v > y_{ib}^v$ and $\mathcal{S}(\mathbf{z}_i^v,\bm{\upmu}_a^v) < \mathcal{S}(\mathbf{z}_i^v,\bm{\upmu}_b^v) \Rightarrow y_{ia}^v < y_{ib}^v$,
because
\begin{equation}
\begin{aligned}
y_{ij}^v &= \frac{(1+\lVert \mathbf{z}_i^v-\bm{\upmu}_j^v\rVert^2_2)^{-1}}
    {\sum_{j=1}^K(1+\lVert \mathbf{z}_i^v-\bm{\upmu}_{j}^v\rVert^2_2)^{-1}} \\
&= \frac{\mathcal{S}(\mathbf{z}_i^v,\bm{\upmu}_j^v)}{\sum_{j=1}^{K} \mathcal{S}(\mathbf{z}_i^v,\bm{\upmu}_j^v)} \\
&\propto \mathcal{S}(\mathbf{z}_i^v,\bm{\upmu}_j^v).
\end{aligned}
\nonumber
\end{equation}

\begin{definition} ($\varepsilon$, $\mathbf{z}$, $\bm{\upmu}$ - Noisy-view)
For $\forall \mathbf{z}_i^v \in \mathbf{Z}^v$, it belongs to the noisy view if $\exists \bm{\upmu}_a^v, \bm{\upmu}_b^v$, and $\varepsilon > 0$ such that $|\mathcal{D}(\mathbf{z}_i^v,\bm{\upmu}_a^v) - \mathcal{D}(\mathbf{z}_i^v,\bm{\upmu}_b^v) |< \varepsilon$, $\mathcal{S}(\mathbf{z}_i^v,\bm{\upmu}_a^v) \approx \mathcal{S}(\mathbf{z}_i^v,\bm{\upmu}_b^v)$, and $y_{ia}^v \approx y_{ib}^v$, where $\varepsilon$ is a sufficiently small value.
Otherwise, $\mathbf{z}_i^v$ is the informative view.
\end{definition}

\begin{theorem}\label{the:theorm01}
Denoting $\mathbf{\check{Y}} = \mathbf{L} \mathbf{A}$, where $\mathbf{A} \in \{0,1\}^{K\times K}$ makes $\mathbf{\check{Y}}$ maximally match the learning target $\mathbf{T}$. Then, the clustering accuracy can be calculated as $ACC = \frac{1}{N} \left(N - \frac{1}{2}\| \mathbf{\check{Y}} - \mathbf{T} \|_F^2\right) = 1 - \frac{1}{2N}\| \mathbf{\check{Y}} - \mathbf{T} \|_F^2$. In Eq.~(3), if $\mathbf{\Theta}$ is shared by multiple views and their soft labels $\{\mathbf{Y}^v\}_{v=1}^V$ have consistent learning target $\mathbf{T}$, we have
\begin{equation}\label{acca}
ACC \leq 1- \frac{1}{2N} \left (\left(\mathop{ \max_{1\leq m \leq V}}\| \mathbf{\check{Y}} - \mathcal{F}_{\mathbf{\Theta}}(\mathbf{Y}^m|\{\mathbf{Z}^v\}_{v=1}^V) \|_F^2 \right) - \| \mathbf{T} - \mathcal{F}_{\mathbf{\Theta}}(\mathbf{Y}^v|\{\mathbf{Z}^v\}_{v=1}^V) \|_F^2 \right).
\nonumber
\end{equation}
\end{theorem}
\begin{proof}
Motivated by the proof in \cite{nie2018multiview}, we first consider the following equation:
\begin{equation}
\begin{aligned}
&\| \mathbf{\check{Y}} - \mathcal{F}_{\mathbf{\Theta}}(\mathbf{Y}^v|\{\mathbf{Z}^v\}_{v=1}^V) \|_F^2 - \| \mathbf{T} - \mathcal{F}_{\mathbf{\Theta}}(\mathbf{Y}^v|\{\mathbf{Z}^v\}_{v=1}^V) \|_F^2 \\
=& \| \mathbf{\check{Y}} \|_F^2 - \| \mathbf{T} \|_F^2 + \| \mathcal{F}_{\mathbf{\Theta}}(\mathbf{Y}^v|\{\mathbf{Z}^v\}_{v=1}^V) \|_F^2 - \| \mathcal{F}_{\mathbf{\Theta}}(\mathbf{Y}^v|\{\mathbf{Z}^v\}_{v=1}^V) \|_F^2 - 
2Tr\left( \left( \mathbf{\check{Y}} - \mathbf{T} \right)^T \mathcal{F}_{\mathbf{\Theta}}(\mathbf{Y}^v|\{\mathbf{Z}^v\}_{v=1}^V) \right) \\
=& \left(N - \| \mathbf{T} \|_F^2 \right) + \left(\| \mathcal{F}_{\mathbf{\Theta}}(\mathbf{Y}^v|\{\mathbf{Z}^v\}_{v=1}^V) \|_F^2 - \| \mathcal{F}_{\mathbf{\Theta}}(\mathbf{Y}^v|\{\mathbf{Z}^v\}_{v=1}^V) \|_F^2\right) - 
2Tr\left( \left( \mathbf{\check{Y}} - \mathbf{T} \right)^T \mathcal{F}_{\mathbf{\Theta}}(\mathbf{Y}^v|\{\mathbf{Z}^v\}_{v=1}^V) \right) \\
=&\left(N - \| \mathbf{T} \|_F^2 \right) + 2Tr\left( \left(\mathbf{T} - \mathbf{\check{Y}} \right)^T \mathcal{F}_{\mathbf{\Theta}}(\mathbf{Y}^v|\{\mathbf{Z}^v\}_{v=1}^V) \right) \\
=&\left(N - \| \mathbf{T} \|_F^2 \right) + 2Tr\left( \mathcal{F}_{\mathbf{\Theta}}(\mathbf{Y}^v|\{\mathbf{Z}^v\}_{v=1}^V) \left(\mathbf{T} - \mathbf{\check{Y}} \right)^T \right),
\nonumber
\end{aligned}
\end{equation}
where the term of matrix trace could satisfy the inequality:
\begin{equation}
\begin{aligned}
&2Tr\left( \mathcal{F}_{\mathbf{\Theta}}(\mathbf{Y}^v|\{\mathbf{Z}^v\}_{v=1}^V) \left(\mathbf{T} - \mathbf{\check{Y}} \right)^T \right) \\
=& 2 \sum_{i=1}^N \mathbf{y}^v_i (\mathbf{t}_i^T - \mathbf{\check{y}}_i^T) \\
=& 2 \sum_{i=1}^N \mathbf{y}^v_i \mathbf{t}_i^T - \mathbf{y}^v_i \mathbf{\check{y}}_i^T\\
=& 2 \sum_{\mathbf{t}_i \neq \mathbf{\check{y}}_i} \left( \sum_{k=1}^K y^v_{ik}t_{ik}  \right) - \left( \sum_{k=1}^K y^v_{ik}\check{y}_{ik}  \right)\\
\leq& 2 \sum_{\mathbf{t}_i \neq \mathbf{\check{y}}_i} \left | \left( \sum_{k=1}^K y^v_{ik}t_{ik}  \right) - \left( \sum_{k=1}^K y^v_{ik}\check{y}_{ik}  \right) \right|\\
\leq& 2 \sum_{\mathbf{t}_i \neq \mathbf{\check{y}}_i} |1 - 0|\\
=& 2 \sum_{\mathbf{t}_i \neq \mathbf{\check{y}}_i} 1\\
=& \| \mathbf{\check{Y}} - \mathbf{T} \|_F^2.
\nonumber
\end{aligned}
\end{equation}
Then, we have
\begin{equation}
\begin{aligned}
\| \mathbf{\check{Y}} - \mathbf{T} \|_F^2 
&\geq \| \mathbf{\check{Y}} - \mathcal{F}_{\mathbf{\Theta}}(\mathbf{Y}^v|\{\mathbf{Z}^v\}_{v=1}^V) \|_F^2 - \| \mathbf{T} - \mathcal{F}_{\mathbf{\Theta}}(\mathbf{Y}^v|\{\mathbf{Z}^v\}_{v=1}^V) \|_F^2 
- N + \| \mathbf{T} \|_F^2 \\
&\geq  \left(\mathop{ \max_{1\leq m \leq V}}\| \mathbf{\check{Y}} - \mathcal{F}_{\mathbf{\Theta}}(\mathbf{Y}^m|\{\mathbf{Z}^v\}_{v=1}^V) \|_F^2 \right) - \| \mathbf{T} - \mathcal{F}_{\mathbf{\Theta}}(\mathbf{Y}^v|\{\mathbf{Z}^v\}_{v=1}^V) \|_F^2 - N + \left( \mathop{ \max_{\mathbf{T}}} \| \mathbf{T} \|_F^2 \right)\\
&\geq  \left(\mathop{ \max_{1\leq m \leq V}}\| \mathbf{\check{Y}} - \mathcal{F}_{\mathbf{\Theta}}(\mathbf{Y}^m|\{\mathbf{Z}^v\}_{v=1}^V) \|_F^2 \right) - \| \mathbf{T} - \mathcal{F}_{\mathbf{\Theta}}(\mathbf{Y}^v|\{\mathbf{Z}^v\}_{v=1}^V) \|_F^2.\\
\nonumber
\end{aligned}
\end{equation}
Furthermore, the clustering accuracy becomes
\begin{equation}
\begin{aligned}
ACC &= \frac{1}{N} \left(N - \frac{1}{2}\| \mathbf{\check{Y}} - \mathbf{T} \|_F^2\right) = 1 - \frac{1}{2N}\| \mathbf{\check{Y}} - \mathbf{T} \|_F^2 \\
&\leq 1- \frac{1}{2N} \left(\left(\mathop{ \max_{1\leq m \leq V}}\| \mathbf{\check{Y}} - \mathcal{F}_{\mathbf{\Theta}}(\mathbf{Y}^m|\{\mathbf{Z}^v\}_{v=1}^V) \|_F^2 \right) - \| \mathbf{T} - \mathcal{F}_{\mathbf{\Theta}}(\mathbf{Y}^v|\{\mathbf{Z}^v\}_{v=1}^V) \|_F^2 \right),
\nonumber
\end{aligned}
\end{equation}
which completes this proof.
\end{proof}

\textbf{A specific example for illustrating Theorem \ref{the:theorm01}}.
Given the DEC framework as a specific example, we denote $\mathbf{Z}^m$ and $\mathbf{Z}^n$, respectively, as the representations of the informative view and the noisy/low-quality view. The shared parameters are the $K$ cluster centroids $\mathbf{U} = [\bm{\upmu}_1, \bm{\upmu}_2,\dots, \bm{\upmu}_K]$, which are the common cluster centroids for the all views’ representations. Based on $\mathbf{Z}^m$ and $\mathbf{U}$ as formulated in Eq.~(1), the informative view's representation $\mathbf{Z}^m$ has clear cluster structures and thus its clustering loss $\mathcal{L}_c^m$ is easy to minimize. However, based on $\mathbf{Z}^n$ and the same $\mathbf{U}$ as formulated in Eq.~(1), the noisy/low-quality view's representation $\mathbf{Z}^n$ has unclear cluster structures and thus its clustering loss $\mathcal{L}_c^n$ is hard to minimize. On the one hand, since the training objectives of multiple views have the same learning target $\mathbf{T}$, the large $\mathcal{L}_c^n$ will dominate the small $\mathcal{L}_c^m$ if their optimizations are not decoupled. On the other hand, the minimization of the noisy view’s clustering loss $\mathcal{L}_c^n$ will change the shared $K$ cluster centroids $\mathbf{U}$, causing that $\mathbf{U}$ is not suitable for the informative view’s representations $\mathbf{Z}^m$. As a result, the NVD of low-quality or noisy views destroys the method's robustness.

\begin{theorem}\label{the:kkkk}
Denoting $\mathcal{L}_{K}$ as the $K$-means objective, $\mathcal{L}_{K}(\mathbf{Z}_{(t)})$ is equivalent to 
punishing different scaling factors on $\{\mathcal{L}_{K}(\mathbf{Z}^v)\}_{v=1}^V$ under the consistency constraint of multiple views' cluster centroids.
\end{theorem}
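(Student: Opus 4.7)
The plan is to expand $\mathcal{L}_K(\mathbf{Z}_{(t)})$ by exploiting the block structure of $\mathbf{W}_{(t)}$ and the horizontal-concatenation structure of $\mathbf{Z}_{(t)}$, then reparameterize the joint cluster centroids so as to expose one scaled per-view $K$-means objective for each view. The end goal is an identity of the form $\mathcal{L}_K(\mathbf{Z}_{(t)}) = \sum_{v=1}^V (w^v_{(t)})^2 \,\mathcal{L}_K(\mathbf{Z}^v)$ evaluated at a \emph{common} cluster assignment across views, which is precisely what the theorem claims.

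First, by Eq.~(\ref{z}) the $i$-th row of $\mathbf{Z}_{(t)}$ has the block form $\mathbf{z}_{i(t)} = [\,w^1_{(t)} \mathbf{z}_i^1,\ \dots,\ w^V_{(t)} \mathbf{z}_i^V\,]$, and any centroid $\mathbf{c}_{j(t)} \in \mathbb{R}^{\sum_v d_v}$ admits the matching partition $\mathbf{c}_{j(t)} = [\,\mathbf{c}_{j(t)}^1,\ \dots,\ \mathbf{c}_{j(t)}^V\,]$ with $\mathbf{c}_{j(t)}^v \in \mathbb{R}^{d_v}$. Second, I would substitute this partition into $\|\mathbf{z}_{i(t)} - \mathbf{c}_{j(t)}\|_2^2$ to obtain $\sum_{v=1}^V \|w^v_{(t)} \mathbf{z}_i^v - \mathbf{c}_{j(t)}^v\|_2^2$, and perform the bijective change of variables $\mathbf{c}_{j(t)}^v = w^v_{(t)}\, \bm{\upmu}_j^v$, which is valid since $w^v_{(t)} > 0$ by the exponential form in Eq.~(\ref{www}). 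This factors out the scaling factors as $\|\mathbf{z}_{i(t)} - \mathbf{c}_{j(t)}\|_2^2 = \sum_{v=1}^V (w^v_{(t)})^2 \|\mathbf{z}_i^v - \bm{\upmu}_j^v\|_2^2$. Third, letting $\sigma(i) = \arg\min_j \|\mathbf{z}_{i(t)} - \mathbf{c}_{j(t)}\|_2^2$ denote the $K$-means assignment on the concatenated representation and swapping the order of summation over $i$ and $v$, I arrive at $\mathcal{L}_K(\mathbf{Z}_{(t)}) = \sum_{v=1}^V (w^v_{(t)})^2 \sum_{i=1}^N \|\mathbf{z}_i^v - \bm{\upmu}_{\sigma(i)}^v\|_2^2$, whose inner sum is the $K$-means cost of view $v$ at per-view centroids $\{\bm{\upmu}_j^v\}_{j=1}^K$ and at the shared assignment $\sigma$.

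The step I expect to require the most care is making the \emph{consistency constraint} precise. Clustering the concatenated vectors forces a single index $\sigma(i)$ per sample, which simultaneously ties the $V$ per-view sub-objectives to a common partition of $\{1,\dots,N\}$ into $K$ clusters and to a common matching of clusters across views; the reparameterization preserves this because $w^v_{(t)}$ does not depend on $j$. I would therefore emphasize that the equivalence is not with $V$ independent $K$-means problems, but with a single constrained $K$-means problem whose per-view contributions are reweighted by $(w^v_{(t)})^2$. This is exactly the mechanism by which noisy views, receiving smaller scaling factors via Eq.~(\ref{www}), are automatically down-weighted in the joint clustering, yielding the noise-robust behaviour that Theorem~\ref{the:kkkk} is intended to convey.
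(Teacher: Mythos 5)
Your proposal is correct and follows essentially the same route as the paper's proof: both expand the squared distance over the per-view blocks of $\mathbf{Z}_{(t)}$, reparameterize the joint centroids as $\mathbf{c}_{j(t)} = \begin{bmatrix} \bm{\upmu}^1_j & \dots & \bm{\upmu}^V_j \end{bmatrix} \mathbf{W}_{(t)}$, factor out $(w^v_{(t)})^2$, and identify the shared assignment/centroid coupling as the consistency constraint. Your explicit use of the assignment map $\sigma(i)$ and the positivity of $w^v_{(t)}$ to justify the change of variables is a slightly more careful rendering of the same argument, not a different one.
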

\begin{proof}
For clarity, we may omit the iteration symbol $(t)$ of $\mathbf{Z}_{(t)}$, $\mathbf{W}_{(t)}$, and $\mathbf{c}_{j(t)}$ in next proofs. Then, the $K$-means objective on $\mathbf{Z}$ (\ie, $\mathbf{Z}_{(t)}$) can be formulated as:
\begin{equation}\label{eq:com}
\begin{aligned}
    \mathcal{L}_{K}(\mathbf{Z}) = &\mathop{\min_{\mathbf{c}_j}}\sum_{i=1}^N\sum_{j=1}^K \left\| {\mathbf{z}_{i}-\mathbf{c}_j} \right\|_2^2\\
    =&\mathop{\min_{\bm{\upmu}^v_j}}\sum_{i=1}^N\sum_{j=1}^K \sum_{v=1}^V \left\| w^v\mathbf{z}_i^v-w^v\bm{\upmu}_j^v \right\|_2 ^2, s.t.~ \mathbf{c}_j=\begin{bmatrix} \bm{\upmu}^1_j & \bm{\upmu}^2_j & \dots & \bm{\upmu}^V_j \end{bmatrix} \mathbf{W}\\
    =&\sum_{v=1}^V (w^v)^2 \mathop{\min_{\bm{\upmu}^v_j}}\sum_{i=1}^N\sum_{j=1}^K \left\|\mathbf{z}_i^v-\bm{\upmu}_j^v \right\|_2 ^2, s.t.~ \mathbf{c}_j=\begin{bmatrix} \bm{\upmu}^1_j & \bm{\upmu}^2_j & \dots & \bm{\upmu}^V_j \end{bmatrix} \mathbf{W}\\
    =&\sum_{v=1}^V (w^v)^2 \mathcal{L}_{K}(\mathbf{Z}^v), s.t.~ \mathbf{c}_j=\begin{bmatrix} \bm{\upmu}^1_j & \bm{\upmu}^2_j & \dots & \bm{\upmu}^V_j \end{bmatrix} \mathbf{W},
\nonumber
\end{aligned}
\end{equation}
where $\mathbf{c}_j$ and $\bm{\upmu}^v_j$ denote the $j$-th cluster centroid of $\mathbf{Z}$ and $\mathbf{Z}^v$, respectively.
First, we can find that $(w^v)^2\mathcal{L}_{K}(\mathbf{Z}^v)$ punishes different scaling factors on different views.
Second, if the $K$-means objective is conducted on $\mathbf{Z}^v$ individually, the obtained cluster centroids for the same cluster of two views might have different samples, \eg, $\bm{\upmu}^a_j = \sum_{i\in \Omega^a} \mathbf{z}^a_i / |\Omega^a|$ and $\bm{\upmu}^b_j = \sum_{i\in \Omega^b} \mathbf{z}^b_i / |\Omega^b|$ where $\Omega^a \neq \Omega^b$.
However, the constraint condition $\mathbf{c}_j=\begin{bmatrix} \bm{\upmu}^1_j & \bm{\upmu}^2_j & \dots & \bm{\upmu}^V_j \end{bmatrix} \mathbf{W}$ makes $\Omega^a = \Omega^b = \Omega^c$ such that $w^a\bm{\upmu}^a_j = \sum_{i\in \Omega^a} w^a\mathbf{z}^a_i /|\Omega^a|$, $w^b\bm{\upmu}^b_j = \sum_{i\in \Omega^b} w^b\mathbf{z}^b_i /|\Omega^b|$, and $\mathbf{c}_j = \sum_{i\in \Omega^c} \mathbf{z}_i /|\Omega^c|$ for $\forall a,b \in \{1,2,\dots,V\}$. Therefore, the constraint condition could be treated as the consistency constraint of multiple views' cluster centroids, which guarantees that the cluster centroids $\mathbf{c}_j, \bm{\upmu}^1_j, \bm{\upmu}^2_j, \dots, \bm{\upmu}^V_j$ are obtained from the consistent samples among multiple views.
\end{proof}

\begin{theorem}\label{the:tt2}
(\textbf{Consistency})
If a sample representation is informative in multiple views and has the same cluster assignments in these views, its cluster assignment in $\mathbf{Y}_{(t)}$ is the same as that in these views.
\end{theorem}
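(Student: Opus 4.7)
The plan is to reduce the claim about $\mathbf{Y}_{(t)}$ to a comparison between concatenated-representation distances $\|\mathbf{z}_{i(t)} - \mathbf{c}_{j(t)}\|_2^2$, and then to decompose those distances view-by-view using the block structure of Eq.~(\ref{z}) together with the centroid consistency delivered by Theorem \ref{the:kkk}. Once the distances decompose additively across views with positive coefficients, the cluster assignment at the concatenated level is forced to agree with the common per-view assignment, and converting back through Definition \ref{d1} and Eq.~(\ref{qq}) finishes the argument.

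Concretely, the first step is to apply Eq.~(\ref{qq}) and the proportionality already derived in Eq.~(\ref{eq:q}) to rewrite ``$\arg\max_j y_{ij(t)} = a$'' as the strict distance inequality $\|\mathbf{z}_{i(t)} - \mathbf{c}_{a(t)}\|_2^2 < \|\mathbf{z}_{i(t)} - \mathbf{c}_{b(t)}\|_2^2$ for every $b \neq a$. The second step uses the fact that $\mathbf{W}_{(t)}$ is block diagonal with blocks $w^v_{(t)} \mathbf{I}^v$, together with the centroid alignment of Theorem \ref{the:kkk}, to obtain the identity
\begin{equation*}
\|\mathbf{z}_{i(t)} - \mathbf{c}_{j(t)}\|_2^2 = \sum_{v=1}^V (w^v_{(t)})^2 \|\mathbf{z}_i^v - \bm{\upmu}^v_{j(t)}\|_2^2.
\end{equation*}
The third step invokes the hypothesis: in every informative view $v$, sample $i$ has cluster assignment $a$, which by Eq.~(\ref{eq:q}) is equivalent to $\|\mathbf{z}_i^v - \bm{\upmu}^v_{a(t)}\|_2^2 < \|\mathbf{z}_i^v - \bm{\upmu}^v_{b(t)}\|_2^2$ for all $b \neq a$. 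Since each $w^v_{(t)}$ is strictly positive (it is an exponential by Eq.~(\ref{www})), summing these strict inequalities with the positive weights $(w^v_{(t)})^2$ yields the target distance inequality on the concatenated representation, and the first step run in reverse then gives $y_{ia(t)} > y_{ib(t)}$ for all $b \neq a$.

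The main obstacle I foresee is the index-alignment step: summing the per-view inequalities meaningfully requires that the label $a$ refer to the same cluster in every per-view centroid family $\{\bm{\upmu}^v_{j(t)}\}_j$ as it does among the concatenated centroids $\{\mathbf{c}_{j(t)}\}_j$. This is precisely the content of the ``consistency constraint of multiple views' cluster centroids'' in Theorem \ref{the:kkk}, so I would cite that theorem explicitly and make the induced bijection between per-view and concatenated cluster indices part of the setup. A secondary delicate point, if one reads the hypothesis permissively so that the same sample may additionally have noisy views, is that Definition \ref{d2} bounds each such view's per-cluster gap by $\varepsilon$; its contribution to the block decomposition is then at most $(w^v_{(t)})^2 \varepsilon$, which remains dominated by the strict gaps coming from the informative views, so the strict inequality on $\mathbf{z}_{i(t)}$ and hence the conclusion are preserved.
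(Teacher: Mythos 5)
Your proposal is correct and follows essentially the same route as the paper's proof: both decompose $\mathcal{D}(\mathbf{z}_{i(t)},\mathbf{c}_{j(t)})$ into the sum $\sum_v (w^v_{(t)})^2\,\mathcal{D}(\mathbf{z}_i^v,\bm{\upmu}_j^v)$ using the block structure of $\mathbf{W}_{(t)}$ and the centroid consistency from Theorem~\ref{the:kkk}, then sum the per-view strict distance inequalities with positive weights and convert back to soft labels via Eq.~(\ref{eq:q}). The paper writes this out for two views and two clusters and remarks that it extends to the general case, which is exactly the generalization you carry out; your explicit treatment of the index-alignment issue is a worthwhile clarification but not a different argument.
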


\begin{proof}
For clarity, we take two views as example, $\mathbf{z}_i = \begin{bmatrix} w^1\mathbf{z}_i^{1} &w^2\mathbf{z}_i^{2} \end{bmatrix} \in \mathbf{Z}$ and we let $\mathbf{c}_j = \begin{bmatrix} w^1\bm{\upmu}_j^{1} &w^2\bm{\upmu}_j^{2} \end{bmatrix}$ denote the ideal cluster centroid in the scaled representation space of $\mathbf{Z}$, where $\bm{\upmu}_j^{1}$ and $\bm{\upmu}_j^{2}$ are the cluster centroids in the 1-st view and the 2-nd view of the $j$-th cluster, respectively. For any two clusters, \eg, $\mathbf{c}_1$ and $\mathbf{c}_2$, we have
\begin{equation}
\begin{aligned}
\mathcal{D}(\mathbf{z}_i,\mathbf{c}_1)&=\left\| \begin{bmatrix} w^1\mathbf{z}_i^{1} &w^2\mathbf{z}_i^{2} \end{bmatrix} - \begin{bmatrix} w^1\bm{\upmu}_1^{1} & w^2\bm{\upmu}_1^{2} \end{bmatrix} \right\|_2^2 \\
&= \left\| w^1\mathbf{z}_i^{1} - w^1\bm{\upmu}_1^{1} \right\|_2^2 + \left\| w^2\mathbf{z}_i^{2} - w^2\bm{\upmu}_1^{2} \right\|_2^2\\
&= (w^1)^2\left\| \mathbf{z}_i^{1} - \bm{\upmu}_1^{1} \right\|_2^2 + (w^2)^2\left\| \mathbf{z}_i^{2} - \bm{\upmu}_1^{2} \right\|_2^2\\
&= (w^1)^2 \mathcal{D}(\mathbf{z}_i^{1},\bm{\upmu}_1^{1}) + (w^2)^2 \mathcal{D}(\mathbf{z}_i^{2},\bm{\upmu}_1^{2}).
\nonumber
\end{aligned}
\end{equation}
Similarly, $\mathcal{D}(\mathbf{z}_i,\mathbf{c}_2)=(w^1)^2 \mathcal{D}(\mathbf{z}_i^{1},\bm{\upmu}_2^{1}) + (w^2)^2 \mathcal{D}(\mathbf{z}_i^{2},\bm{\upmu}_2^{2})$. Furthermore, we have
\begin{equation}
\begin{aligned}
\mathcal{D}(\mathbf{z}_i,\mathbf{c}_1) - \mathcal{D}(\mathbf{z}_i,\mathbf{c}_2) &= (w^1)^2 \mathcal{D}(\mathbf{z}_i^{1},\bm{\upmu}_1^{1}) + (w^2)^2 \mathcal{D}(\mathbf{z}_i^{2},\bm{\upmu}_1^{2})
                                                                                - (w^1)^2 \mathcal{D}(\mathbf{z}_i^{1},\bm{\upmu}_2^{1}) - (w^2)^2 \mathcal{D}(\mathbf{z}_i^{2},\bm{\upmu}_2^{2})\\
              &= (w^1)^2 \left(\mathcal{D}(\mathbf{z}_i^{1},\bm{\upmu}_1^{1}) - \mathcal{D}(\mathbf{z}_i^{1},\bm{\upmu}_2^{1})\right) + (w^2)^2 \left(\mathcal{D}(\mathbf{z}_i^{2},\bm{\upmu}_1^{2}) - \mathcal{D}(\mathbf{z}_i^{2},\bm{\upmu}_2^{2})\right).
\nonumber
\end{aligned}
\end{equation}
If $y_{i1}^1 > y_{i2}^1$ and $y_{i1}^2 > y_{i2}^2$, the sample is informative in both views and has the same cluster assignment, \ie, $\mathcal{D}(\mathbf{z}_i^{1},\bm{\upmu}_1^{1}) < \mathcal{D}(\mathbf{z}_i^{1},\bm{\upmu}_2^{1})$ and $\mathcal{D}(\mathbf{z}_i^{2},\bm{\upmu}_1^{2}) < \mathcal{D}(\mathbf{z}_i^{2},\bm{\upmu}_2^{2})$. Then, the following inequality holds:
\begin{equation}
\begin{aligned}
\mathcal{D}(\mathbf{z}_i,\mathbf{c}_1) - \mathcal{D}(\mathbf{z}_i,\mathbf{c}_2) = (w^1)^2 \left(\mathcal{D}(\mathbf{z}_i^{1},\bm{\upmu}_1^{1}) - \mathcal{D}(\mathbf{z}_i^{1},\bm{\upmu}_2^{1})\right) + (w^2)^2 \left(\mathcal{D}(\mathbf{z}_i^{2},\bm{\upmu}_1^{2}) - \mathcal{D}(\mathbf{z}_i^{2},\bm{\upmu}_2^{2})\right) < 0,
\nonumber
\end{aligned}
\end{equation}
which indicates that $y_{i1} > y_{i2}$.
Similarly, if $y_{i1}^1 < y_{i2}^1$ and $y_{i1}^2 < y_{i2}^2$, we have $\mathcal{D}(\mathbf{z}_i^{1},\bm{\upmu}_1^{1}) > \mathcal{D}(\mathbf{z}_i^{1},\bm{\upmu}_2^{1})$, $\mathcal{D}(\mathbf{z}_i^{2},\bm{\upmu}_1^{2}) > \mathcal{D}(\mathbf{z}_i^{2},\bm{\upmu}_2^{2})$, and $\mathcal{D}(\mathbf{z}_i,\mathbf{c}_1) - \mathcal{D}(\mathbf{z}_i,\mathbf{c}_2) > 0$ which indicates that $y_{i1} < y_{i2}$.
As a result, the sample has the same cluster assignment in $\mathbf{Y}_{(t)}$ as that in the informative views, \ie, $\mathbf{Y}^1$ and $\mathbf{Y}^2$.
\end{proof}

\begin{theorem}\label{the:tt3}
(\textbf{Complementarity})
If a sample representation is informative in multiple views where it has different cluster assignments, we have two cases according to the differences of similarity among the clusters.\\
Case 1: if the differences of similarity among the clusters are equal, its cluster assignment in $\mathbf{Y}_{(t)}$ is the same as that in the informative view with the largest scaling factor.\\
Case 2: if the differences of similarity among the clusters are not equal, its cluster assignment in $\mathbf{Y}_{(t)}$ is more likely to be the same as that in the informative view with the largest scaling factor.
\end{theorem}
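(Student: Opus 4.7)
The plan is to reuse the distance identity derived inside the proof of Theorem \ref{the:tt2}, namely
\begin{equation*}
\mathcal{D}(\mathbf{z}_i,\mathbf{c}_1) - \mathcal{D}(\mathbf{z}_i,\mathbf{c}_2) = (w^1)^2 \bigl(\mathcal{D}(\mathbf{z}_i^{1},\bm{\upmu}_1^{1}) - \mathcal{D}(\mathbf{z}_i^{1},\bm{\upmu}_2^{1})\bigr) + (w^2)^2 \bigl(\mathcal{D}(\mathbf{z}_i^{2},\bm{\upmu}_1^{2}) - \mathcal{D}(\mathbf{z}_i^{2},\bm{\upmu}_2^{2})\bigr),
\end{equation*}
and apply it to the disagreement regime. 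Taking two informative views with view $1$ assigning $\mathbf{z}_i$ to cluster $1$ and view $2$ assigning it to cluster $2$, I would set $\Delta_1 := \mathcal{D}(\mathbf{z}_i^{1},\bm{\upmu}_2^{1}) - \mathcal{D}(\mathbf{z}_i^{1},\bm{\upmu}_1^{1}) > 0$ and $\Delta_2 := \mathcal{D}(\mathbf{z}_i^{2},\bm{\upmu}_1^{2}) - \mathcal{D}(\mathbf{z}_i^{2},\bm{\upmu}_2^{2}) > 0$; both are strictly bounded away from zero by the informative hypothesis of Definitions \ref{d1}--\ref{d2}. The identity then collapses to $\mathcal{D}(\mathbf{z}_i,\mathbf{c}_1) - \mathcal{D}(\mathbf{z}_i,\mathbf{c}_2) = -(w^1)^2 \Delta_1 + (w^2)^2 \Delta_2$, whose sign, via the monotone dependence of soft labels on distance in Eq.~(\ref{qq}), dictates which of $y_{i1(t)}$ and $y_{i2(t)}$ is larger.

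For Case 1, I would substitute $\Delta_1 = \Delta_2$ to obtain $\mathcal{D}(\mathbf{z}_i,\mathbf{c}_1) - \mathcal{D}(\mathbf{z}_i,\mathbf{c}_2) = \bigl((w^2)^2 - (w^1)^2\bigr)\Delta_1$; assuming without loss of generality that view $1$ carries the largest scaling factor, i.e.\ $w^1 > w^2$, the right-hand side is strictly negative, so $y_{i1(t)} > y_{i2(t)}$ and the robust assignment coincides with that of view $1$. For Case 2 I would rewrite the decisive inequality as $(w^1/w^2)^2 \gtrless \Delta_2/\Delta_1$: the robust label agrees with view $1$ iff $(w^1/w^2)^2 > \Delta_2/\Delta_1$. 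Because the threshold $(w^1/w^2)^2$ is strictly monotone in $w^1/w^2$, enlarging $w^1$ strictly enlarges the set of admissible ratios $\Delta_2/\Delta_1 \in (0,\infty)$ on which view $1$ wins, and this monotonicity-of-threshold fact is my formalization of the paper's ``more likely.'' I would then lift both cases from two views to $V$ views through the analogous identity $\mathcal{D}(\mathbf{z}_i,\mathbf{c}_a) - \mathcal{D}(\mathbf{z}_i,\mathbf{c}_b) = \sum_{v=1}^{V} (w^v)^2 \bigl(\mathcal{D}(\mathbf{z}_i^{v},\bm{\upmu}_a^{v}) - \mathcal{D}(\mathbf{z}_i^{v},\bm{\upmu}_b^{v})\bigr)$, whose sign is dominated by the term carrying the largest $(w^v)^2$ when within-view gaps are equal and controlled by the same threshold argument otherwise.

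The hard part will be Case 2: ``more likely'' is intrinsically qualitative, and I would rather not impose an unjustified prior over $(\Delta_1,\Delta_2)$. The cleanest rigorous substitute is the threshold-monotonicity statement above, combined with the scale invariance that the decision depends only on $\log(\Delta_2/\Delta_1)$, so that enlarging $\log(w^1/w^2)$ strictly shifts the decision boundary toward view $1$. A secondary subtlety is reconciling ``equal similarity differences'' (as phrased in the theorem) with ``equal distance differences'' (as needed by my identity); since $\mathcal{S}=1/(1+\mathcal{D})$ is strictly decreasing, for the competing cluster pairs the two notions agree to leading order and any residual discrepancy can be absorbed into the Case 2 threshold argument.
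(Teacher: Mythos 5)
Your proposal follows essentially the same route as the paper's own proof: it reuses the distance decomposition $\mathcal{D}(\mathbf{z}_i,\mathbf{c}_1) - \mathcal{D}(\mathbf{z}_i,\mathbf{c}_2) = \sum_v (w^v)^2\bigl(\mathcal{D}(\mathbf{z}_i^{v},\bm{\upmu}_1^{v}) - \mathcal{D}(\mathbf{z}_i^{v},\bm{\upmu}_2^{v})\bigr)$ from Theorem~\ref{the:tt2}, factors out the common gap in Case~1 to get the sign of $(w^1)^2-(w^2)^2$, and in Case~2 compares $(w^1/w^2)^2$ against exactly the paper's threshold $\delta = \Delta^2/\Delta^1$. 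Your extra remarks --- formalizing ``more likely'' as monotonicity of the decision threshold, and flagging that equal \emph{similarity} differences and equal \emph{distance} differences only coincide approximately under $\mathcal{S}=1/(1+\mathcal{D})$ --- are honest refinements of points the paper glosses over, not a different argument.
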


\begin{proof}
Letting $\Delta^{v} = |\mathcal{D}(\mathbf{z}_i^{v},\bm{\upmu}_1^{v}) - \mathcal{D}(\mathbf{z}_i^{v},\bm{\upmu}_2^{v})|$, and $\Gamma^v = |\mathcal{S}(\mathbf{z}_i^{v},\bm{\upmu}_1^{v}) - \mathcal{S}(\mathbf{z}_i^{v},\bm{\upmu}_2^{v})|$ denote the difference of similarity, we have $\Delta^{1}=\Delta^{2} \Rightarrow |\mathcal{S}(\mathbf{z}_i^{1},\bm{\upmu}_1^{1}) - \mathcal{S}(\mathbf{z}_i^{1},\bm{\upmu}_2^{1})| = |\mathcal{S}(\mathbf{z}_i^{2},\bm{\upmu}_1^{2}) - \mathcal{S}(\mathbf{z}_i^{2},\bm{\upmu}_2^{2})|$.

\textbf{Case 1}: if $(\mathcal{D}(\mathbf{z}_i^{1},\bm{\upmu}_1^{1}) - \mathcal{D}(\mathbf{z}_i^{1},\bm{\upmu}_2^{1}))(\mathcal{D}(\mathbf{z}_i^{2},\bm{\upmu}_1^{2}) - \mathcal{D}(\mathbf{z}_i^{2},\bm{\upmu}_2^{2})) < 0$ and $\Delta^{1}=\Delta^{2}$, the sample is informative in both views but it has different cluster assignments (\ie, $\arg \max_j {y}_{ij}^1 \neq \arg \max_j {y}_{ij}^2$), and the differences of similarity are equal $\Gamma^1 = \Gamma^2$. In this case, we obtain a boundary condition:
\begin{equation}
\mathcal{D}(\mathbf{z}_i^{1},\bm{\upmu}_1^{1}) - \mathcal{D}(\mathbf{z}_i^{1},\bm{\upmu}_2^{1}) = \mathcal{D}(\mathbf{z}_i^{2},\bm{\upmu}_2^{2}) - \mathcal{D}(\mathbf{z}_i^{2},\bm{\upmu}_1^{2}),
\nonumber
\end{equation}
and thus $\mathcal{D}(\mathbf{z}_i,\mathbf{c}_1) - \mathcal{D}(\mathbf{z}_i,\mathbf{c}_2)$ in Theorem \ref{the:tt2} becomes
\begin{equation}
\begin{aligned}
\mathcal{D}(\mathbf{z}_i,\mathbf{c}_1) - \mathcal{D}(\mathbf{z}_i,\mathbf{c}_2) &= (w^1)^2 \left(\mathcal{D}(\mathbf{z}_i^{1},\bm{\upmu}_1^{1}) - \mathcal{D}(\mathbf{z}_i^{1},\bm{\upmu}_2^{1})\right) + (w^2)^2 \left(\mathcal{D}(\mathbf{z}_i^{2},\bm{\upmu}_1^{2}) - \mathcal{D}(\mathbf{z}_i^{2},\bm{\upmu}_2^{2})\right)\\
              &= ((w^1)^2 - (w^2)^2)(\mathcal{D}(\mathbf{z}_i^{1},\bm{\upmu}_1^{1}) - \mathcal{D}(\mathbf{z}_i^{1},\bm{\upmu}_2^{1})).
\nonumber
\end{aligned}
\end{equation}
If $w^1 > w^2$, we further have
\begin{equation}
\begin{aligned}
y_{i1}^1 > y_{i2}^1 &\Rightarrow \mathcal{D}(\mathbf{z}_i^{1},\bm{\upmu}_1^{1}) < \mathcal{D}(\mathbf{z}_i^{1},\bm{\upmu}_2^{1}) \\
&\Rightarrow \mathcal{D}(\mathbf{z}_i,\mathbf{c}_1) < \mathcal{D}(\mathbf{z}_i,\mathbf{c}_2) \\
&\Rightarrow y_{i1} > y_{i2}
\nonumber
\end{aligned}
\end{equation}
and $y_{i1}^1 < y_{i2}^1 \Rightarrow y_{i1} < y_{i2}$.

If $w^1 < w^2$, we similarly have
\begin{equation}
\begin{aligned}
y_{i1}^2 > y_{i2}^2 &\Rightarrow \mathcal{D}(\mathbf{z}_i^{2},\bm{\upmu}_1^{2}) < \mathcal{D}(\mathbf{z}_i^{2},\bm{\upmu}_2^{2}) \\
&\Rightarrow' \mathcal{D}(\mathbf{z}_i^{1},\bm{\upmu}_1^{1}) > \mathcal{D}(\mathbf{z}_i^{1},\bm{\upmu}_2^{1})\\
&\Rightarrow \mathcal{D}(\mathbf{z}_i,\mathbf{c}_1) < \mathcal{D}(\mathbf{z}_i,\mathbf{c}_2)\\
&\Rightarrow y_{i1} > y_{i2}
\nonumber
\end{aligned}
\end{equation}
and $y_{i1}^2 < y_{i2}^2 \Rightarrow y_{i1} < y_{i2}$. $\Rightarrow'$ holds because $(\mathcal{D}(\mathbf{z}_i^{1},\bm{\upmu}_1^{1}) - \mathcal{D}(\mathbf{z}_i^{1},\bm{\upmu}_2^{1}))(\mathcal{D}(\mathbf{z}_i^{2},\bm{\upmu}_1^{2}) - \mathcal{D}(\mathbf{z}_i^{2},\bm{\upmu}_2^{2})) < 0$.
In conclusion, the sample has the same cluster assignment in $\mathbf{Y}_{(t)}$ as that the view with the largest scaling factor.

\textbf{Case 2}: if $(\mathcal{D}(\mathbf{z}_i^{1},\bm{\upmu}_1^{1}) - \mathcal{D}(\mathbf{z}_i^{1},\bm{\upmu}_2^{1}))(\mathcal{D}(\mathbf{z}_i^{2},\bm{\upmu}_1^{2}) - \mathcal{D}(\mathbf{z}_i^{2},\bm{\upmu}_2^{2})) < 0$ and $\Delta^{1}\neq\Delta^{2}$, the sample is informative in both views but it has different cluster assignments (\ie, $\arg \max_j {y}_{ij}^1 \neq \arg \max_j {y}_{ij}^2$), and the differences of similarity are not equal $\Gamma^1 \neq \Gamma^2$.

In this case, if $y_{i1}^1 < y_{i2}^1$ and $y_{i1}^2 > y_{i2}^2$, \ie, $\mathcal{D}(\mathbf{z}_i^{1},\bm{\upmu}_1^{1}) > \mathcal{D}(\mathbf{z}_i^{1},\bm{\upmu}_2^{1})$ and $\mathcal{D}(\mathbf{z}_i^{2},\bm{\upmu}_1^{2}) < \mathcal{D}(\mathbf{z}_i^{2},\bm{\upmu}_2^{2})$, there is a threshold $\delta$ as $\mathcal{D}(\mathbf{z}_i^{1},\bm{\upmu}_1^{1}) \neq \mathcal{D}(\mathbf{z}_i^{1},\bm{\upmu}_2^{1})$ and $\mathcal{D}(\mathbf{z}_i^{2},\bm{\upmu}_1^{2}) \neq \mathcal{D}(\mathbf{z}_i^{2},\bm{\upmu}_2^{2})$:
\begin{equation}
\begin{aligned}
\delta = \frac{\Delta^{2}}{\Delta^{1}} = \frac{\mathcal{D}(\mathbf{z}_i^{2},\bm{\upmu}_2^{2}) - \mathcal{D}(\mathbf{z}_i^{2},\bm{\upmu}_1^{2})}{\mathcal{D}(\mathbf{z}_i^{1},\bm{\upmu}_1^{1}) - \mathcal{D}(\mathbf{z}_i^{1},\bm{\upmu}_2^{1})}.
\nonumber
\end{aligned}
\end{equation}
When $w^1$ is large so that $\frac{(w^1)^2}{(w^2)^2} > \delta$, we have
\begin{equation}
\begin{aligned}
&\frac{(w^1)^2}{(w^2)^2} > \frac{\mathcal{D}(\mathbf{z}_i^{2},\bm{\upmu}_2^{2}) - \mathcal{D}(\mathbf{z}_i^{2},\bm{\upmu}_1^{2})}{\mathcal{D}(\mathbf{z}_i^{1},\bm{\upmu}_1^{1}) - \mathcal{D}(\mathbf{z}_i^{1},\bm{\upmu}_2^{1})}\\
\Rightarrow &(w^1)^2 \left(\mathcal{D}(\mathbf{z}_i^{1},\bm{\upmu}_1^{1}) - \mathcal{D}(\mathbf{z}_i^{1},\bm{\upmu}_2^{1})\right) + (w^2)^2 \left(\mathcal{D}(\mathbf{z}_i^{2},\bm{\upmu}_1^{2}) - \mathcal{D}(\mathbf{z}_i^{2},\bm{\upmu}_2^{2})\right) > 0\\
\Rightarrow &\mathcal{D}(\mathbf{z}_i,\mathbf{c}_1) - \mathcal{D}(\mathbf{z}_i,\mathbf{c}_2) > 0\\
\Rightarrow & y_{i1} < y_{i2},
\nonumber
\end{aligned}
\end{equation}
which indicates $y_{i1}^1 < y_{i2}^1 \Rightarrow y_{i1} < y_{i2}$, \ie, the sample has the same cluster assignment in $\mathbf{Y}_{(t)}$ as that in the view with the scaling factor $w^1$.\\
When $w^2$ is large so that $\frac{(w^1)^2}{(w^2)^2} < \delta$, we have
\begin{equation}
\begin{aligned}
&\frac{(w^1)^2}{(w^2)^2} < \frac{\mathcal{D}(\mathbf{z}_i^{2},\bm{\upmu}_2^{2}) - \mathcal{D}(\mathbf{z}_i^{2},\bm{\upmu}_1^{2})}{\mathcal{D}(\mathbf{z}_i^{1},\bm{\upmu}_1^{1}) - \mathcal{D}(\mathbf{z}_i^{1},\bm{\upmu}_2^{1})}\\
\Rightarrow &(w^1)^2 \left(\mathcal{D}(\mathbf{z}_i^{1},\bm{\upmu}_1^{1}) - \mathcal{D}(\mathbf{z}_i^{1},\bm{\upmu}_2^{1})\right) + (w^2)^2 \left(\mathcal{D}(\mathbf{z}_i^{2},\bm{\upmu}_1^{2}) - \mathcal{D}(\mathbf{z}_i^{2},\bm{\upmu}_2^{2})\right) < 0\\
\Rightarrow &\mathcal{D}(\mathbf{z}_i,\mathbf{c}_1) - \mathcal{D}(\mathbf{z}_i,\mathbf{c}_2) < 0\\
\Rightarrow & y_{i1} > y_{i2},
\nonumber
\end{aligned}
\end{equation}
which indicates $y_{i1}^2 > y_{i2}^2 \Rightarrow y_{i1} > y_{i2}$, \ie, the sample has the same cluster assignment in $\mathbf{Y}_{(t)}$ as that in the view with the scaling factor $w^2$.\\
Similarly, if $y_{i1}^1 > y_{i2}^1$ and $y_{i1}^2 < y_{i2}^2$, \ie, $\mathcal{D}(\mathbf{z}_i^{1},\bm{\upmu}_1^{1}) < \mathcal{D}(\mathbf{z}_i^{1},\bm{\upmu}_2^{1})$ and $\mathcal{D}(\mathbf{z}_i^{2},\bm{\upmu}_1^{2}) > \mathcal{D}(\mathbf{z}_i^{2},\bm{\upmu}_2^{2})$, we have the same conclusions.
Therefore, if we put a large scaling factor on the $v$-th view, the sample is more likely to be have the same cluster assignment as that in the $v$-th view.
\end{proof}

\begin{theorem}\label{the:tt4}
(\textbf{Complementarity $\&$ Noise robustness})\\
Case 1: if a sample representation is informative in some views and is noisy in other views, its cluster assignment in $\mathbf{Y}_{(t)}$ is the same as that in the informative views.\\
Case 2: if a sample representation is noisy in all views, its cluster assignment in $\mathbf{Y}_{(t)}$ is the same as the common cluster assignments existing in these views.
\end{theorem}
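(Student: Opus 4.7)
The plan is to reuse the distance decomposition already used in Theorem \ref{the:tt2}, namely
\begin{equation*}
\mathcal{D}(\mathbf{z}_i,\mathbf{c}_a) - \mathcal{D}(\mathbf{z}_i,\mathbf{c}_b) = \sum_{v=1}^V (w^v)^2 \bigl(\mathcal{D}(\mathbf{z}_i^v,\bm{\upmu}_a^v) - \mathcal{D}(\mathbf{z}_i^v,\bm{\upmu}_b^v)\bigr),
\end{equation*}
and to control each summand using Definition \ref{d2}. By that definition, for a noisy view $v$ the term $|\mathcal{D}(\mathbf{z}_i^v,\bm{\upmu}_a^v) - \mathcal{D}(\mathbf{z}_i^v,\bm{\upmu}_b^v)|$ is bounded by $\varepsilon$ for any two competing centroids, while for an informative view this quantity is of macroscopic size and has a sign determined by the view's own cluster assignment via the monotone relationship in Eq.~(\ref{eq:q}). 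Combining the decomposition with this dichotomy then translates sign information on the summands into the cluster assignment of $\mathbf{z}_{i(t)}$ inside $\mathbf{Y}_{(t)}$.

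For Case 1, I would partition the view indices into the informative set $\mathcal{I}$ and the noisy set $\mathcal{N}$ and write the two sub-sums separately. The noisy sub-sum is bounded in absolute value by $\varepsilon \sum_{v\in\mathcal{N}} (w^v)^2$, which can be made arbitrarily small by the smallness of $\varepsilon$ in Definition \ref{d2}. Applying the Theorem \ref{the:tt2} argument to the informative sub-sum gives a strictly signed contribution consistent with the shared cluster assignment of the informative views, so the total has the same sign, and Eq.~(\ref{eq:q}) then yields the claimed equality of cluster assignments. A mild care point is that when more than two clusters are present the argument should be run pairwise between the informative-view winner $a$ and every competitor $b$; the pairwise inequalities together pin down $\arg\max_j y_{ij(t)}$.

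For Case 2, every summand is of size $O(\varepsilon)$ in absolute value, so the argument cannot rely on an informative view dominating. Instead, I would interpret the hypothesis ``common cluster assignments existing in these views'' as: there exists a cluster $a$ such that for every view $v$ and every competitor $b$, $\mathcal{D}(\mathbf{z}_i^v,\bm{\upmu}_a^v) \le \mathcal{D}(\mathbf{z}_i^v,\bm{\upmu}_b^v)$ (that is, the small within-$\varepsilon$ ordering agrees across views, even if each view individually is indecisive). Then each summand in the decomposition is non-positive, so $\mathcal{D}(\mathbf{z}_i,\mathbf{c}_a) \le \mathcal{D}(\mathbf{z}_i,\mathbf{c}_b)$ for every $b$, and aggregation strictly reinforces the sign whenever at least one view exhibits a strict inequality. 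Eq.~(\ref{eq:q}) then gives $\arg\max_j y_{ij(t)} = a$.

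The main obstacle is precisely the formalization in Case 2: Definition \ref{d2} only says that two centroids are approximately equidistant in a noisy view, which leaves the sub-$\varepsilon$ ordering ambiguous, yet this ordering is exactly what the conclusion relies on. I expect the cleanest route is to make the hypothesis explicit in the form above (common weak preference of cluster $a$ across all noisy views) so that each $O(\varepsilon)$ summand has a consistent sign and their sum cannot cancel. The remaining steps (reducing the multi-cluster case to pairwise comparisons, and citing Theorem \ref{the:tt2} for the informative portion of Case 1) are then largely bookkeeping.
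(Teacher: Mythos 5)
Your Case 1 is essentially the paper's argument. The paper merges all noisy views into one block and all informative views into another, applies the distance decomposition from Theorem \ref{the:tt2}, and discards the noisy block because its contribution is below $\varepsilon$; your partition into $\mathcal{I}$ and $\mathcal{N}$ with the bound $\varepsilon\sum_{v\in\mathcal{N}}(w^v)^2$ is the same step written out explicitly, and reducing the multi-cluster case to pairwise comparisons is the right bookkeeping.

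Case 2 is where you diverge, and the divergence is a genuine gap rather than an alternative route. You read ``noisy in all views'' as meaning that \emph{every} pairwise difference $\mathcal{D}(\mathbf{z}_i^v,\bm{\upmu}_a^v)-\mathcal{D}(\mathbf{z}_i^v,\bm{\upmu}_b^v)$ is $O(\varepsilon)$, and you are then forced to invent an extra hypothesis (a consistent sub-$\varepsilon$ weak ordering across views) to extract a signed sum. But Definition \ref{d2} only asserts the \emph{existence} of one near-tied pair of centroids per noisy view; it does not flatten all the distances. The paper's proof exploits exactly the macroscopic differences your reading throws away: in its worked example, view 1 satisfies $\mathcal{D}(\mathbf{z}_i^{1},\bm{\upmu}_1^{1}) \approx \mathcal{D}(\mathbf{z}_i^{1},\bm{\upmu}_2^{1}) < \mathcal{D}(\mathbf{z}_i^{1},\bm{\upmu}_3^{1})$ (tied between clusters 1 and 2 but decisively rejecting 3) while view 2 is tied between clusters 2 and 3 but decisively rejects 1. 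The ``common cluster assignment'' is the cluster lying in the near-best set of \emph{every} view, and the aggregate prefers it by a macroscopic margin because, for each competitor, some view supplies a strict non-$\varepsilon$ rejection of that competitor while no view rejects the common cluster. Under your formalization the aggregate margin is itself $O(\varepsilon)$, so $\mathbf{z}_{i(t)}$ would remain noisy in the sense of Definition \ref{d2} and the claimed assignment would be the argmax of nearly tied values; this both undercuts the point the theorem is making (that the scaled representation becomes informative even when every individual view is noisy) and imports a hypothesis the statement does not contain. To repair Case 2, keep your decomposition but split each view's competitors into its near-tied set and its strictly rejected set, and show that the common cluster beats each rival via whichever view strictly rejects that rival.
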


\begin{proof}
\textbf{Case 1}: if a sample is informative in some views and is noisy in other views, we can treat all noisy views as one noisy view and all informative views as one informative view, \eg,

$\mathbf{z}_i^1 =\begin{bmatrix} w^a\mathbf{z}_i^{a} &w^b\mathbf{z}_i^{b} \end{bmatrix}$ denotes the noisy view and $\mathbf{z}_i^2 =\begin{bmatrix} w^c\mathbf{z}_i^{c} &w^d\mathbf{z}_i^{d} \end{bmatrix}$ denotes the informative view, where $\mathbf{z}_i^{a}$ and $\mathbf{z}_i^{b}$ are noisy while $\mathbf{z}_i^{c}$ and $\mathbf{z}_i^{d}$ are informative.
Letting $w^v=1$ in Theorem \ref{the:tt2}, $\mathcal{D}(\mathbf{z}_i,\mathbf{c}_1) - \mathcal{D}(\mathbf{z}_i,\mathbf{c}_2)$ becomes
\begin{equation}
\begin{aligned}
\mathcal{D}(\mathbf{z}_i,\mathbf{c}_1) - \mathcal{D}(\mathbf{z}_i,\mathbf{c}_2) &= \mathcal{D}(\mathbf{z}_i^{1},\bm{\upmu}_1^{1}) + \mathcal{D}(\mathbf{z}_i^{2},\bm{\upmu}_1^{2}) - \mathcal{D}(\mathbf{z}_i^{1},\bm{\upmu}_2^{1}) - \mathcal{D}(\mathbf{z}_i^{2},\bm{\upmu}_2^{2})\\
              &= \left(\mathcal{D}(\mathbf{z}_i^{1},\bm{\upmu}_1^{1}) - \mathcal{D}(\mathbf{z}_i^{1},\bm{\upmu}_2^{1})\right) + \left(\mathcal{D}(\mathbf{z}_i^{2},\bm{\upmu}_1^{2}) - \mathcal{D}(\mathbf{z}_i^{2},\bm{\upmu}_2^{2})\right).
\nonumber
\end{aligned}
\end{equation}
If $|\mathcal{D}(\mathbf{z}_i^{1},\bm{\upmu}_1^{1}) - \mathcal{D}(\mathbf{z}_i^{1},\bm{\upmu}_2^{1})| < \varepsilon$ and $\mathcal{D}(\mathbf{z}_i^{2},\bm{\upmu}_1^{2}) \neq \mathcal{D}(\mathbf{z}_i^{2},\bm{\upmu}_2^{2})$, \ie, the sample is noisy in the 1-st view but informative in the 2-nd view. As $\varepsilon$ is a sufficiently small value, we have
\begin{equation}
\begin{aligned}
\mathcal{D}(\mathbf{z}_i,\mathbf{c}_1) - \mathcal{D}(\mathbf{z}_i,\mathbf{c}_2) \approx \mathcal{D}(\mathbf{z}_i^{2},\bm{\upmu}_1^{2}) - \mathcal{D}(\mathbf{z}_i^{2},\bm{\upmu}_2^{2}).
\nonumber
\end{aligned}
\end{equation}
According to the definitions, we obtain the conclusion:
\begin{equation}
\begin{aligned}
y_{i1}^2 > y_{i2}^2 &\Rightarrow \mathcal{D}(\mathbf{z}_i^{2},\bm{\upmu}_1^{2}) < \mathcal{D}(\mathbf{z}_i^{2},\bm{\upmu}_2^{2}) \\
&\Rightarrow \mathcal{D}(\mathbf{z}_i,\mathbf{c}_1) < \mathcal{D}(\mathbf{z}_i,\mathbf{c}_2) \\
&\Rightarrow \mathcal{S}(\mathbf{z}_i,\mathbf{c}_1) > \mathcal{S}(\mathbf{z}_i,\mathbf{c}_2) \\
&\Rightarrow y_{i1} > y_{i2}
\nonumber
\end{aligned}
\end{equation}
and $y_{i1}^2 < y_{i2}^2 \Rightarrow y_{i1} < y_{i2}$, which indicates that the sample has the same cluster assignment in $\mathbf{Y}_{(t)}$ as that in $\mathbf{Y}^{2}$ of the informative views.

Similarly, if $\mathcal{D}(\mathbf{z}_i^{1},\bm{\upmu}_1^{1}) \neq \mathcal{D}(\mathbf{z}_i^{1},\bm{\upmu}_2^{1})$ and $|\mathcal{D}(\mathbf{z}_i^{2},\bm{\upmu}_1^{2}) - \mathcal{D}(\mathbf{z}_i^{2},\bm{\upmu}_2^{2})| < \varepsilon$, the sample has the same cluster assignment in $\mathbf{Y}_{(t)}$ as that in $\mathbf{Y}^{1}$ of the informative views.

\textbf{Case 2}: For any three clusters, \eg, $\bm{\upmu}_1^{1}$, $\bm{\upmu}_2^{1}$, and $\bm{\upmu}_3^{1}$ in the 1-st view, $\bm{\upmu}_1^{2}$, $\bm{\upmu}_2^{2}$, and $\bm{\upmu}_3^{2}$ in the 2-nd view,
if a sample is noisy in all views, there is no harm in supposing that $\mathcal{D}(\mathbf{z}_i^{1},\bm{\upmu}_1^{1}) \approx \mathcal{D}(\mathbf{z}_i^{1},\bm{\upmu}_2^{1}) < \mathcal{D}(\mathbf{z}_i^{1},\bm{\upmu}_3^{1})$ and
$\mathcal{D}(\mathbf{z}_i^{2},\bm{\upmu}_1^{2}) > \mathcal{D}(\mathbf{z}_i^{2},\bm{\upmu}_2^{2}) \approx \mathcal{D}(\mathbf{z}_i^{2},\bm{\upmu}_3^{2})$, \ie, $y_{i1}^1 \approx y_{i2}^1 > y_{i3}^1$ and $y_{i1}^2 < y_{i2}^2 \approx y_{i3}^2$.
In this situation, the sample is noisy in two views and the common cluster assignments are $y_{i2}^1$ and $y_{i2}^2$ for these two views.

First, we can consider this situation $\mathcal{D}(\mathbf{z}_i^{1},\bm{\upmu}_1^{1}) \approx \mathcal{D}(\mathbf{z}_i^{1},\bm{\upmu}_2^{1})$ and $\mathcal{D}(\mathbf{z}_i^{2},\bm{\upmu}_1^{2}) > \mathcal{D}(\mathbf{z}_i^{2},\bm{\upmu}_2^{2})$.
According to Theorem \ref{the:tt2}, we have
\begin{equation}
\begin{aligned}
\mathcal{D}(\mathbf{z}_i,\mathbf{c}_1) - \mathcal{D}(\mathbf{z}_i,\mathbf{c}_2) 
&= (w^1)^2 \left(\mathcal{D}(\mathbf{z}_i^{1},\bm{\upmu}_1^{1}) - \mathcal{D}(\mathbf{z}_i^{1},\bm{\upmu}_2^{1})\right) + (w^2)^2 \left(\mathcal{D}(\mathbf{z}_i^{2},\bm{\upmu}_1^{2}) - \mathcal{D}(\mathbf{z}_i^{2},\bm{\upmu}_2^{2})\right) \\
&\approx (w^2)^2 \left(\mathcal{D}(\mathbf{z}_i^{2},\bm{\upmu}_1^{2}) - \mathcal{D}(\mathbf{z}_i^{2},\bm{\upmu}_2^{2})\right) > 0,
\nonumber
\end{aligned}
\end{equation}
which indicates that $y_{i1} < y_{i2}$.

Then, we can consider this situation $\mathcal{D}(\mathbf{z}_i^{1},\bm{\upmu}_2^{1}) < \mathcal{D}(\mathbf{z}_i^{1},\bm{\upmu}_3^{1})$ and $\mathcal{D}(\mathbf{z}_i^{2},\bm{\upmu}_2^{2}) \approx \mathcal{D}(\mathbf{z}_i^{2},\bm{\upmu}_3^{2})$.
According to Theorem \ref{the:tt2}, we have
\begin{equation}
\begin{aligned}
\mathcal{D}(\mathbf{z}_i,\mathbf{c}_2) - \mathcal{D}(\mathbf{z}_i,\mathbf{c}_3) 
&= (w^1)^2 \left(\mathcal{D}(\mathbf{z}_i^{1},\bm{\upmu}_2^{1}) - \mathcal{D}(\mathbf{z}_i^{1},\bm{\upmu}_3^{1})\right) + (w^2)^2 \left(\mathcal{D}(\mathbf{z}_i^{2},\bm{\upmu}_2^{2}) - \mathcal{D}(\mathbf{z}_i^{2},\bm{\upmu}_3^{2})\right) \\
&\approx (w^1)^2 \left(\mathcal{D}(\mathbf{z}_i^{1},\bm{\upmu}_2^{1}) - \mathcal{D}(\mathbf{z}_i^{1},\bm{\upmu}_3^{1})\right) < 0,
\nonumber
\end{aligned}
\end{equation}
which indicates that $y_{i2} > y_{i3}$.

Therefore, $y_{i1}^1 \approx y_{i2}^1 > y_{i3}^1, y_{i1}^2 < y_{i2}^2 \approx y_{i3}^2 \Rightarrow y_{i2} > y_{i1}, y_{i2} > y_{i3}$. Although the sample is noisy in the two views,
the scaled representation $\mathbf{z}_i$ is informative and its cluster assignment in $\mathbf{Y}_{(t)}$ is consistent with $y_{i2}^1 \in \mathbf{Y}^{1}$ and $y_{i2}^2 \in \mathbf{Y}^{2}$. Indeed, $y_{i2}^1$ and $y_{i2}^2$ commonly denote the 2-nd cluster, and they are defined as the common cluster assignments existing in these two views.
\end{proof}
The proofs in the above theorems can be easily extended to the situation of multiple views.\\

\textbf{Complexity analysis}.~In $\mathcal{T}$-level iteration, $K$-means will be adopted to calculate the cluster centroids for $\mathbf{Z}_{(t)} \in \mathbb{R}^{N \times {\sum_{v=1}^V d_v}}$, which has the complexity of $O(NK{\sum_{v=1}^V d_v})$. Eqs.~(2, 6, 7, and 8) only involve the computation process with the complexity of $O(N)$. In $\mathcal{R}$-level iteration, $\mathop{\min_{\mathbf{A}^v}}{\| \mathbf{T} \mathbf{A}^v - \mathbf{Y}^v \|_F^2}$ needs the computation of Hungarian algorithm with the complexity of $O(K^3)$. The loss function of Eq.~(11) of all views is with the complexity of $O(N{\sum_{v=1}^V d_v} + VNK)$. Both $\mathcal{T}$- and $\mathcal{R}$-level iterations can converge, and the total complexity is $O(E/T_2(T_1 (NK{\sum_{v=1}^V d_v}) + VK^3 + VNK))$ with respect to iterations $T_1$ and $T_2$, epoch $E$, and data size $N$. The complexity to train deep autoencoders is also linear to $N$.

\section*{Appendix C: Setting Details and More Experimental Results}

\begin{table}[!ht]
  \caption{Information of datasets}\label{sample-table}
  \centering
  \resizebox{0.5\linewidth}{!}{
  \begin{tabular}{llrrc}
    \toprule
    Name          & ~~~$V$      & $N$~~~        & $K$ & Features \\
    \midrule
    BDGP          &  ~~~2       & 2,500         & 5   & 1750/79 \\
    DIGIT         &  ~~~2       & 5,000         & 10  & (32$\times$32$\times$1) $\times$ 2 \\
    COIL          &  ~~~3       & 720           & 10  & (32$\times$32$\times$1) $\times$ 3 \\
    Amazon        &  ~~~3       & 4,790         & 10  & (32$\times$32$\times$3) $\times$ 3 \\
    NoisyBDGP     &  ~~~3       & 2,500         & 5   & 1750/79/10 \\
    NoisyDIGIT    &  ~~~3       & 5,000         & 10  & (32$\times$32$\times$1) $\times$ 3 \\
    NoisyCOIL     &  ~~~4       & 720           & 10  & (32$\times$32$\times$1) $\times$ 4 \\
    NoisyAmazon   &  ~~~4       & 4,790         & 10  & (32$\times$32$\times$3) $\times$ 4 \\
    DHA           &  ~~~2       & 483           & 23  & 110/6144 \\
    RGB-D         &  ~~~2       & 1,449         & 13  & 2048/300 \\
    Caltech       &  ~~~6       & 1,400         & 7   & 48/40/254/1984/512/928 \\
    YoutubeVideo  &  ~~~3       & 101,499       & 31  & 512/647/838 \\
    \bottomrule
  \end{tabular}
  }
\end{table}

\textbf{Datasets}.~Our experiments are conducted on 8 public datasets including 4 noise-simulated ones, as listed in Table~\ref{sample-table}. 

Firstly, four normal datasets include BDGP \cite{cai2012joint}, DIGIT \cite{peng2019comic}, COIL \cite{nene1996columbia}, and Amazon \cite{saenko2010adapting}. Different views of these datasets often are informative views that have consistent semantic category information, and thus their hidden cluster structures are relatively easy to be discovered.
Concretely,
BDGP is a drosophila embryos dataset where each sample contains a visual view and a textual view.
DIGIT (\ie, MNIST-USPS \cite{peng2019comic}) is a dataset of handwritten arabic numbers in which each sample has two views (paired by two digits with the same label) coming from MNIST and USPS, respectively.
COIL and Amazon (\ie, Multi-COIL-10 and Multi-Amazon \cite{xu2023untie}) are also two artificially constructed multi-view datasets, which are formed by following the manner of constructing MNIST-USPS \cite{peng2019comic}.
COIL is a multi-view dataset and the multiple views of one object are captured by the cameras from different visual angles.
Amazon consists of the commodity images, such as bicycles, bags, and earphones, where the raw samples are single-view RGB images and their augmented data are obtained by rotation, horizontal flip, and color filter to further construct the multi-view dataset.

Secondly, we construct 4 noise-simulated datasets (namely NoisyBDGP, NoisyDIGIT, NoisyCOIL, and NoisyAmazon) on the four normal datasets to test the noise robustness of methods in extreme scenarios.
These datasets have both informative views and noisy views, which is helpful to understand the stability of comparison methods under extreme noise interference.
For each dataset, we randomly sample noise to build an additional view and obtain NoisyBDGP, NoisyDIGIT, NoisyCOIL, and NoisyAmazon, respectively.
To reduce the computational burden, all image samples are scaled to $32 \times 32 \times 1$ (DIGIT, COIL, NoisyDIGIT, and NoisyCOIL) or $32 \times 32 \times 3$ (Amazon and NoisyAmazon) pixels.

Thirdly, we also conduct experiments on four real-world datasets including DHA \cite{lin2012human}, RGB-D \cite{zhou2020end}, Caltech \cite{fei2004learning}, and YoutubeVideo \cite{madani2012using}.
These datasets usually have non-negligible noisy views that is from real-world multi-view environments, so that it is relatively difficult to explore their hidden cluster information.
Specifically,
DHA is a multimodal dataset that leverages RGB and depth features to capture human actions.
RGB-D contains different indoor scenes described by image features and textual descriptions.
Caltech is a popular multi-view datasets which has six views (Gabor, WM, CENTRIST, HOG, GIST, and LBP) to represent image samples.
YoutubeVideo is a large-scale dataset containing video contents with cuboids histogram, HOG, and MISC features.\\

\textbf{Implementation}.~For a fair comparison, we adopt the same network structures to implement the models for all deep MVC methods like previous work \cite{xie2016unsupervised,guo2017improved,xie2020joint}.
Concretely, for the vector input (BDGP, NoisyBDGP, DHA, RGB-D, Caltech, and YoutubeVideo), the autoencoder adopts the fully connected network with the architecture of $\mathbf{X}^v-500-500-2000-\mathbf{Z}^v-2000-500-500-\Hat{\mathbf{X}}^v$.
For the image input (DIGIT, COIL, Amazon, NoisyDIGIT, NoisyCOIL, and NoisyAmazon), the autoencoder adopts the convolutional neural network whose architecture is $\mathbf{X}^v-32-64-64-\mathbf{Z}^v-64-32-32-\Hat{\mathbf{X}}^v$ with the kernel size of $4 \times 4$, the stride of 2, and the padding of 1.
The activation function in networks is ReLU \cite{glorot2011deep}.
For the proposed MVCAN, the dimensionality of all $\{\mathbf{Z}^v\}_{v=1}^V$ is set to 10.
Adam \cite{kingma2014adam} with a learning rate of 0.0001 is adopted for mini-batch optimization with the batch size of 256.
To make the optimization of multiple views be away from mutual interference, the deep model structure achieves the decoupling of network parameters for multiple views and we respectively construct $V$ decoupled Adam optimizers for $V$ different views.
In $\mathcal{T}$-level iterative optimization, $T_1 = 2$.
In $\mathcal{R}$-level iterative optimization, $T_2 = 100$.
MVCAN is implemented by PyTorch~\cite{paszke2019pytorch}, and more implementation details can be found in our code (\url{https://github.com/SubmissionsIn/MVCAN}) together with datasets and trained models.
To implement the clustering and representation learning optimization objectives of MVCAN, we adopt the mean squared error (MSE) in PyTorch to compute the optimization loss of $\mathcal{L}_c^v$ and $\mathcal{L}_r^v$.

For the comparison methods, we tune the parameters for all baselines as suggested in their published work.
Specially,
since the original DMJC \cite{xie2020joint} conducts the single-view image clustering by exploring multiple views from the same samples, we adopt the proposed variant of DMJC-T and the multi-view data replaces the input of the single-view data in experiments.
For the incomplete multi-view clustering methods (\ie, DIMC-net \cite{wen2020dimc}, GP-MVC \cite{wang2021generative}, DIMVC \cite{xu2022deep}, DSIMVC \cite{tang2022deepi}, and CPSPAN \cite{jin2023deep}), their missing rates are set to 0.
All experiments are run on a same device with NVIDIA GeForce RTX 3090 GPUs (24.0GB caches) and 11th Gen Intel(R) Core(TM) i5-11600KF @ 3.90GHz (64.0GB RAM).\\

\textbf{More experimental results}.~We provide more experimental results which can not be shown in the paper due to its space limitation. The clustering effectiveness is evaluated by three widely used metrics, \ie, clustering accuracy (ACC), normalized mutual information (NMI), and adjusted rand index (ARI).

\begin{table*}[!ht]\caption{Importance of two conditions in clustering objective on four normal datasets.}\label{tab:table5}
\small
\centering
\renewcommand\tabcolsep{10.0pt}
\resizebox{\linewidth}{!}{
\begin{threeparttable}
    \begin{tabular}{l|cc|ccc|ccc|ccc|ccc}
    \toprule
    &\multicolumn{2}{c|}{Conditions} &\multicolumn{3}{c|}{BDGP} &\multicolumn{3}{c|}{DIGIT} &\multicolumn{3}{c|}{COIL} &\multicolumn{3}{c}{Amazon} \cr
    \hline
    &$\mathbf{\Theta}^v$ &$\mathbf{A}^v$  & ACC & NMI & ARI & ACC & NMI & ARI & ACC & NMI & ARI & ACC & NMI & ARI\\
    \hline
    (i)&$\checkmark$&\quad                      &0.973 &0.921 &0.935 &0.986 &0.980 &0.982 &0.942 &0.924 &0.886 &0.795 &0.810 &0.714 \\
    (ii)&\quad&$\checkmark$                     &0.660 &0.477 &0.451 &0.840 &0.735 &0.690 &0.913 &0.868 &0.825 &0.685 &0.696 &0.536 \\
    (iii)&$\checkmark$&$\checkmark$             &0.984 &0.953 &0.961 &0.995 &0.988 &0.989 &0.996 &0.991 &0.990 &0.826 &0.867 &0.779 \\
    \bottomrule
    \end{tabular}
\end{threeparttable}
}
\end{table*}

\begin{table*}[!ht]\caption{Importance of two conditions in clustering objective on four noise-simulated datasets.}\label{tab:table6}
\small
\centering
\renewcommand\tabcolsep{10.0pt}
\resizebox{\linewidth}{!}{
\begin{threeparttable}
    \begin{tabular}{l|cc|ccc|ccc|ccc|ccc}
    \toprule
    &\multicolumn{2}{c|}{Conditions} &\multicolumn{3}{c|}{NoisyBDGP} &\multicolumn{3}{c|}{NoisyDIGIT} &\multicolumn{3}{c|}{NoisyCOIL} &\multicolumn{3}{c}{NoisyAmazon} \cr
    \hline
    &$\mathbf{\Theta}^v$ &$\mathbf{A}^v$  & ACC & NMI & ARI & ACC & NMI & ARI & ACC & NMI & ARI & ACC & NMI & ARI\\
    \hline
    (i)&$\checkmark$&\quad                      &0.977 &0.946 &0.949 &0.902 &0.933 &0.884 &0.890 &0.912 &0.851 &0.667 &0.747 &0.621 \\
    (ii)&\quad&$\checkmark$                     &0.602 &0.339 &0.305 &0.611 &0.599 &0.436 &0.806 &0.832 &0.719 &0.523 &0.514 &0.390 \\
    (iii)&$\checkmark$&$\checkmark$             &0.980 &0.951 &0.957 &0.990 &0.984 &0.986 &0.992 &0.988 &0.987 &0.728 &0.732 &0.614 \\
    \bottomrule
    \end{tabular}
\end{threeparttable}
}
\end{table*}

\begin{table*}[!ht]\caption{Importance of two loss components in optimization on four normal datasets.}\label{tab:table3}
\small
\centering
\renewcommand\tabcolsep{10.0pt}
\resizebox{\linewidth}{!}{
\begin{threeparttable}
    \begin{tabular}{l|cc|ccc|ccc|ccc|ccc}
    \toprule
    &\multicolumn{2}{c|}{Components} &\multicolumn{3}{c|}{BDGP} &\multicolumn{3}{c|}{DIGIT} &\multicolumn{3}{c|}{COIL} &\multicolumn{3}{c}{Amazon} \cr
    \hline
    &$\mathcal{L}_{r}^v$ &$\mathcal{L}_{c}^v$  & ACC & NMI & ARI & ACC & NMI & ARI & ACC & NMI & ARI & ACC & NMI & ARI\\
    \hline
    (a)&\quad&\quad                      &0.643 &0.522 &0.245 &0.768 &0.723 &0.635 &0.839 &0.927 &0.838 &0.441 &0.373 &0.268 \\
    (b)&$\checkmark$&\quad               &0.948 &0.842 &0.874 &0.787 &0.747 &0.658 &0.826 &0.888 &0.787 &0.498 &0.452 &0.322 \\
    (c)&\quad&$\checkmark$               &0.798 &0.709 &0.656 &0.870 &0.943 &0.867 &0.876 &0.957 &0.876 &0.724 &0.817 &0.678 \\
    (d)&$\checkmark$&$\checkmark$        &0.984 &0.953 &0.961 &0.995 &0.988 &0.989 &0.996 &0.991 &0.990 &0.826 &0.867 &0.779 \\
    \bottomrule
    \end{tabular}
\end{threeparttable}
}
\end{table*}

\begin{table*}[!ht]\caption{Importance of two loss components in optimization on four noise-simulated datasets.}\label{tab:table4}
\small
\centering
\renewcommand\tabcolsep{10.0pt}
\resizebox{\linewidth}{!}{
\begin{threeparttable}
    \begin{tabular}{l|cc|ccc|ccc|ccc|ccc}
    \toprule
    &\multicolumn{2}{c|}{Components} &\multicolumn{3}{c|}{NoisyBDGP} &\multicolumn{3}{c|}{NoisyDIGIT} &\multicolumn{3}{c|}{NoisyCOIL} &\multicolumn{3}{c}{NoisyAmazon} \cr
    \hline
    &$\mathcal{L}_{r}^v$ &$\mathcal{L}_{c}^v$  & ACC & NMI & ARI & ACC & NMI & ARI & ACC & NMI & ARI & ACC & NMI & ARI\\
    \hline
    (a)&\quad&\quad                      &0.499 &0.315 &0.253 &0.741 &0.705 &0.611 &0.828 &0.875 &0.785 &0.432 &0.371 &0.269 \\
    (b)&$\checkmark$&\quad               &0.944 &0.839 &0.864 &0.769 &0.748 &0.651 &0.813 &0.867 &0.780 &0.487 &0.455 &0.341 \\
    (c)&\quad&$\checkmark$               &0.726 &0.574 &0.381 &0.591 &0.702 &0.504 &0.686 &0.782 &0.612 &0.423 &0.423 &0.258 \\
    (d)&$\checkmark$&$\checkmark$        &0.980 &0.951 &0.957 &0.990 &0.984 &0.986 &0.992 &0.988 &0.987 &0.728 &0.732 &0.614 \\
    \bottomrule
    \end{tabular}
\end{threeparttable}
}
\end{table*}

\begin{figure*}[!ht]
\centering
  \begin{subfigure}{0.33\linewidth}
    \includegraphics[width=\linewidth]{ACC_T1.png}
    \caption{ACC $vs.$ $T_1$}
  \end{subfigure}
  \begin{subfigure}{0.33\linewidth}
    \includegraphics[width=\linewidth]{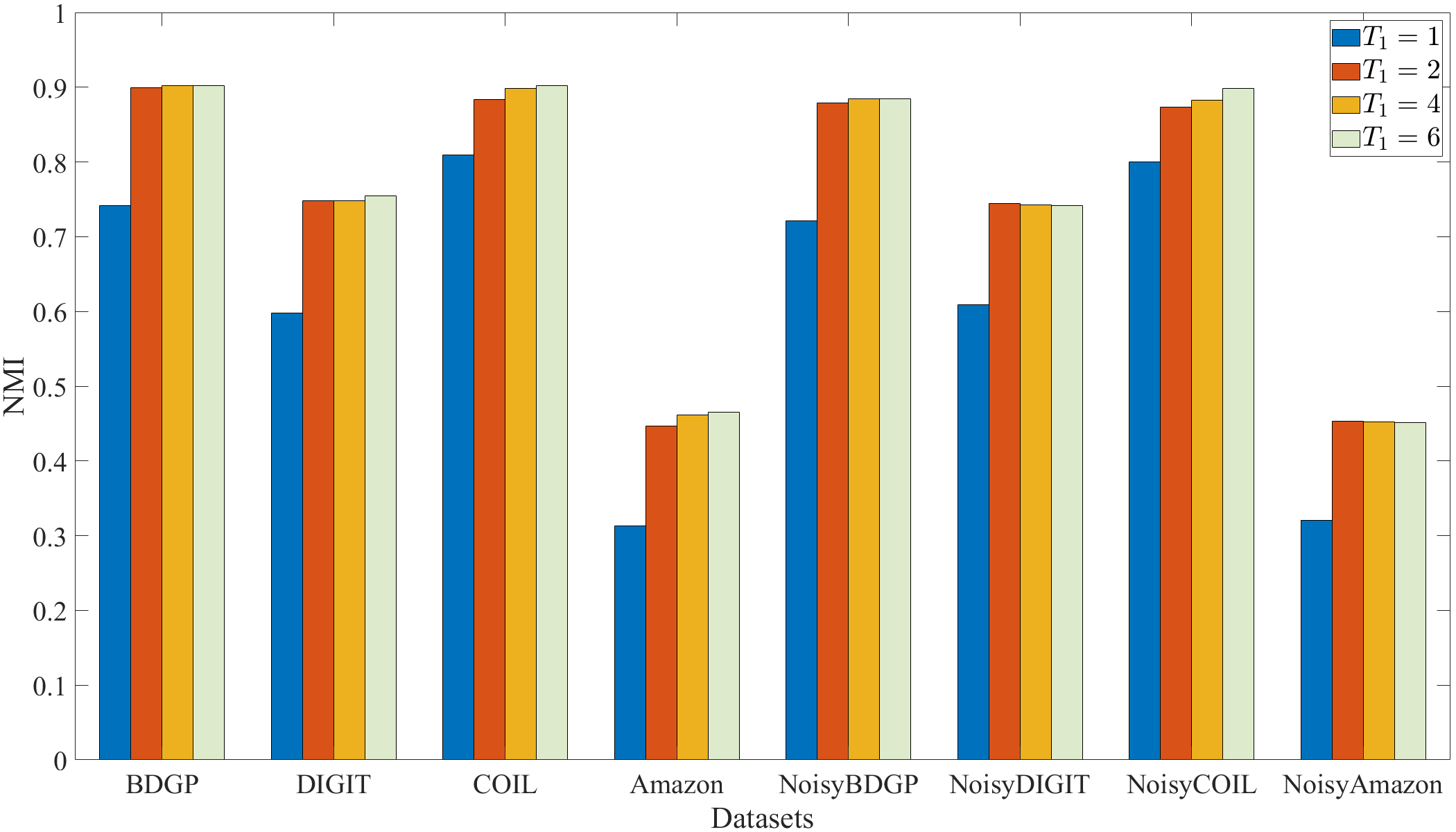}
    \caption{NMI $vs.$ $T_1$}
  \end{subfigure}
  \begin{subfigure}{0.33\linewidth}
    \includegraphics[width=\linewidth]{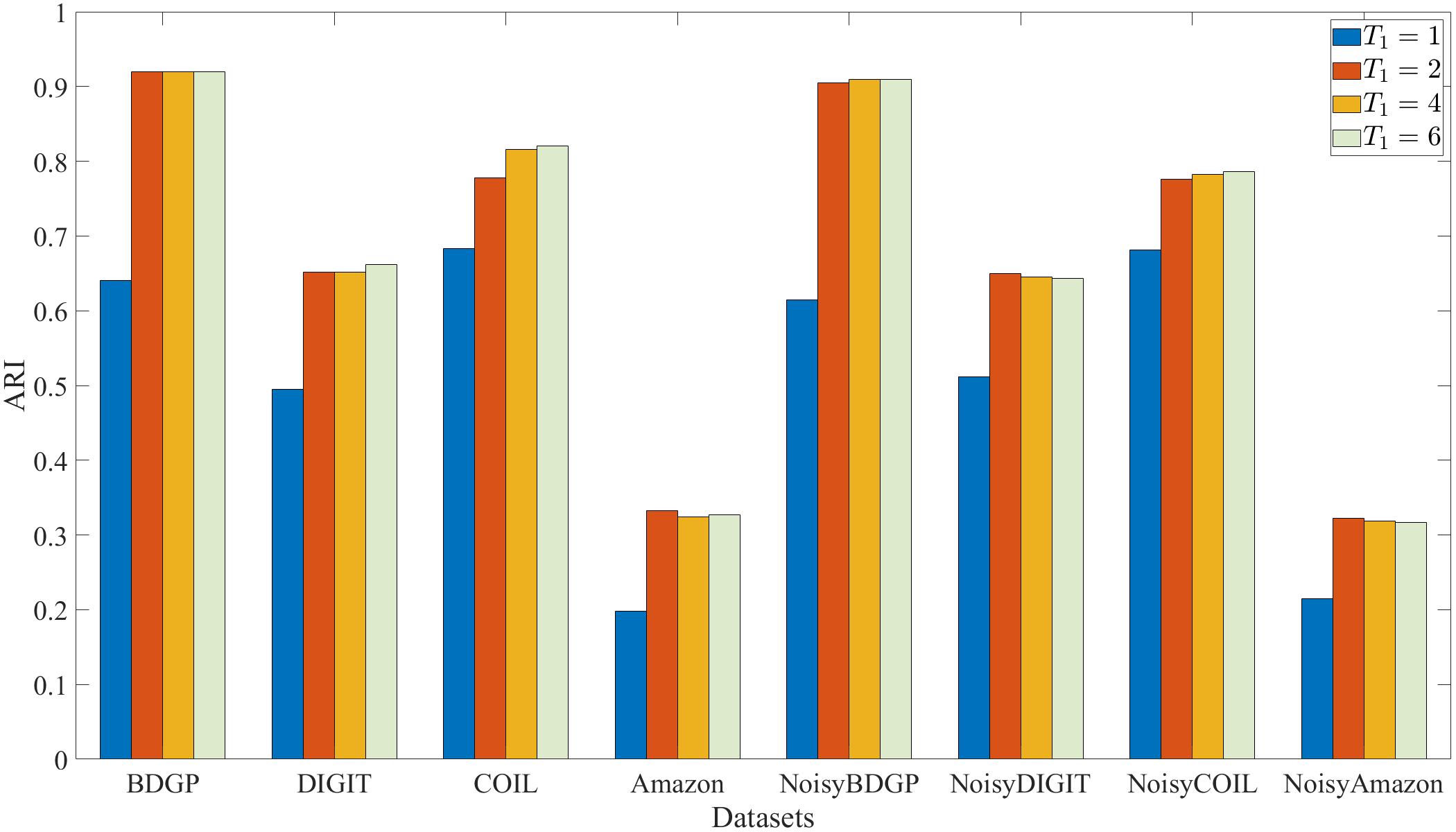}
    \caption{ARI $vs.$ $T_1$}
  \end{subfigure}
  \begin{subfigure}{0.33\linewidth}
    \includegraphics[width=\linewidth]{ACC_T2.png}
    \caption{ACC $vs.$ $T_2$}
  \end{subfigure}
  \begin{subfigure}{0.33\linewidth}
    \includegraphics[width=\linewidth]{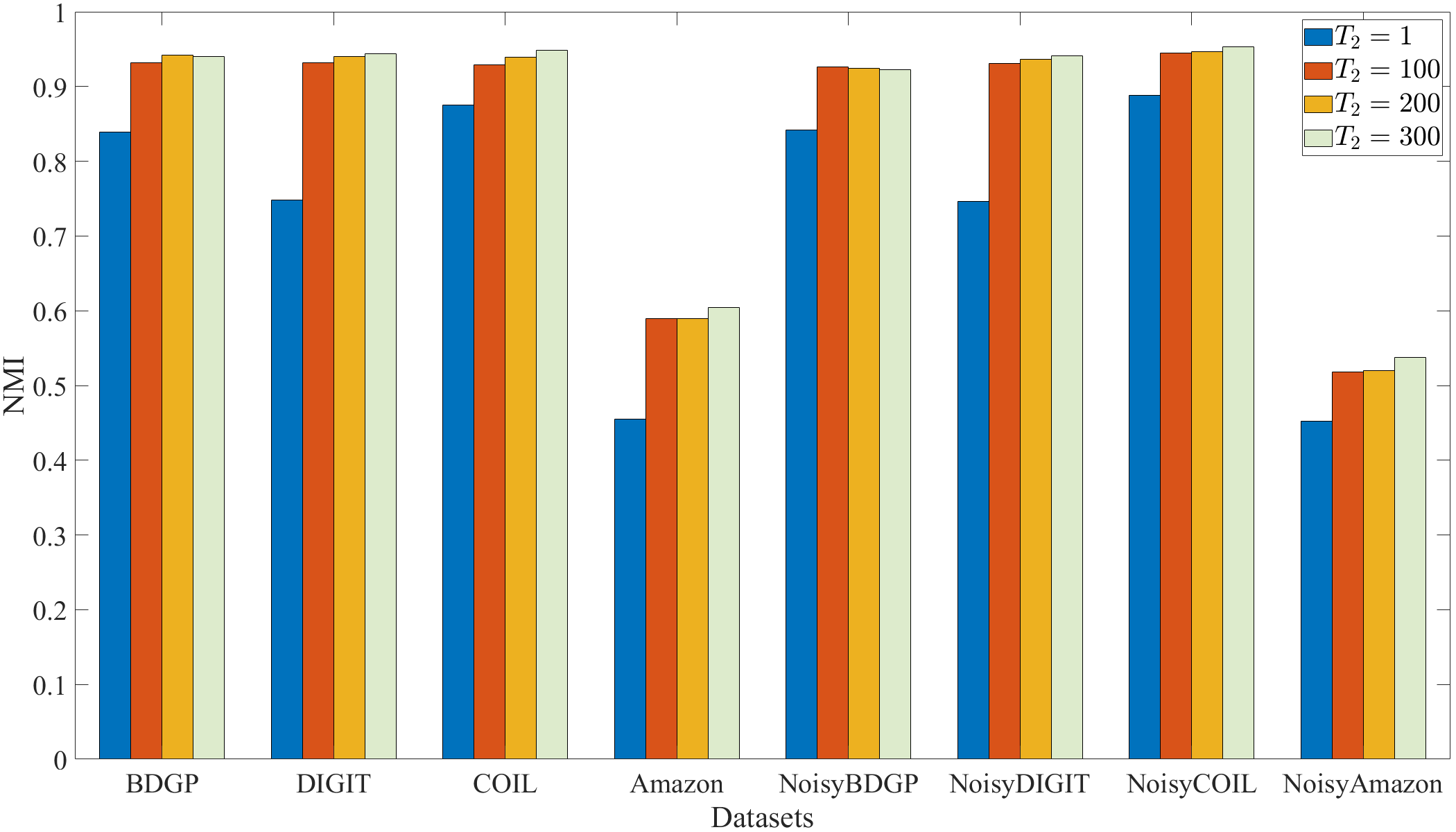}
    \caption{NMI $vs.$ $T_2$}
  \end{subfigure}
  \begin{subfigure}{0.33\linewidth}
    \includegraphics[width=\linewidth]{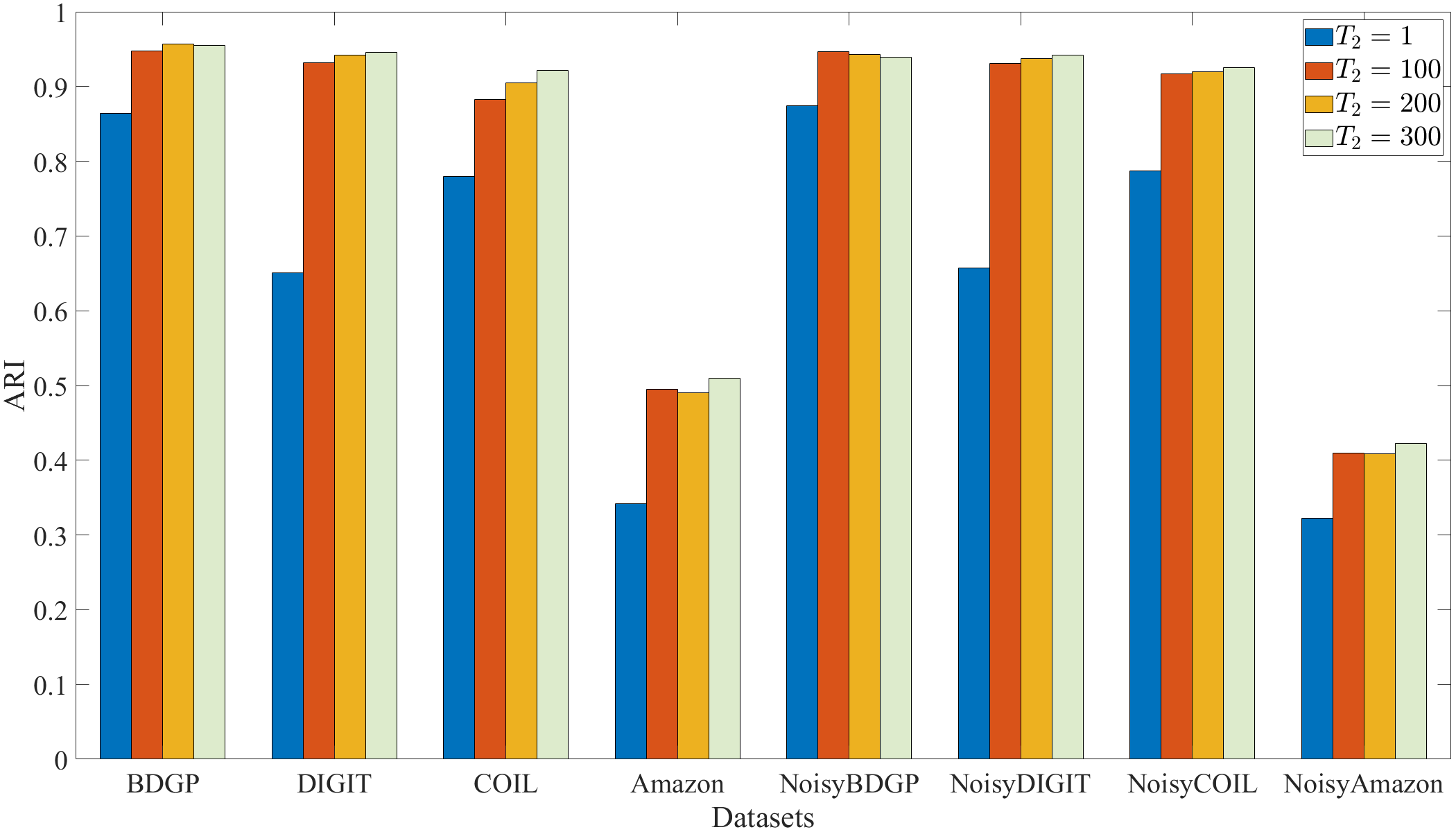}
    \caption{ARI $vs.$ $T_2$}
  \end{subfigure}
\caption{Different training iterations in $\mathcal{T}$-level (a-c) and $\mathcal{R}$-level (d-f) of the proposed two-level multi-view iterative optimization.}\label{BS0}
\end{figure*}

\begin{figure*}[!ht]
\centering
  \begin{subfigure}{0.24\linewidth}
    \includegraphics[width=\linewidth]{BDGP_plot.png}
    \caption{BDGP}
  \end{subfigure}
  \begin{subfigure}{0.24\linewidth}
    \includegraphics[width=\linewidth]{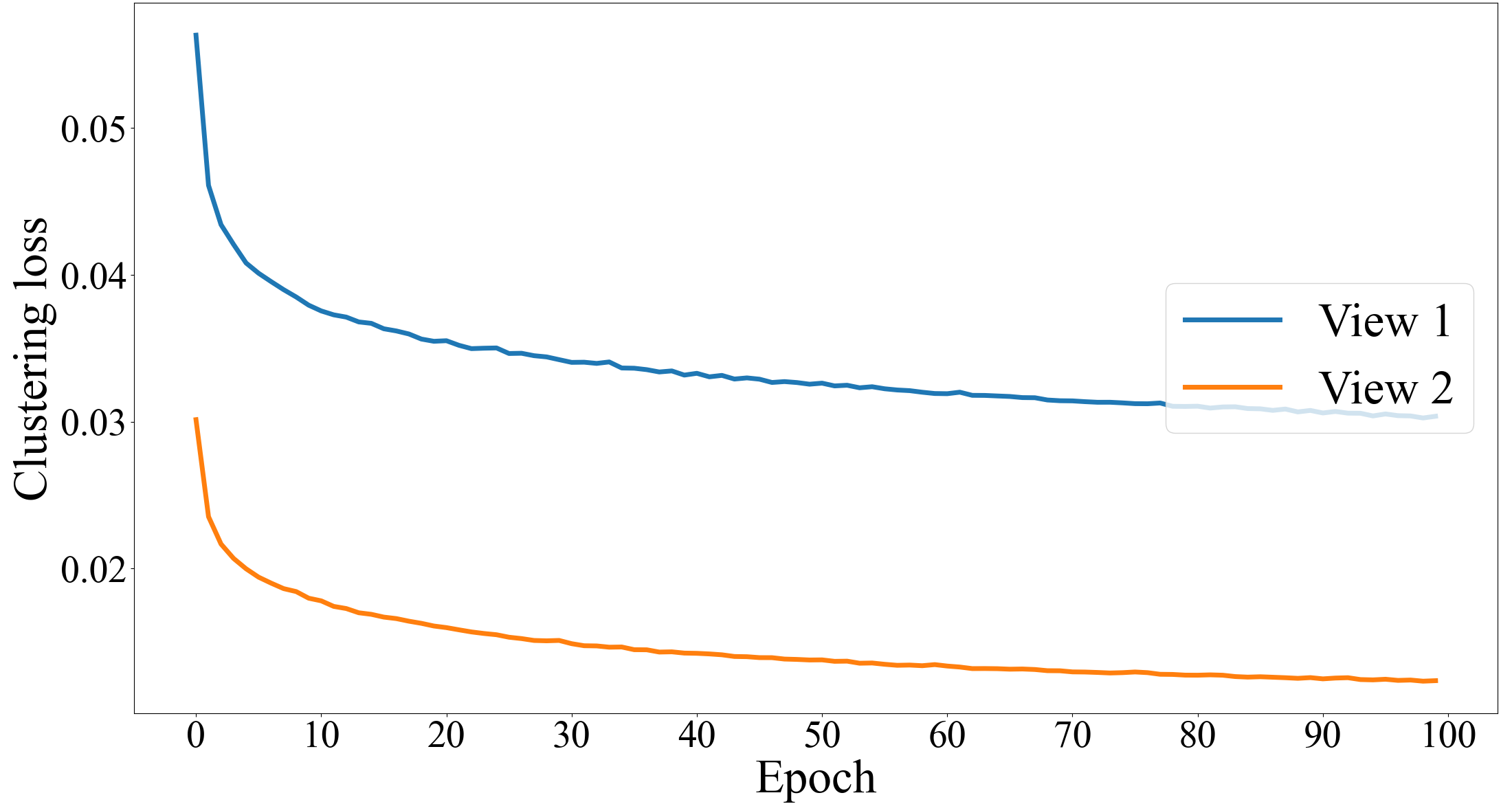}
    \caption{DIGIT}
  \end{subfigure}
  \begin{subfigure}{0.24\linewidth}
    \includegraphics[width=\linewidth]{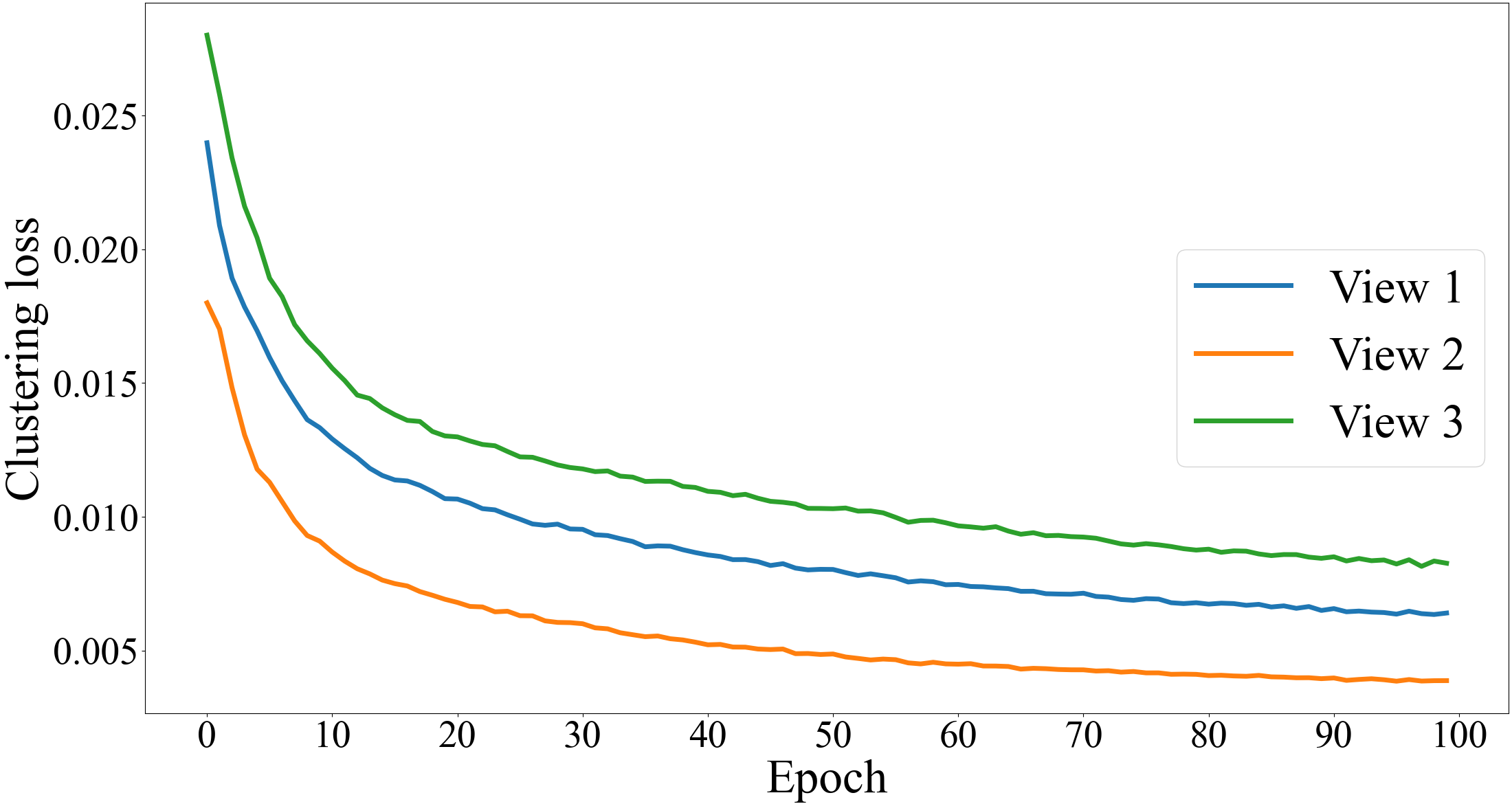}
    \caption{COIL}
  \end{subfigure}
  \begin{subfigure}{0.24\linewidth}
    \includegraphics[width=\linewidth]{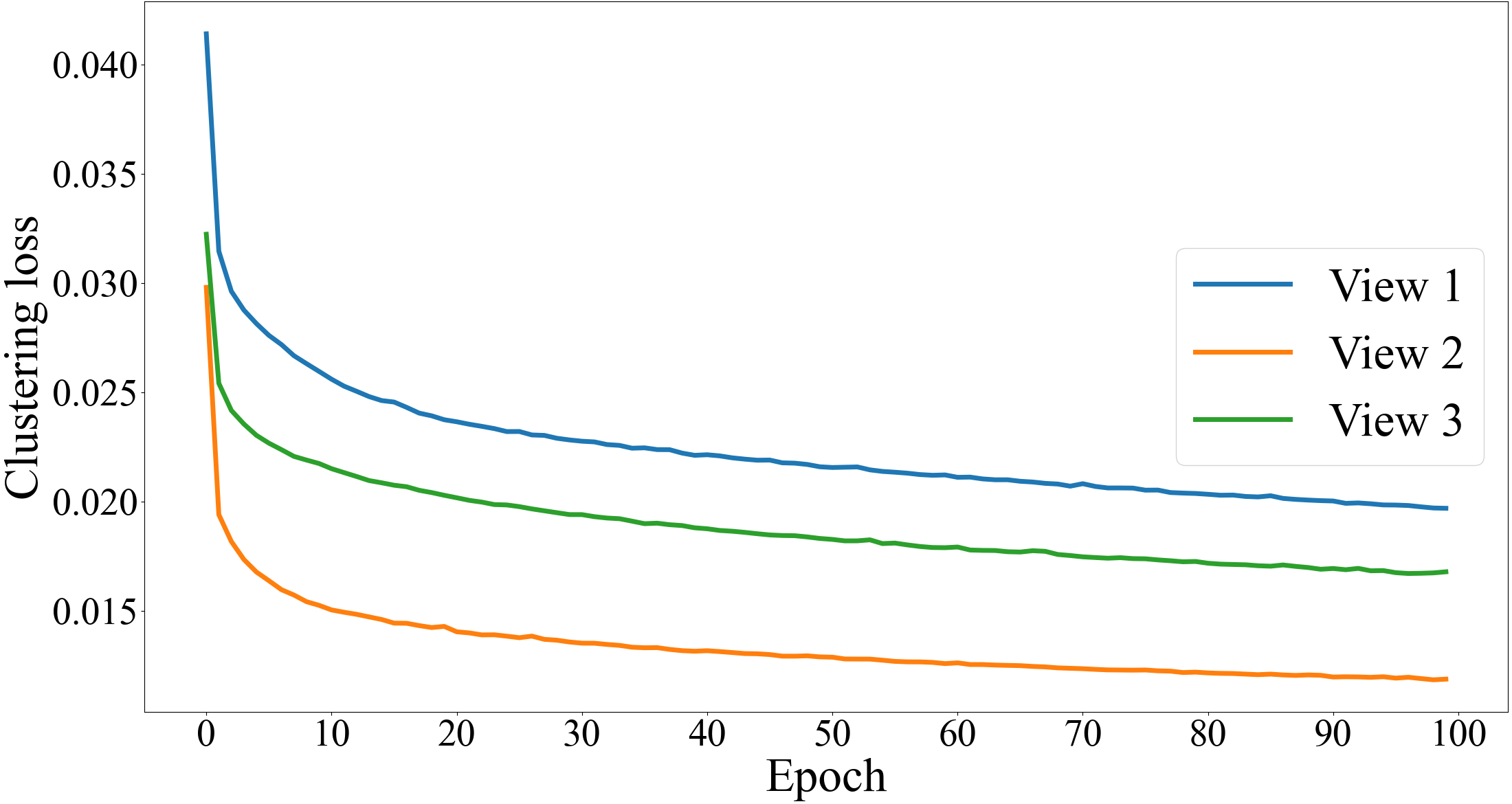}
    \caption{Amazon}
  \end{subfigure}
  \begin{subfigure}{0.24\linewidth}
    \includegraphics[width=\linewidth]{BDGP-N_plot.png}
    \caption{NoisyBDGP}
  \end{subfigure}
  \begin{subfigure}{0.24\linewidth}
    \includegraphics[width=\linewidth]{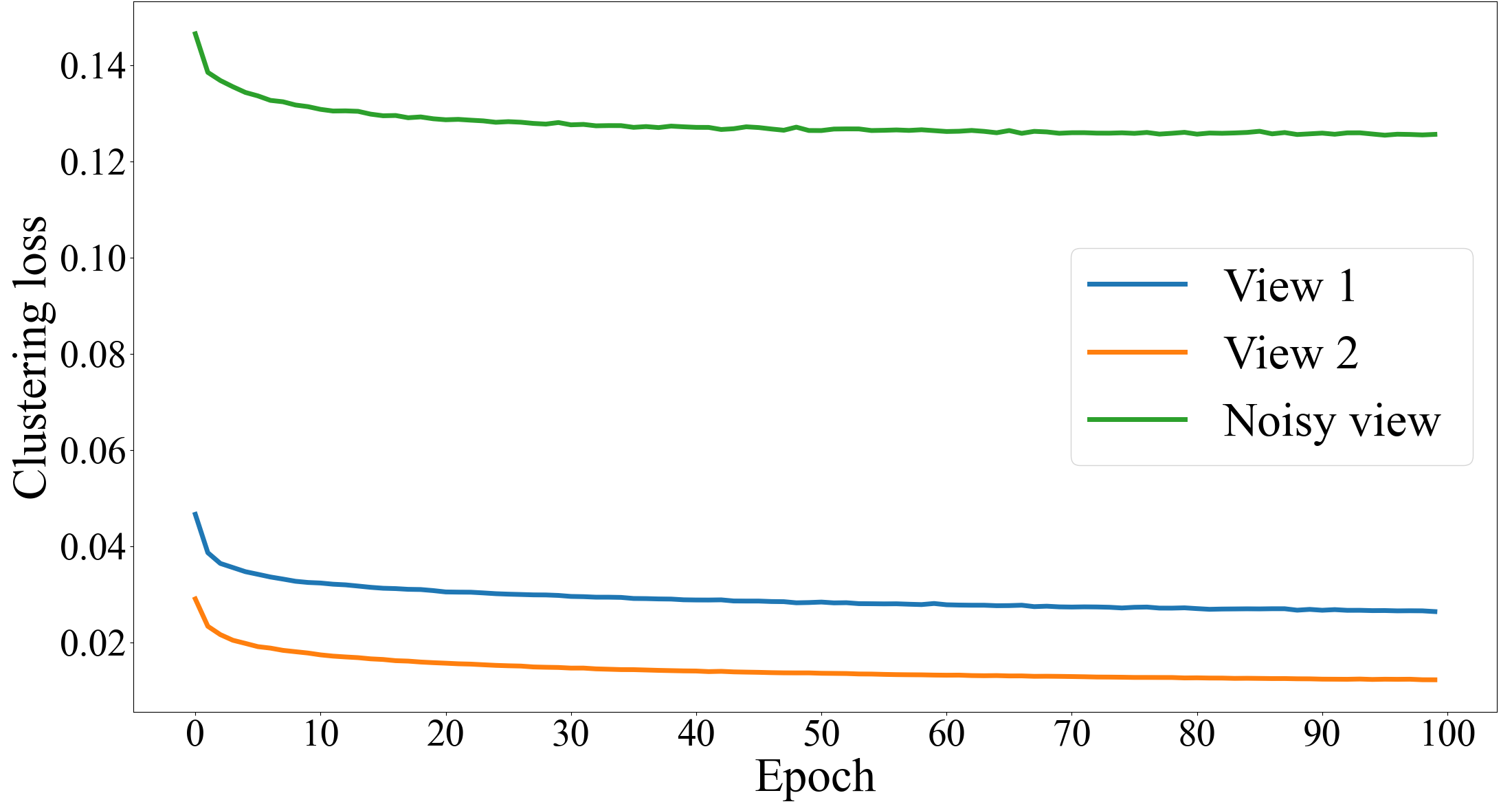}
    \caption{NoisyDIGIT}
  \end{subfigure}
  \begin{subfigure}{0.24\linewidth}
    \includegraphics[width=\linewidth]{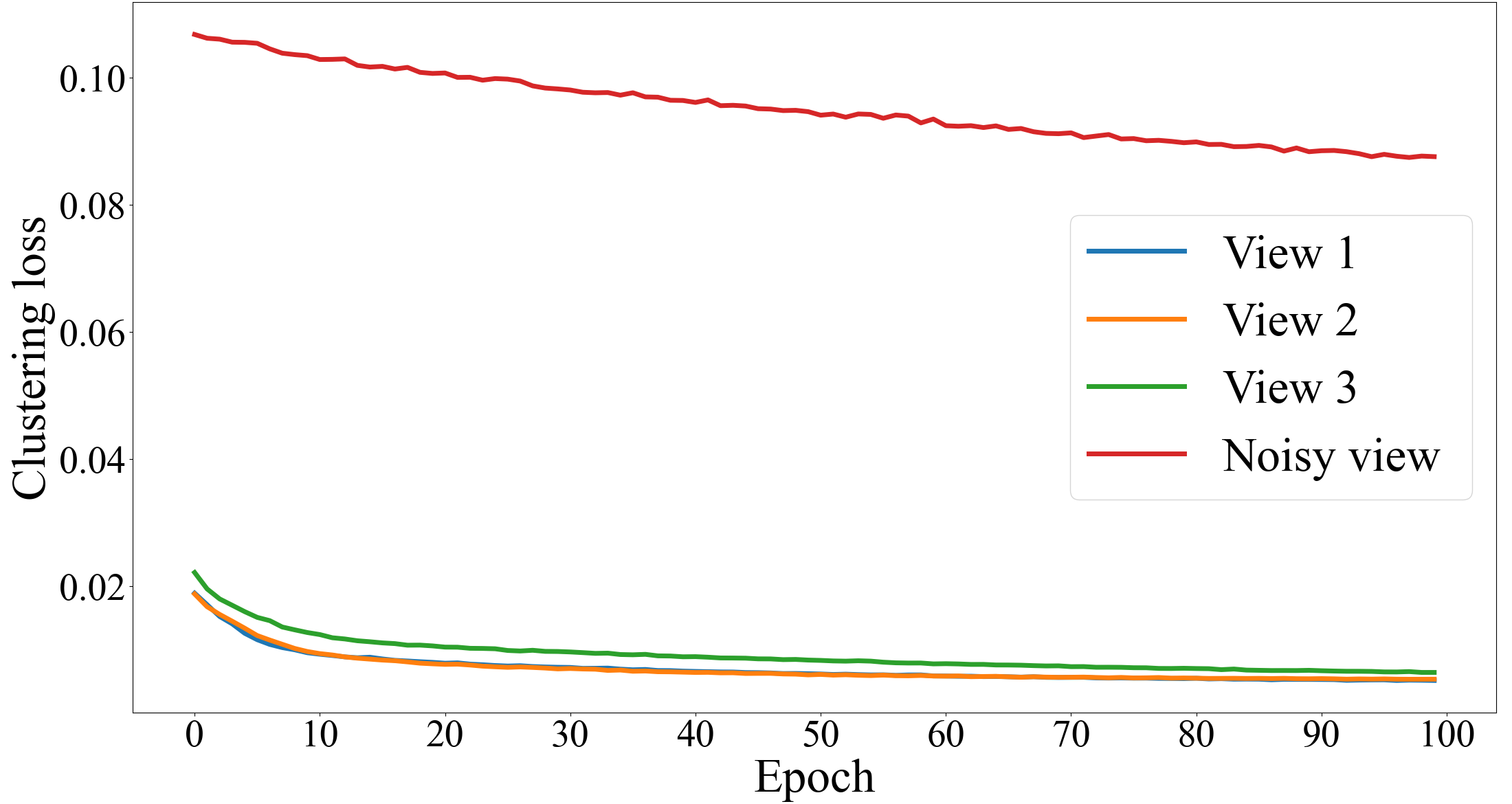}
    \caption{NoisyCOIL}
  \end{subfigure}
  \begin{subfigure}{0.24\linewidth}
    \includegraphics[width=\linewidth]{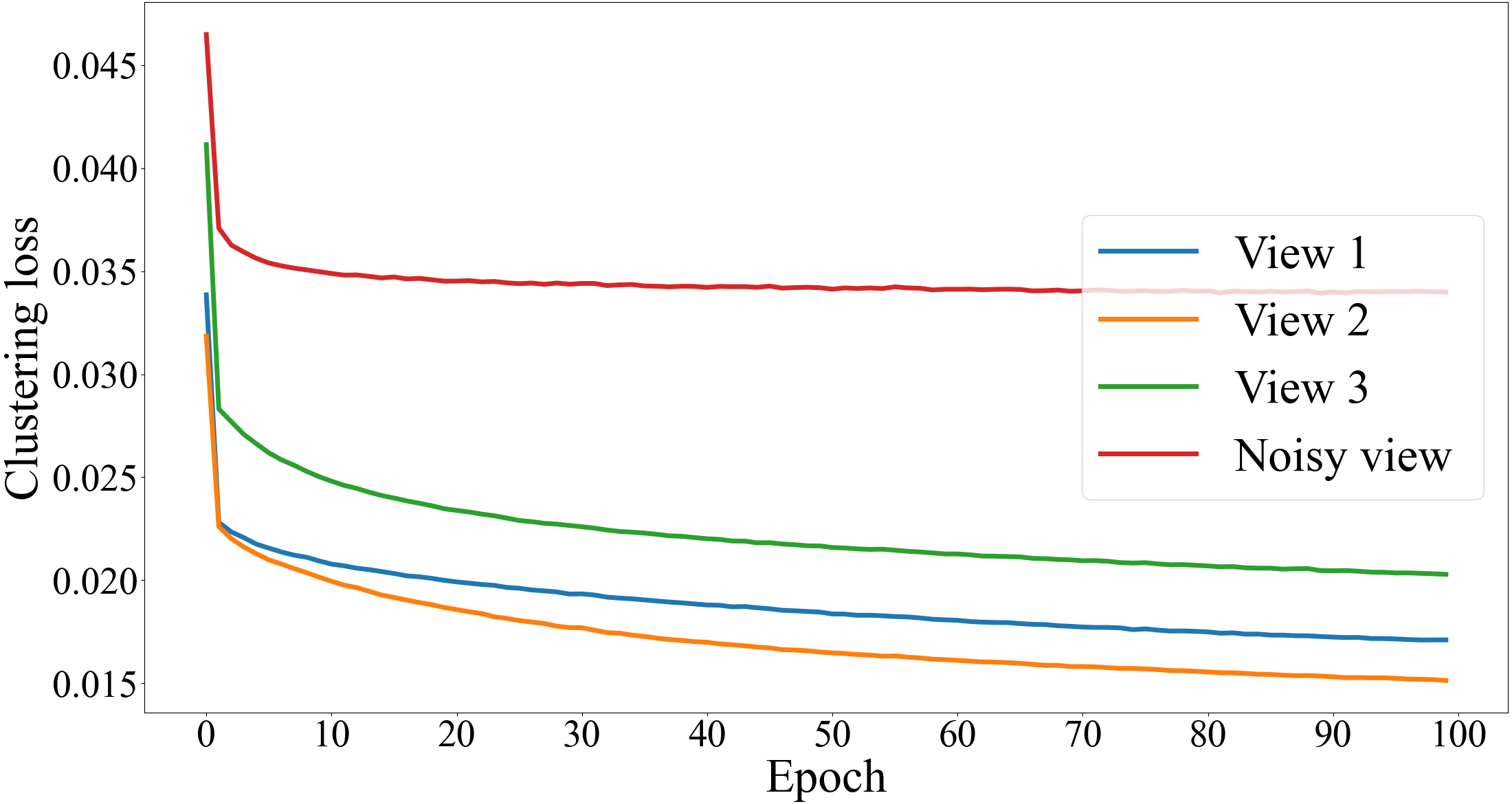}
    \caption{NoisyAmazon}
  \end{subfigure}
\caption{Loss $vs.$ Epoch on four normal datasets (a-d) and on four noise-simulated datasets (e-h).}\label{lossplot0}
\end{figure*}

\begin{figure*}[!ht]
\centering
  \begin{subfigure}{0.33\linewidth}
    \includegraphics[width=\linewidth]{ACC.png}
    \caption{ACC $vs.$ $\lambda$}
  \end{subfigure}
  \begin{subfigure}{0.33\linewidth}
    \includegraphics[width=\linewidth]{NMI.png}
    \caption{NMI $vs.$ $\lambda$}
  \end{subfigure}
  \begin{subfigure}{0.33\linewidth}
    \includegraphics[width=\linewidth]{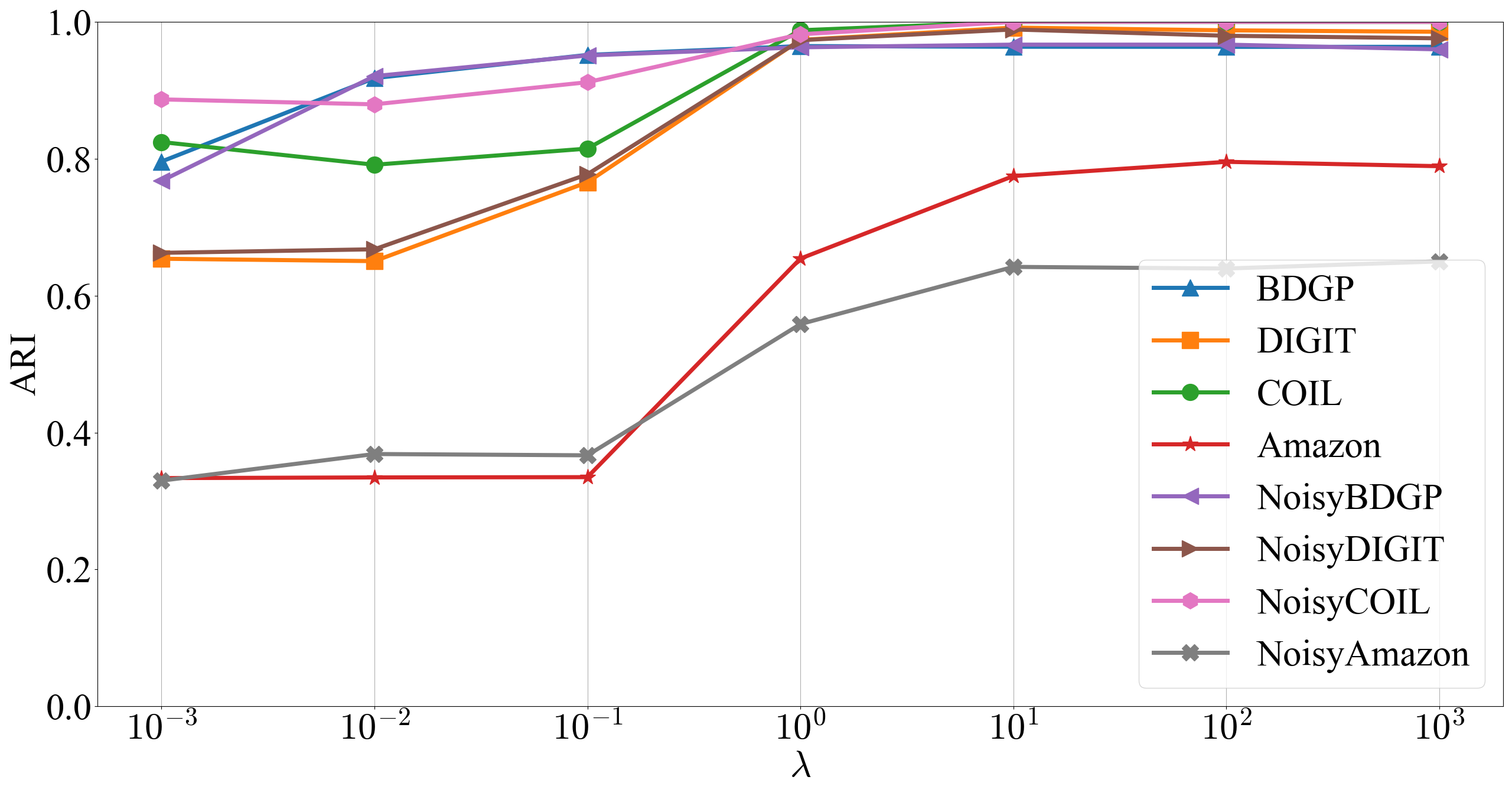}
    \caption{ARI $vs.$ $\lambda$}
  \end{subfigure}
\caption{Clustering effectiveness $vs.$ $\lambda$ on four normal datasets and on four noise-simulated datasets.}\label{trade0}
\end{figure*}

\begin{figure*}[!ht]
\centering
\begin{subfigure}{0.15\linewidth}
\includegraphics[width=\linewidth]{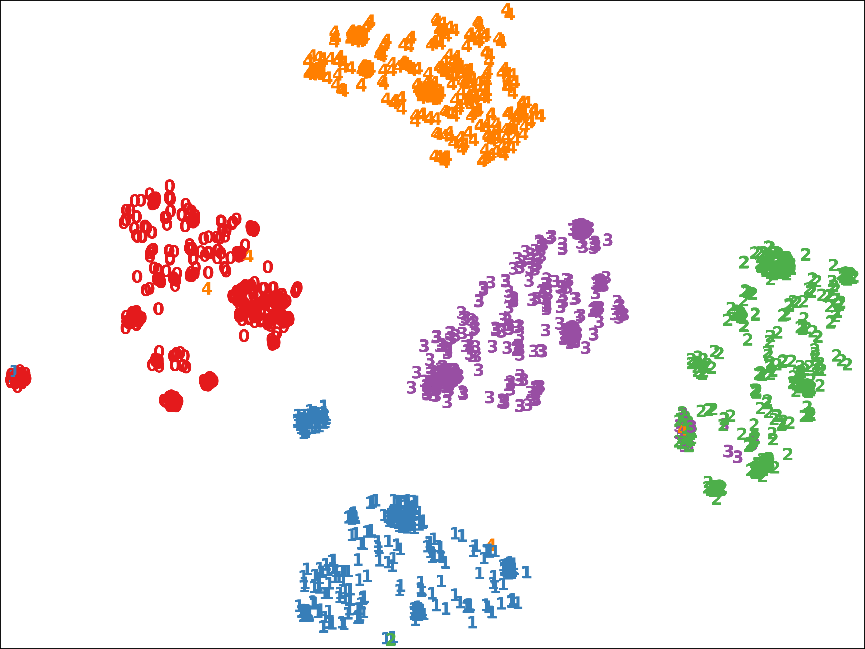}
\caption{$K=5$}
\end{subfigure}
\begin{subfigure}{0.15\linewidth}
\includegraphics[width=\linewidth]{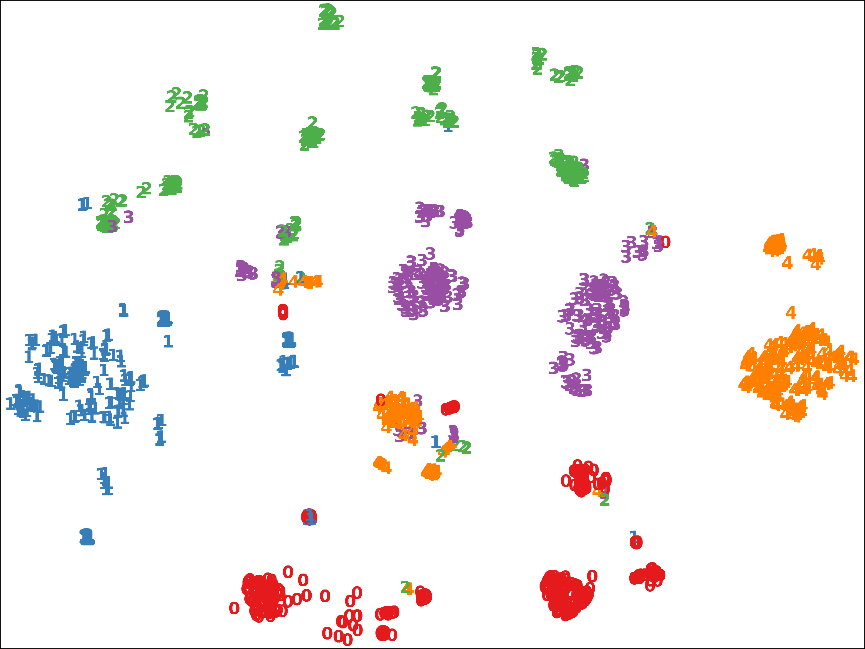}
\caption{$K=10$}
\end{subfigure}
\begin{subfigure}{0.15\linewidth}
\includegraphics[width=\linewidth]{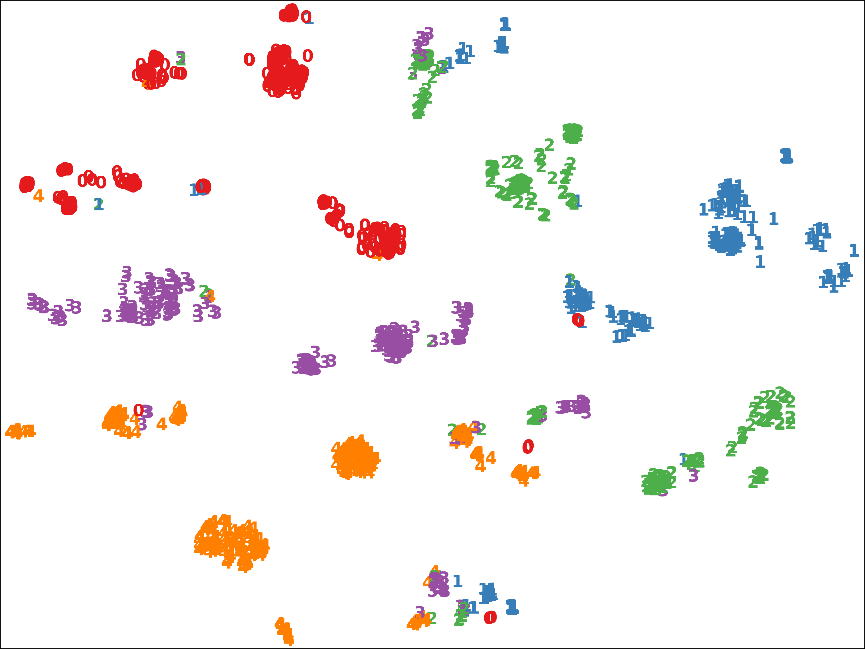}
\caption{$K=15$}
\end{subfigure}
\begin{subfigure}{0.15\linewidth}
\includegraphics[width=\linewidth]{DIGIT_K_5.png}
\caption{$K=5$}
\end{subfigure}
\begin{subfigure}{0.15\linewidth}
\includegraphics[width=\linewidth]{DIGIT_K_10.png}
\caption{$K=10$}
\end{subfigure}
\begin{subfigure}{0.15\linewidth}
\includegraphics[width=\linewidth]{DIGIT_K_15.png}
\caption{$K=15$}
\end{subfigure}
\begin{subfigure}{0.15\linewidth}
\includegraphics[width=\linewidth]{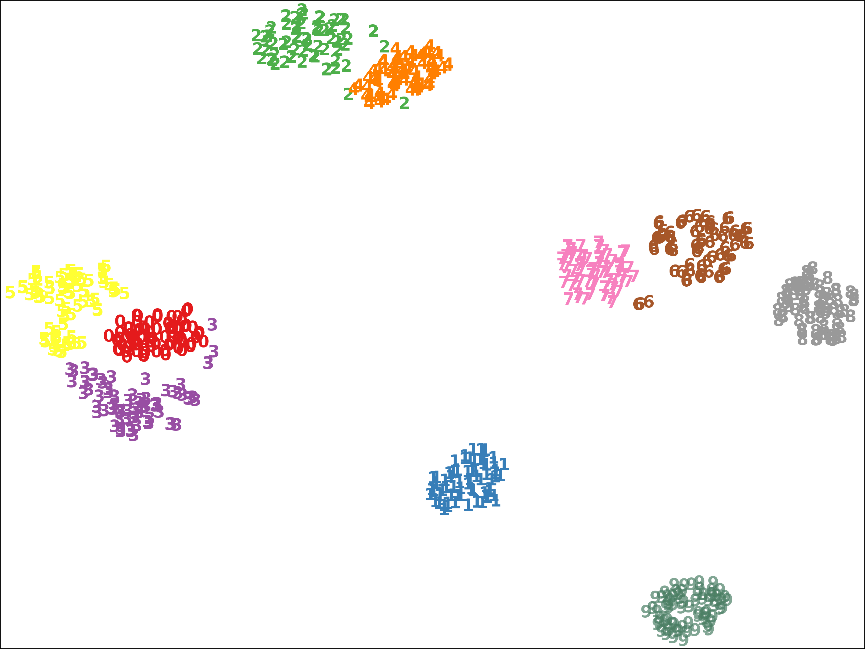}
\caption{$K=5$}
\end{subfigure}
\begin{subfigure}{0.15\linewidth}
\includegraphics[width=\linewidth]{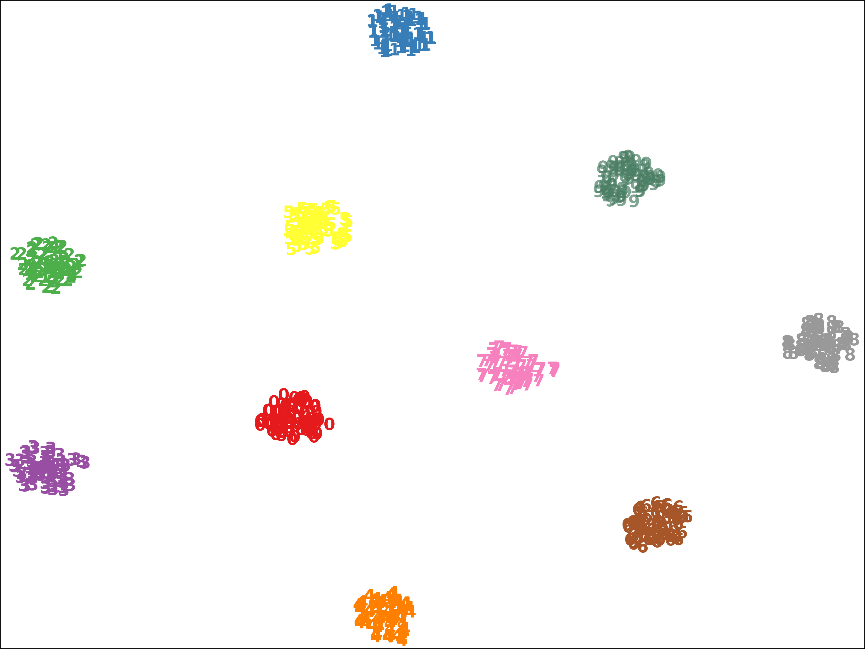}
\caption{$K=10$}
\end{subfigure}
\begin{subfigure}{0.15\linewidth}
\includegraphics[width=\linewidth]{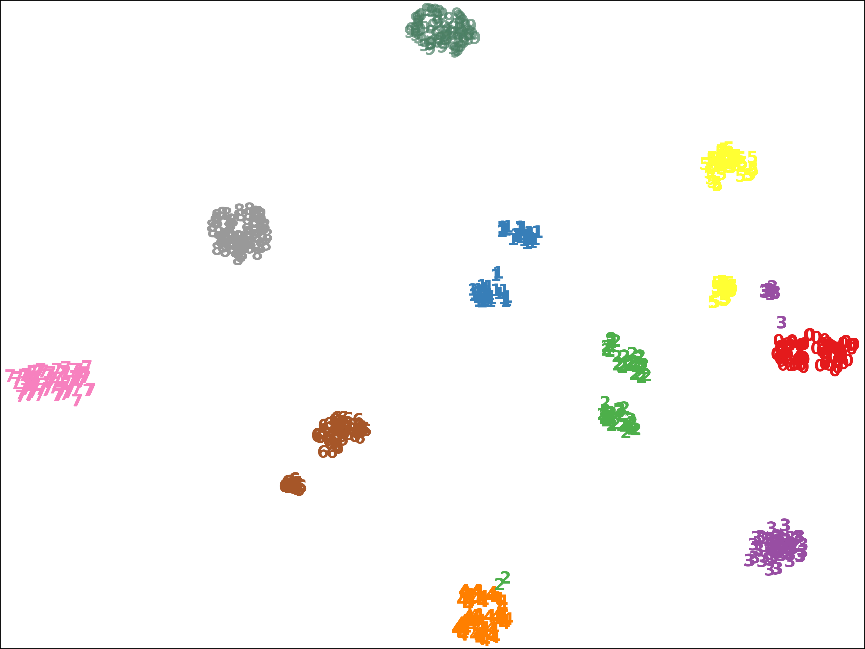}
\caption{$K=15$}
\end{subfigure}
\begin{subfigure}{0.15\linewidth}
\includegraphics[width=\linewidth]{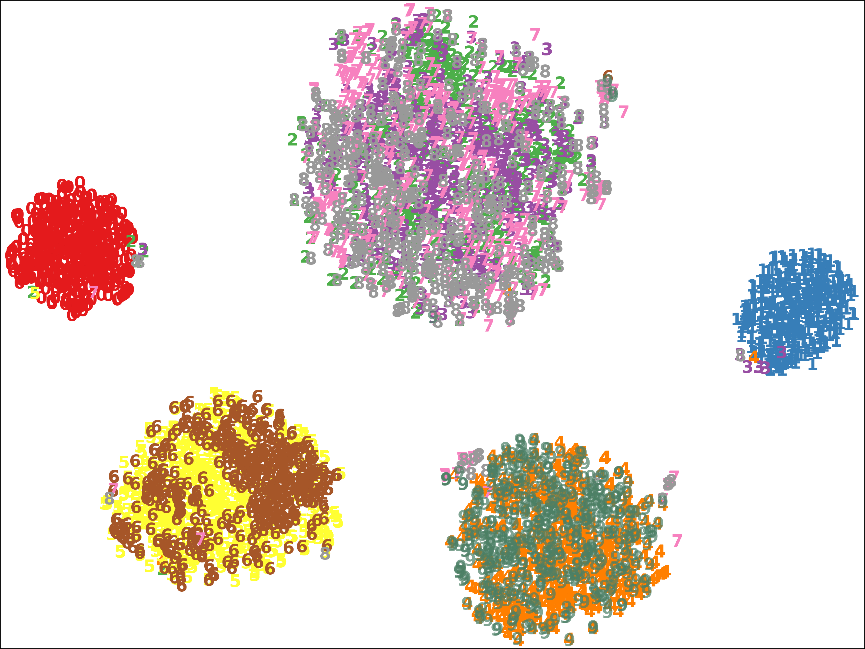}
\caption{$K=5$}
\end{subfigure}
\begin{subfigure}{0.15\linewidth}
\includegraphics[width=\linewidth]{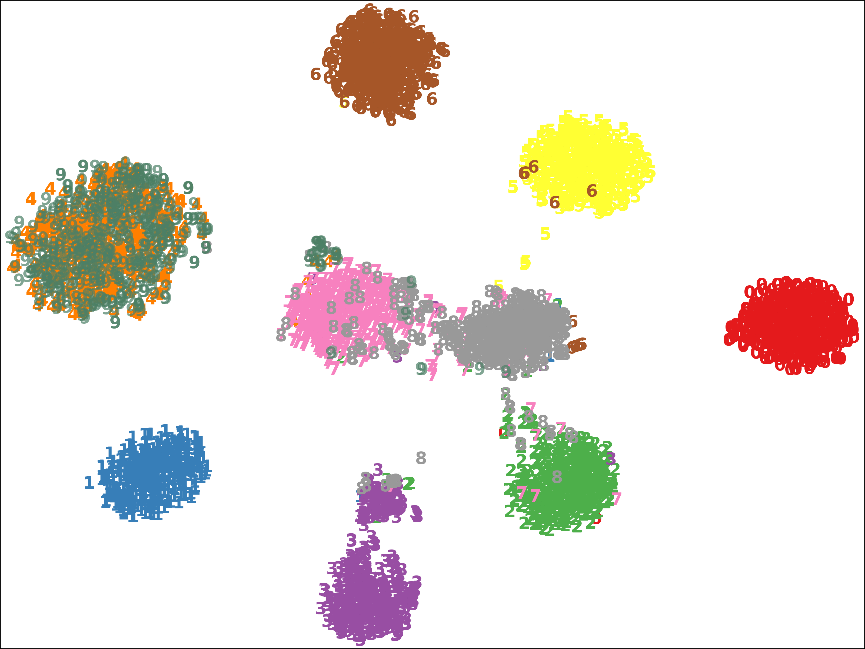}
\caption{$K=10$}
\end{subfigure}
\begin{subfigure}{0.15\linewidth}
\includegraphics[width=\linewidth]{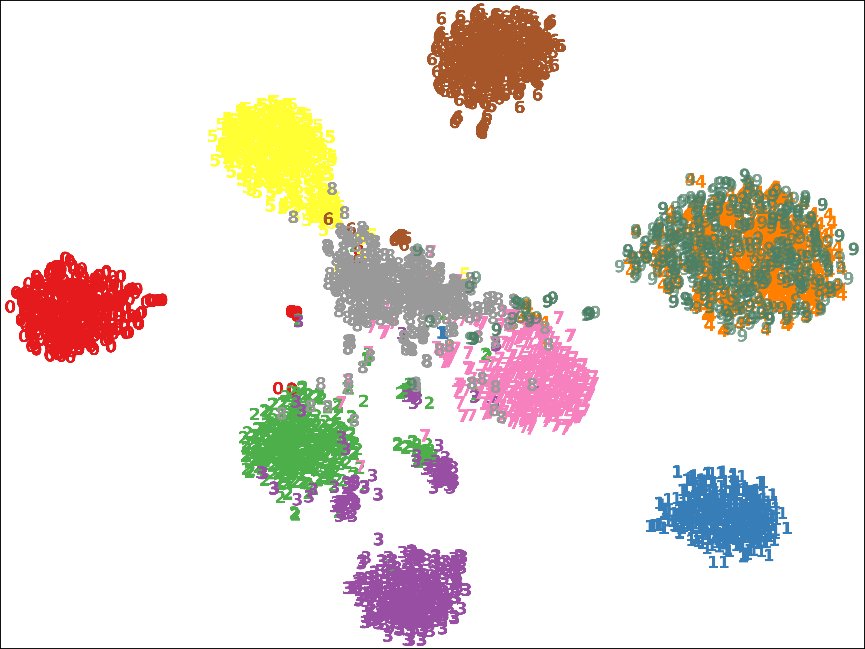}
\caption{$K=15$}
\end{subfigure}
\begin{subfigure}{0.15\linewidth}
\includegraphics[width=\linewidth]{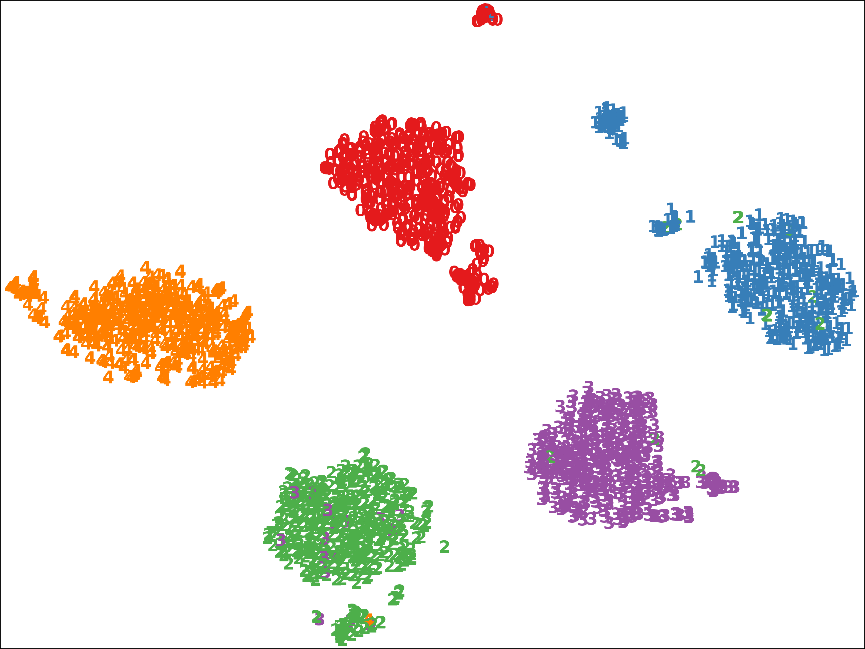}
\caption{$K=5$}
\end{subfigure}
\begin{subfigure}{0.15\linewidth}
\includegraphics[width=\linewidth]{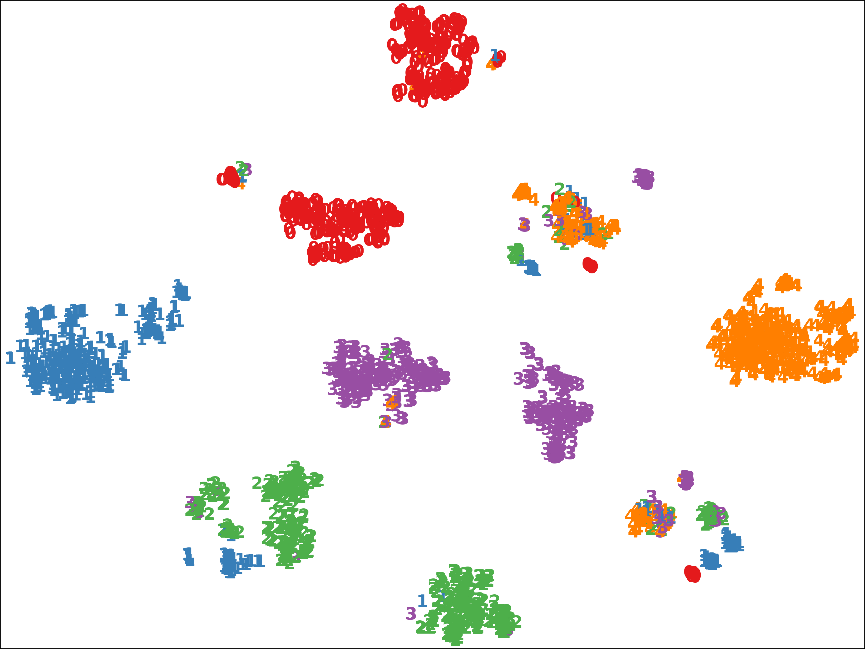}
\caption{$K=10$}
\end{subfigure}
\begin{subfigure}{0.15\linewidth}
\includegraphics[width=\linewidth]{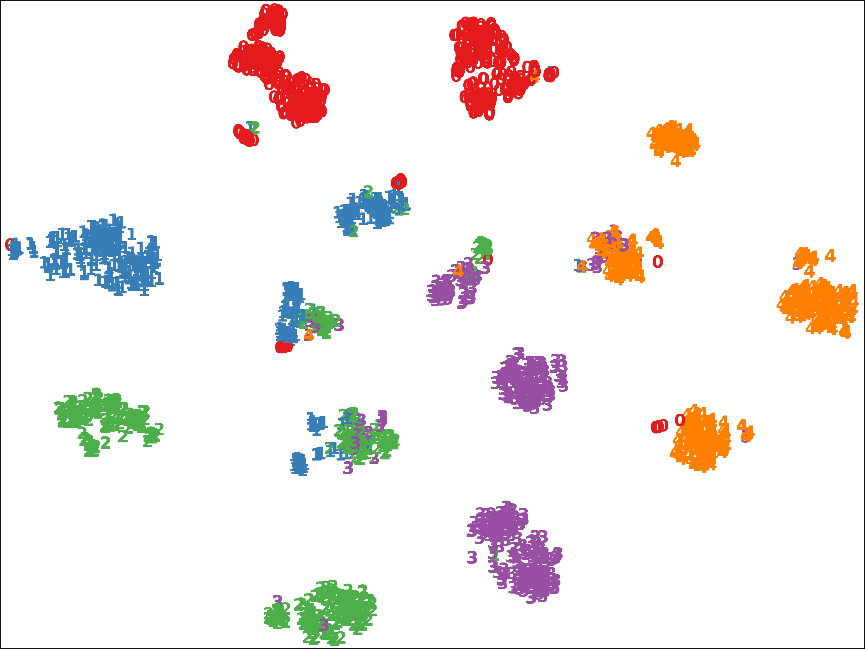}
\caption{$K=15$}
\end{subfigure}
\begin{subfigure}{0.15\linewidth}
\includegraphics[width=\linewidth]{DIGIT-N_K_5.png}
\caption{$K=5$}
\end{subfigure}
\begin{subfigure}{0.15\linewidth}
\includegraphics[width=\linewidth]{DIGIT-N_K_10.png}
\caption{$K=10$}
\end{subfigure}
\begin{subfigure}{0.15\linewidth}
\includegraphics[width=\linewidth]{DIGIT-N_K_15.png}
\caption{$K=15$}
\end{subfigure}
\begin{subfigure}{0.15\linewidth}
\includegraphics[width=\linewidth]{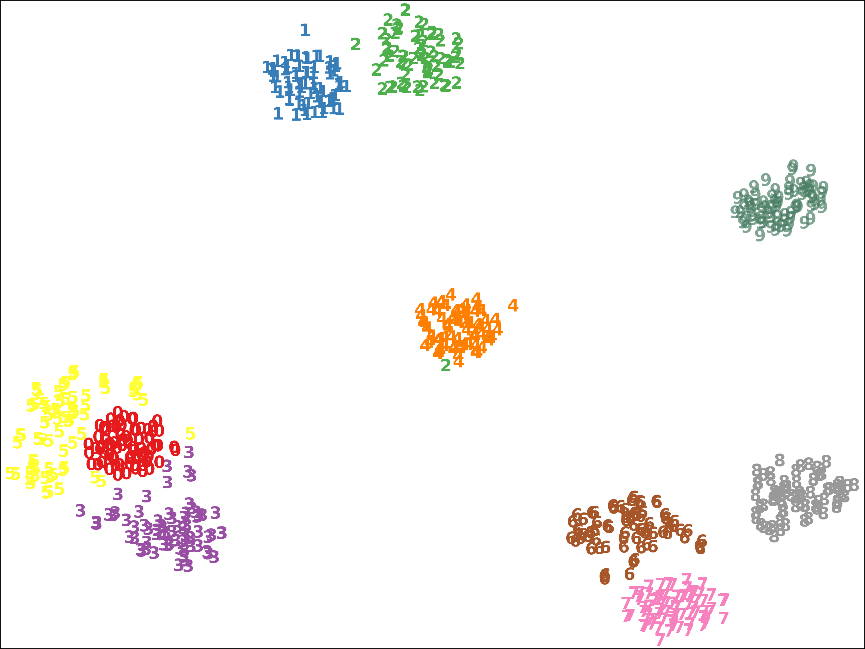}
\caption{$K=5$}
\end{subfigure}
\begin{subfigure}{0.15\linewidth}
\includegraphics[width=\linewidth]{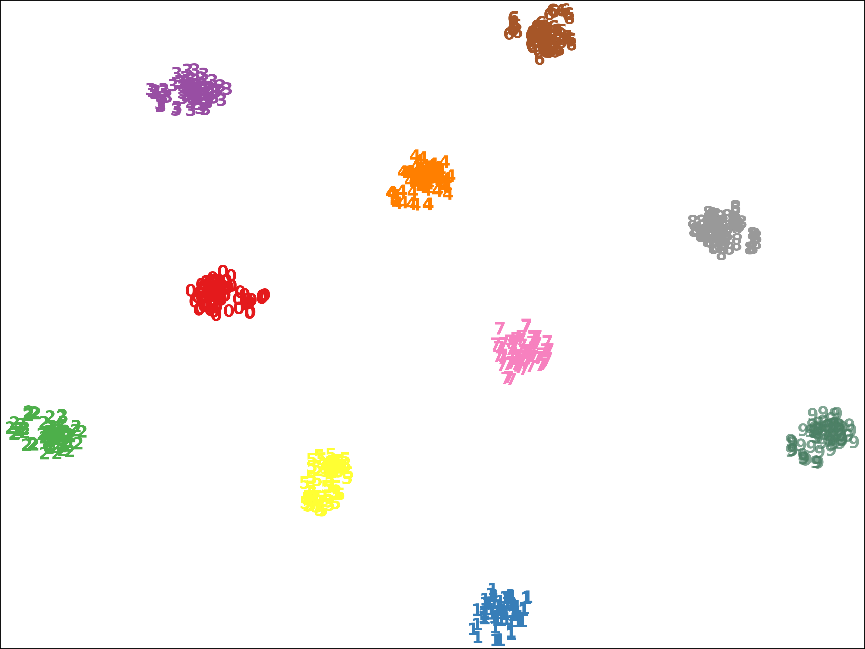}
\caption{$K=10$}
\end{subfigure}
\begin{subfigure}{0.15\linewidth}
\includegraphics[width=\linewidth]{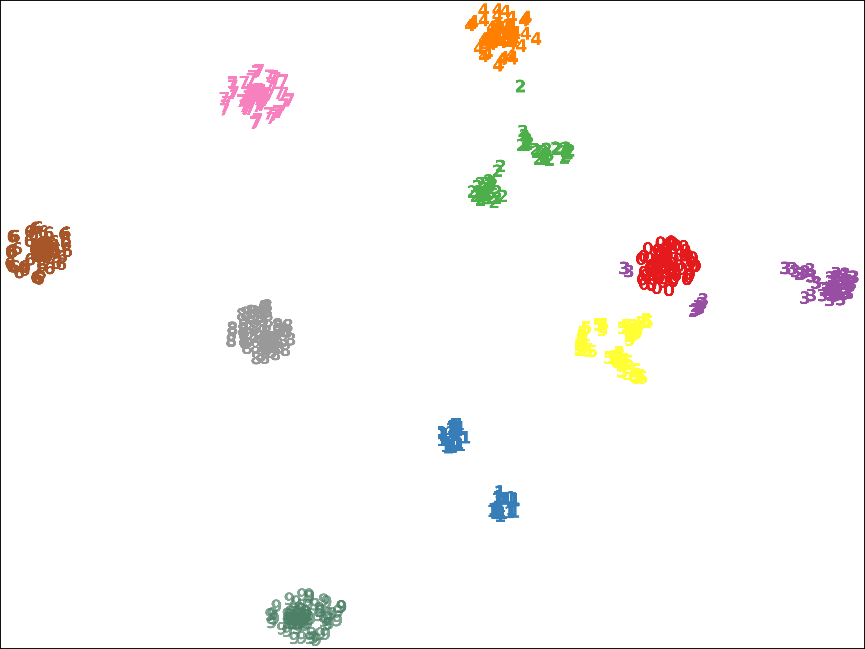}
\caption{$K=15$}
\end{subfigure}
\begin{subfigure}{0.15\linewidth}
\includegraphics[width=\linewidth]{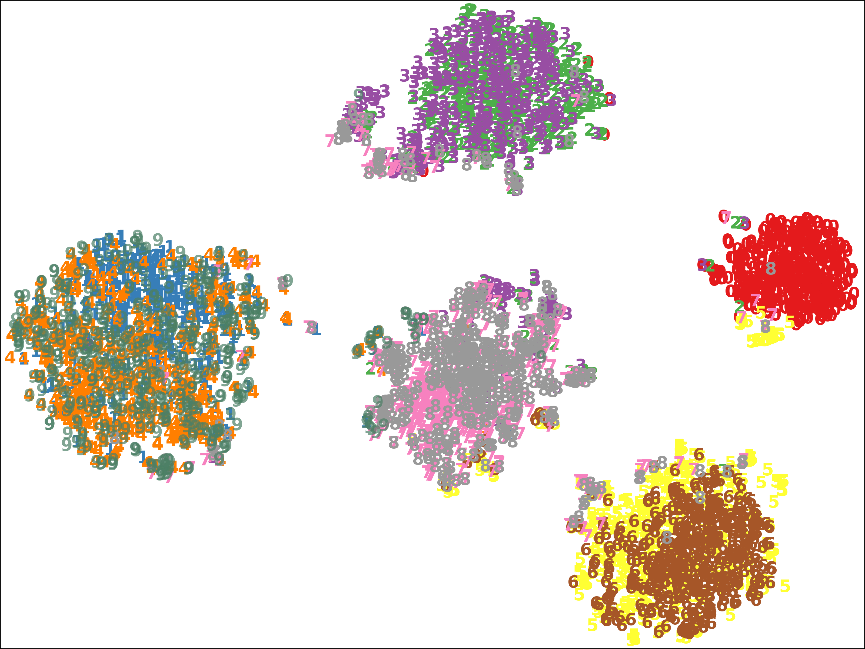}
\caption{$K=5$}
\end{subfigure}
\begin{subfigure}{0.15\linewidth}
\includegraphics[width=\linewidth]{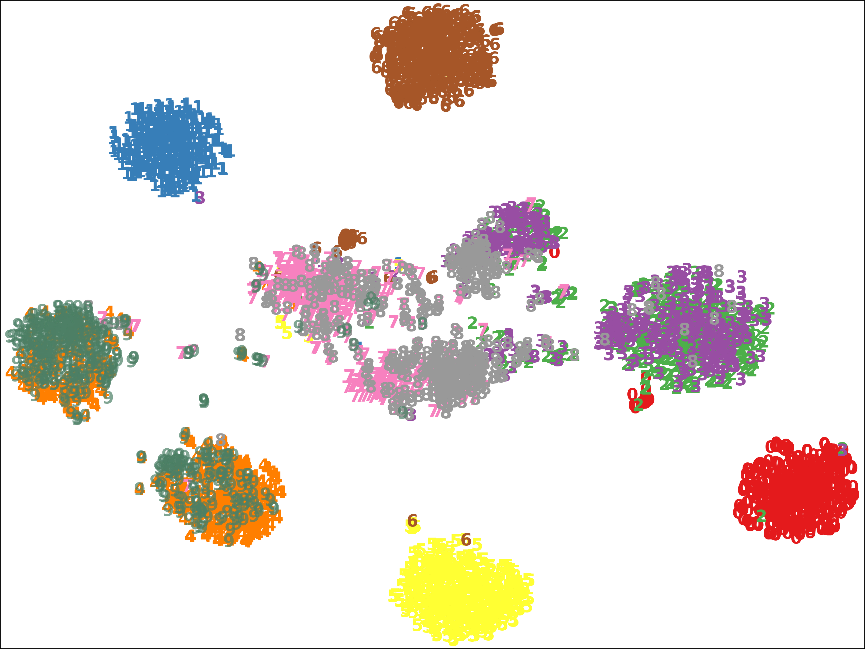}
\caption{$K=10$}
\end{subfigure}
\begin{subfigure}{0.15\linewidth}
\includegraphics[width=\linewidth]{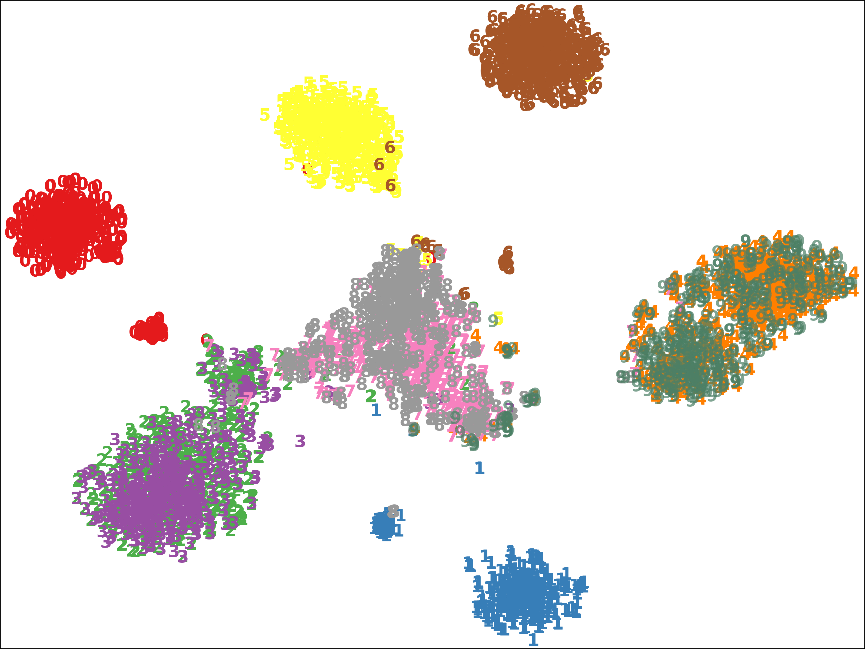}
\caption{$K=15$}
\end{subfigure}
\caption{Visualization of the representations learned with different prior knowledge of cluster numbers on four normal and four noise-simulated datasets. BDGP (a-c), DIGIT (d-f), COIL (g-i), Amazon (j-l), NoisyBDGP (m-o), NoisyDIGIT (p-r), NoisyCOIL (s-u), NoisyAmazon (v-x).}\label{tsne}
\end{figure*}

\fi

\end{document}